\documentclass[a4paper,10pt]{article}
\usepackage[english]{groreport}
\usepackage{graphicx}
\usepackage{amssymb}
\usepackage{amsmath}
\usepackage{mathrsfs}
\usepackage{mathabx} 
\usepackage{shuffle}
\usepackage{algorithm}
\usepackage{algorithmic}
\usepackage{booktabs} 
\usepackage{wrapfig}
\usepackage{listings} 
\usepackage{xcolor}
\usepackage{color}
\usepackage{comment}

\usepackage{fancyhdr}
\usepackage[latin1]{inputenc} 
\usepackage{placeins} 
\usepackage{float}
\usepackage{floatflt}
\usepackage{subfloat}
\usepackage{theorem}
\usepackage{relsize}

\usepackage{caption}
\usepackage{subcaption}
\usepackage{epstopdf}
\usepackage{QFHeader}

\usepackage{bbm}

\usepackage{pgfplots}
\usepackage{tikz}
\usepackage{tikzscale}
\usetikzlibrary{shapes.multipart}
\usetikzlibrary{shapes.geometric, arrows, calc, automata, petri, shapes, decorations.pathmorphing, decorations.pathreplacing, positioning, fit, backgrounds}

\newtheorem{definition}{\textbf{Definition}}[section]
 
\newtheorem{theorem}{\textbf{Theorem}}[section]

\newtheorem{proposition}{\textsl{\textbf{Proposition}}}
 
\newtheorem{lemma}{\textbf{Lemma}}[section]
 
\newtheorem{remark}{\textsl{\textbf{Remark}}}[section]

\newtheorem{proof}{\textbf{proof}}[section]

\newcommand{\be}{\begin{equation}}
\newcommand{\en}{\end{equation}}
\newcommand{\beq}{\begin{equation}}
\newcommand{\eeq}{\end{equation}}
\newcommand{\bes}{\begin{eqnarray*}}
\newcommand{\ens}{\end{eqnarray*}}
\newcommand{\beqa}{\begin{eqnarray*}}
\newcommand{\eeqa}{\end{eqnarray*}}
\newcommand{\bea}{\begin{eqnarray}}
\newcommand{\ena}{\end{eqnarray}}

\newcommand{\xuparrow}[1]{%
  {\left\uparrow\vbox to #1{}\right.\kern-\nulldelimiterspace}
}

\newcommand{\xdownarrow}[1]{%
  {\left\downarrow\vbox to #1{}\right.\kern-\nulldelimiterspace}
}

\DeclareMathAlphabet{\mathsfit}{\encodingdefault}{\sfdefault}{m}{sl}
\SetMathAlphabet{\mathsfit}{bold}{\encodingdefault}{\sfdefault}{bx}{n}

\title{\textbf{Anticipatory Reinforcement Learning: From Generative Path-Laws to Distributional Value Functions} 
}

\author{Daniel \textsc{Bloch}  
\footnote{Visting Professor at the College of Engineering and Computer Science, VinUniversity, Hanoi.} \\
\textsc{University of Paris 6 \& VinUniversity} \\
Working Paper \\
\footnote{All mistakes are ours.} 
}

\date{6th of March 2026 \\
Version : 1.0.0}

\begin{document}
\QFHeader{\textbf{Anticipatory Reinforcement Learning: From Generative Path-Laws to Distributional Value Functions}}{Daniel Bloch}{6th of March 2026}
\maketitle

\begin{abstract}
This paper introduces Anticipatory Reinforcement Learning (ARL), a novel framework designed to bridge the gap between non-Markovian decision processes and classical reinforcement learning architectures, specifically under the constraint of a single observed trajectory. In environments characterised by jump-diffusions and structural breaks, traditional state-based methods often fail to capture the essential path-dependent geometry required for accurate foresight. We resolve this by lifting the state space into a signature-augmented manifold, where the history of the process is embedded as a dynamical coordinate. By utilising a self-consistent field approach, the agent maintains an anticipated proxy of the future path-law, allowing for a deterministic evaluation of expected returns. This transition from stochastic branching to a single-pass linear evaluation significantly reduces computational complexity and variance. We prove that this framework preserves fundamental contraction properties and ensures stable generalisation even in the presence of heavy-tailed noise. Our results demonstrate that by grounding reinforcement learning in the topological features of path-space, agents can achieve proactive risk management and superior policy stability in highly volatile, continuous-time environments.
\end{abstract}

\medskip
\noindent\textbf{Keywords:} Anticipatory Reinforcement Learning, Signature Manifold, Marcus-Signature, Self-Consistent Field (SCF), Neural Controlled Differential Equations (CDEs), Non-Markovian Decision Processes, Path-Law Proxy, Distributional RL, Spectral Whitening, Jump-Diffusion Processes.

\section{Introduction}
\label{sec_introduction}

\subsection{High-level goal}

The primary objective of this work is to resolve the fundamental tension between the non-Markovian nature of complex real-world environments and the Markovian requirements of Reinforcement Learning (RL) architectures, specifically under the stringent restriction of a single observed trajectory. We introduce the \textbf{Anticipatory Reinforcement Learning (ARL)} framework, which moves beyond simple history-dependence by lifting the state space into a signature-augmented manifold. Our goal is to enable an agent to perform "Single-Pass" policy evaluation, where the expected future return is not computed through computationally expensive Monte Carlo branching, but through a deterministic linear evaluation of an anticipated path-law proxy. By leveraging the \textbf{Marcus-signature} and the \textbf{Self-Consistent Field (SCF)} equilibrium, we aim to provide a theoretically rigorous and computationally efficient bridge between path-dependent stochastic analysis and deep anticipatory control.

\subsection{Motivation and literature positioning}

Traditional Reinforcement Learning (RL) (Sutton et al. ~\cite{SuttonEtAl17}, Lapan ~\cite{Lapan20}) typically relies on the Markov property, assuming the current observation $\mathbf{X}_t$ provides a sufficient statistic for all future transitions. However, in high-frequency finance and physical systems with memory, the environment is inherently non-Markovian, rendering the instantaneous state uninformative. Current approaches generally attempt to "Markovianise" the system via two proxy paths: (i) \textbf{Memory-based architectures}, such as LSTMs (Hausknecht et al. ~\cite{HausknechtEtAl15}) or Transformers (Parisotto et al. ~\cite{ParisottoEtAl19}, Chen et al. ~\cite{ChenEtAl21}), which seek to compress the filtration $\mathcal{A}_t$ into a latent heuristic vector, or (ii) \textbf{History-augmentation}, which approximates sufficiency by concatenating observations into a finite-windowed state (Mnih et al. ~\cite{MnihEtAl15}). While practically useful, these methods treat the symptoms of memory rather than its underlying geometry; they lack the analytic rigour to handle the "roughness" of continuous-time paths and inevitably succumb to the curse of dimensionality as the required look-back horizon expands.

\medskip
\noindent
This work positions itself at the intersection of \textbf{Rough Path Theory} (Lyons et al. ~\cite{LyonsEtAl07,LyonsEtAl11,LyonsEtAl22}, Chevyrev et al. ~\cite{ChevyrevEtAl16}) and \textbf{Distributional RL} (Bellemare et al. ~\cite{BellemareEtAl23}). While existing literature (Levin et al. ~\cite{LevinEtAl13}, Bonnier et al. ~\cite{BonnierEtAl19}, Fermanian et al. ~\cite{Fermanian21}, Lyons et al. ~\cite{LyonsEtAl22}) predominantly treats the path-signature as a static feature extraction layer or a mathematical descriptor, the ARL framework operationalises the signature as a live, dynamical coordinate on a manifold within the reinforcement learning loop. By leveraging recent advancements in \textit{c\`adl\`ag} path functionals (Cuchiero et al. ~\cite{CuchieroEtAl25}) and the Marcus-integral interpretation of jump-diffusions (Friz et al. ~\cite{FrizEtAl17,FrizEtAl18}), we extend the reach of RL to environments where discrete shocks and heavy-tailed noise are prevalent. Furthermore, we draw inspiration from \textbf{Generative World Models} but replace standard pixel-based or latent-vector "dreams" with a \textbf{Self-Consistent Field (SCF)} of path-law proxies. This shift enables a deterministic "Single-Pass" evaluation of the future, positioning ARL as a mathematically grounded alternative to computationally intensive Monte Carlo Tree Search (MCTS) and branching-path heuristics.

\subsection{Main contributions}

The technical novelties of this work, centering on the transition from historical filtering to future-oriented manifold control, are summarised in the following primary contributions:

\begin{itemize}
    \item \textbf{Anticipatory Reinforcement Learning (ARL) Framework:} We propose a unified architecture that lifts the RL problem into a signature-augmented state space. This framework treats the path-law as a dynamic object, allowing the agent to reason over the geometry of entire trajectory distributions rather than instantaneous state-action pairs.
    
    \item \textbf{"Single-Pass" Policy Evaluation:} We introduce a mechanism for $O(1)$ value estimation, relative to the number of sample paths, that bypasses the need for high-variance Monte Carlo branching. By evaluating the value function directly on the anticipated signature proxy, the agent achieves the foresight of tree-search methods with the computational efficiency of a standard feed-forward pass.
    
    \item \textbf{Marcus-compliant Latent CDEs:} We develop a generative engine based on Neural Controlled Differential Equations (CDEs) integrated in the Marcus sense. This ensures the latent propagation of the path-proxy correctly interprets discrete jumps as coordinate shifts on the signature manifold, providing a rigorous treatment of \textit{c\`adl\`ag} environment dynamics.
    
    \item \textbf{Self-Consistent Field (SCF) Equilibrium:} We formulate a novel synchronisation protocol that enforces consistency between the deterministic proxy and the stochastic ensemble it represents. The SCF constraint ensures that the "imagined" future remains a mathematically valid stationary point of the underlying generative flow.
    
    \item \textbf{The Anticipatory TD-Error ($\delta_t^A$):} We derive an augmented temporal difference operator that penalises discrepancies between the historical baseline and the reward realised along the generative drift. This error signal backpropagates through the signature manifold, aligning the agent's value expectations with the topological evolution of the anticipated path.
\end{itemize}

\subsection{Organisation of the paper}

The remainder of the paper is organised as follows. Section (\ref{sec:math_foundations}) establishes the formal integration of path-dependent latent propagation and Neural Jump-Diffusions within a distributional reinforcement learning framework to enable law-conditioned value updates on the signature manifold.
Section (\ref{sec:geometry_anticipation}) reframes the classical temporal difference (TD) paradigm within a non-Markovian context, identifying the statistical sampling bottlenecks of standard RL and resolving them by lifting path-dependent rewards into a signature-linear representation that facilitates deterministic, law-conditioned updates via self-consistent field (SCF) dynamics.
Section (\ref{sec:filtered_value_function}) addresses the non-Markovian nature of observational filtrations by embedding path history into a signature-augmented state space $\mathcal{S}_{sig}$, effectively restoring the Markov property and reducing the value function to a linear functional on the signature manifold.
Section (\ref{sec:arl_framework}) formalises the "Markovianisation" of non-Markovian filtrations by constructing a signature-augmented state space $\mathcal{S}_{sig}$, which serves as a sufficient statistic and reduces the path-dependent value function to a stationary linear functional on the signature manifold.
Section (\ref{sec:distributional_dynamics}) formalises the evolution of the path-law proxy through Self-Consistent Field (SCF) dynamics, demonstrating how the signature manifold reduces path-dependent reward evaluation and risk-adjusted policy gradients to deterministic linear operations.
Section (\ref{sec:learning_temporal_horizon_consistency}) operationalises the anticipatory value function by leveraging the algebraic properties of the signature group to achieve horizon-consistent learning. We introduce the \textit{Anticipatory TD-Update}, a deterministic gradient descent protocol that utilises Chen's Identity to maintain a time-invariant weight vector across the entire forecast, while ensuring stochastic grounding through the Self-Consistent Field (SCF) equilibrium.
Section (\ref{sec:differentiable_anticipatory_control}) establishes the transition from value estimation to active risk management. By deriving analytical \textit{Signature Greeks} through the differentiable flow of the path-law proxy, we demonstrate how the agent performs real-time policy rectification and stress-testing against anticipated manifold deformations, enabling proactive avoidance of structural instabilities without the need for nested simulations.
Section (\ref{sec:convergence_stability_analysis}) establishes the theoretical guarantees for the ARL framework, proving that signature-augmented Bellman operators maintain contraction properties and verifying that spectral whitening via the AVNSG metric ensures stable generalisation even under heavy-tailed stochastic regimes.
Section (\ref{sec:implementation_details}) details the architectural synthesis of Nystr\"om-compressed signature layers and Neural CDEs, outlining a synchronised training protocol that enforces Self-Consistent Field (SCF) equilibrium through joint optimisation of boundary alignment, anticipatory temporal differences, and stationary point constraints. Finally, Section (\ref{sec:conclusion}) concludes with a discussion on the broader implications for high-frequency control and future research directions.

\subsection{Notation and truncation}

Throughout this work, we present theoretical results, including Chen's identity and the Junction Coupling, within the context of the full, infinite-dimensional tensor algebra $T((V))$ to maintain mathematical elegance and exactness. However, for the realisation of our computational model, all processes are projected onto the $r$-degree truncated tensor algebra $\mathcal{T}^{(r)}(V)$ via the projection operator $\pi_r$. This truncation is a necessary "implementation-case" rigour to ensure that the Marcus flow and its associated gradients remain on a finite-dimensional manifold and do not diverge into higher-order terms beyond the model's representational capacity.

\section{Mathematical foundations}
\label{sec:math_foundations}

\subsection{Preliminaries: Recursive filtering and latent propagation}

We operate on a complete probability space $(\Omega, \mathcal{F}, \mathbb{P})$ supporting a $d$-dimensional semimartingale $X$. Let $\mathbb{A} = (\mathcal{A}_t)_{t \ge 0}$ be the \textbf{observational filtration} $\mathcal{A}_t = \sigma(\{ (t_i, X_{t_i}, M_{t_i}) : t_i \le t \})$, where $M$ denotes exogenous masking indicators. To ensure the observed history is a continuous process of bounded variation, we utilise the \textbf{rectilinear interpolation} $\tilde{X}^{\le t}$, hereafter denoted simply as $X_t$. 

\medskip
\noindent
Following the construction in Bloch \cite{Bloch26a,Bloch26b,Bloch26c}, we embed the temporal evolution directly into the path geometry by defining the \textbf{time-extended path} $\mathbf{X}_s := (s, X_s) \in \mathbb{R}^{d+1}$. This augmentation ensures a universal and injective representation of the path dynamics. The non-Markovian state is then characterised by a \textit{conditional path-law proxy} $\Phi_{t|\mathcal{A}_t} \in \mathcal{H}_{sig}$, defined as the expected signature of the time-extended process:
\begin{equation}
    \Phi_{t|\mathcal{A}_t} := \mathbb{E} \left[ S(\mathbf{X})_{t-\Delta, t} \mid \mathcal{A}_t \right],
\end{equation}
where $S(\mathbf{X})_{t-\Delta, t}$ denotes the Marcus-sense signature of the augmented path over the look-back window $[t-\Delta, t]$.

\begin{definition}[Filtered Proxy and Jump-Flow Latent Propagation]
The filtered proxy $\Phi_{t|\mathcal{A}_t}$ is recovered from a latent state $Z_t \in \mathcal{Z}$ via a tensorial readout map $\mathcal{P}_\theta$. The latent state $Z_t$ is a hidden controller governed by a Jump-Flow Controlled Differential Equation (CDE):
\begin{equation}
    \Phi_{t|\mathcal{A}_t} = \mathcal{P}_\theta(Z_t), \quad dZ_t = f_\theta(Z_{t-}, \pi_r(X)) dt + (\rho_\theta(Z_{t-}, X_t, M_t) - Z_{t-}) dN_t,
\end{equation}
where $f_\theta$ is the continuous flow vector field, $\rho_\theta$ is the discrete rectification operator triggered by a counting process $N_t$ of observations, and $\pi_r(X)$ is the truncated signature of the path history.
\end{definition}

\medskip
\noindent
For predictive synthesis over a future horizon $s \in [t, t+\tau]$, we extrapolate the latent geometry. In the absence of new observations ($\Delta N_u = 0$ for $u > t$), the estimator anticipates the evolution of the signature manifold through the following construction.

\begin{definition}[Anticipatory Latent Propagation]
Given $\mathcal{A}_t$ and a forward path extension $\hat{X}_{t:s}$, the \textbf{time-evolving path-law proxy} $\hat{\Phi}_{s|t}$ is defined via the push-forward of the extrapolated latent state $Z_s$:
\begin{equation}
    \hat{\Phi}_{s|t} = \mathcal{P}_\theta(Z_s), \quad Z_s = Z_t + \int_t^s F_\theta(Z_u, \hat{\Phi}_{u|t}) \, d\hat{X}_u
\end{equation}
where $F_\theta$ is the operator-valued generator of the Neural CDE drift for $s \in [t, t+\tau]$, and $\hat{X}$ is a learned generative drift approximating the conditional mean of the process.
\end{definition}

\subsection{The anticipatory neural jump-diffusion (ANJD) process}

The synthesis of future trajectories is governed by a time-augmented path-dependent Jump-SDE, regularised by the clock $s$, the path-law proxy $\hat{\Phi}_{s|t}$, and the adaptive geometry $\mathcal{Q}_s$.

\begin{definition}[Anticipatory Generative Flow]
The generative path $X_s$ for $s \in [t, t+\tau]$ is the unique c\`adl\`ag solution to:
\begin{equation} \label{eq:ANJD}
    dX_s = f_\theta(s, X_s, \hat{\Phi}_{s|t}) \, ds + g_\theta(s, X_s, \hat{\Phi}_{s|t}) \, \diamond dW_s + h_\theta(s, X_{s-}, \hat{\Phi}_{s|t}) \, dN_s
\end{equation}
interpreted in the Marcus sense. The flow is characterised by \textbf{$C^1$-Boundary Consistency} at $s=t$ and \textbf{Polynomial Tractability}, where the coupling to the signature proxy $\hat{\Phi}_{s|t}$ ensures a universal representation of c\`adl\`ag path functionals and structural breaks.
\end{definition}

\subsection{Foundations of reinforcement learning and value functions}

We frame the decision-making problem as a Markov Decision Process (MDP) defined by the tuple $(\mathcal{S}, \mathcal{A}, P, R, \gamma)$, where $\mathcal{S}$ is the state space, $\mathcal{A}$ is the action space, $P: \mathcal{S} \times \mathcal{A} \to \Delta(\mathcal{S})$ is the transition kernel, $R: \mathcal{S} \times \mathcal{A} \to \mathbb{R}$ is the reward function, and $\gamma \in [0,1)$ is the discount factor. See Sutton et al. ~\cite{SuttonEtAl17}, Lapan ~\cite{Lapan20}.

\begin{definition}[Policy and Expected Return]
A stationary policy $\pi: \mathcal{S} \to \Delta(\mathcal{A})$ maps states to distributions over actions. The \textit{expected return} $G_t$ at time $t$ is the discounted sum of future rewards:
\begin{equation}
    G_t = \sum_{k=0}^{\infty} \gamma^k R_{t+k+1}.
\end{equation}
\end{definition}

\begin{definition}[State-Value and Action-Value Functions]
The state-value function $v_\pi(s)$ and action-value function $q_\pi(s, a)$ represent the expected return under policy $\pi$:
\begin{align}
    v_\pi(s) &= \mathbb{E}_\pi [G_t \mid S_t = s], \\
    q_\pi(s, a) &= \mathbb{E}_\pi [G_t \mid S_t = s, A_t = a].
\end{align}
These functions satisfy the \textbf{Bellman expectation equation}: $q_\pi(s, a) = \mathbb{E}_\pi[R_{t+1} + \gamma v_\pi(S_{t+1}) \mid S_t=s, A_t=a]$.
\end{definition}

\subsection{Distributional reinforcement learning}

In the distributional setting (Bellemare et al. \cite{BellemareEtAl23}), the scalar value $q_\pi(s, a)$ is replaced by the full probability distribution of the random return.

\begin{definition}[Value Distribution]
The \textit{value distribution} $Z^\pi(s, a)$ is a random variable whose distribution $\eta^\pi(s, a) \in \mathcal{P}(\mathbb{R})$ characterises the returns obtained under policy $\pi$ starting from $(s, a)$. It is governed by the \textbf{Distributional Bellman Equation}:
\begin{equation}
    Z^\pi(s, a) \stackrel{D}{=} R(s, a) + \gamma Z^\pi(S', A'),
\end{equation}
where $S' \sim P(\cdot \mid s, a)$ and $A' \sim \pi(\cdot \mid S')$.
\end{definition}

\begin{definition}[Distributional Bellman Operator]
Let $\mathscr{Z}$ be the space of action-value distributions. The operator $\mathcal{T}^\pi: \mathscr{Z} \to \mathscr{Z}$ is defined by:
\begin{equation}
    (\mathcal{T}^\pi Z)(s, a) \stackrel{D}{=} R(s, a) + \gamma Z(S', A').
\end{equation}
In the space of probability measures, this is expressed via the push-forward mapping:
\begin{equation}
    \mathcal{T}^\pi \eta(s, a) = \mathbb{E}_{P, \pi} [ (f_{r, \gamma})_\# \eta(s', a') ],
\end{equation}
where $f_{r, \gamma}(z) = r + \gamma z$.
\end{definition}

\begin{lemma}[$w_p$-Contraction]
The distributional Bellman operator $\mathcal{T}^\pi$ is a contraction mapping in the $p$-Wasserstein distance $w_p$ for $p \ge 1$ over the space of measures with bounded $p$-th moments.
\end{lemma}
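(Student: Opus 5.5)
The plan is the standard coupling argument of Bellemare et al.\ \cite{BellemareEtAl23}, carried out in the supremum-$p$-Wasserstein metric
\begin{equation*}
\bar w_p(\eta_1,\eta_2) := \sup_{(s,a)} w_p\bigl(\eta_1(s,a),\eta_2(s,a)\bigr).
\end{equation*}
The target is $\bar w_p(\mathcal{T}^\pi\eta_1,\mathcal{T}^\pi\eta_2)\le \gamma\,\bar w_p(\eta_1,\eta_2)$.

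First we check that $\mathcal{T}^\pi$ maps the space of measures with uniformly bounded $p$-th moments into itself. This needs the mild assumption that $R$ is bounded, or has bounded $p$-th moment, uniformly in $(s,a)$. Minkowski's inequality gives $\|R+\gamma Z\|_p\le \|R\|_p+\gamma\|Z\|_p$, so the image again has bounded $p$-th moments. This also makes $\bar w_p$ finite on the space.

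Second, we prove the contraction through an explicit coupling. Fix $(s,a)$ and $\varepsilon>0$. For every pair $(s',a')$, let $(Z_1(s',a'),Z_2(s',a'))$ be an optimal, or $\varepsilon$-optimal, coupling of $\eta_1(s',a')$ and $\eta_2(s',a')$. Optimal couplings exist for $p\ge1$ on $\mathbb{R}$; one can take the quantile coupling $F_1^{-1}(U),F_2^{-1}(U)$, which is measurable in $(s',a')$. We draw $R$, $S'\sim P(\cdot\mid s,a)$ and $A'\sim\pi(\cdot\mid S')$ on one probability space, independently of $U$. The pair $\bigl(R+\gamma Z_1(S',A'),\,R+\gamma Z_2(S',A')\bigr)$ then couples $\mathcal{T}^\pi\eta_1(s,a)$ with $\mathcal{T}^\pi\eta_2(s,a)$. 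The reward is shared and cancels, which gives
\begin{equation*}
w_p^p\bigl(\mathcal{T}^\pi\eta_1(s,a),\mathcal{T}^\pi\eta_2(s,a)\bigr)\le \gamma^p\,\mathbb{E}\bigl[|Z_1(S',A')-Z_2(S',A')|^p\bigr] = \gamma^p\,\mathbb{E}\bigl[w_p^p(\eta_1(S',A'),\eta_2(S',A'))\bigr].
\end{equation*}
The right-hand side is at most $\gamma^p\,\bar w_p(\eta_1,\eta_2)^p$. Taking $p$-th roots, then the supremum over $(s,a)$, proves the claim.

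The main obstacle is the measurability of the state-dependent coupling and the mixture step. The issue is that $w_p$ of a mixture is bounded by the mixture of the conditional $w_p^p$'s, not by $w_p$ of the mixture. On $\mathbb{R}$ the quantile representation settles both points at once, since a single uniform $U$ works for every $(s',a')$. In the general setting we would appeal to a measurable selection of optimal plans, or to convexity of $w_p^p$ under mixtures, which follows by gluing conditional couplings.

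Finally, the space of $\eta$ with uniformly bounded $p$-th moments is complete under $\bar w_p$. Banach's fixed point theorem then gives the unique fixed point $\eta^\pi$. Strictly, this last step is not needed for the contraction statement itself.
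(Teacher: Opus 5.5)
Your proof is correct. The paper states this lemma without proof, treating it as a standard fact from Bellemare et al. Your quantile-coupling argument in the supremum metric $\bar w_p$ is the standard proof from that source. It also includes the hypothesis of uniformly bounded reward moments, which the paper leaves implicit.

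The only contraction argument the paper writes out is the appendix proof of Theorem~\ref{thm:distributional_bellman_contraction}. That proof uses just translation invariance and scaling of $w_p$, with the reward cancelling as a common offset. This suffices there only because the successor state is treated as a deterministic function of the proxy. In the present lemma, $S'\sim P(\cdot\mid s,a)$ is random, so the step you isolate is genuinely needed: bounding $w_p^p$ between the two mixtures by the average of the conditional $w_p^p$'s. A single uniform $U$ shared across all $(s',a')$ realises this bound, as you show.

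One minor wording fix in your obstacle paragraph: it is $w_p^p$, not $w_p$, of the mixture that is bounded by the average of the conditional $w_p^p$'s.
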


\begin{proposition}[Optimal Distributional Operator]
The optimality operator $\mathcal{T}^*$ is defined by $\mathcal{T}^* Z(s, a) \stackrel{D}{=} R + \gamma Z(S', a^*)$, where $a^* = \arg\max_{a'} \mathbb{E}[Z(S', a')]$. While $\mathcal{T}^*$ is not a contraction in $w_p$, it possesses a unique fixed point $Z^*$ representing the optimal value distribution.
\end{proposition}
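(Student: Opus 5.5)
The plan is to separate the claim into its two halves. The negative half (no contraction) needs only a counterexample. The positive half (a unique fixed point) can be obtained by passing to expectations, where the classical Bellman optimality operator is a $\gamma$-contraction, and then returning to distributions through the $w_p$-contraction lemma for $\mathcal{T}^\pi$ stated above.

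First I fix the meaning of $a^*$, because uniqueness can fail otherwise. When $\arg\max_{a'} \mathbb{E}[Z(S',a')]$ has ties, the choice of maximiser changes the distribution of $\mathcal{T}^*Z$, even though it does not change the mean. I therefore assume a deterministic tie-breaking rule $\mathfrak{g}$. This rule maps each mean function $Q:\mathcal{S}\times\mathcal{A}\to\mathbb{R}$ to a greedy policy $\mathfrak{g}(Q)$, so that $\mathcal{T}^*Z = \mathcal{T}^{\mathfrak{g}(\mathbb{E}Z)}Z$. Alternatively one can assume the optimal action is unique.

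For non-contraction, I would reproduce the standard small counterexample from Bellemare et al.~\cite{BellemareEtAl23}. Take one state with two actions whose value distributions have nearly equal means. Then take two candidates $Z_1, Z_2$ with $w_p(Z_1,Z_2)=\varepsilon$, such that the greedy action differs between them. The two images $\mathcal{T}^*Z_1$ and $\mathcal{T}^*Z_2$ then push forward different action distributions. Choosing those distributions far apart in $w_p$ while keeping their means close gives $\bar w_p(\mathcal{T}^*Z_1,\mathcal{T}^*Z_2) > \gamma\,\bar w_p(Z_1,Z_2)$. In fact the ratio to $\bar w_p(Z_1,Z_2)$ can be made unbounded.

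For existence and uniqueness, the key observation is that the push-forward by $f_{r,\gamma}$ is affine. Hence
\[
\mathbb{E}[\mathcal{T}^*Z(s,a)] = \mathbb{E}[R] + \gamma\,\mathbb{E}_{S'}\max_{a'}\mathbb{E}[Z(S',a')] = (T^*Q)(s,a), \qquad Q := \mathbb{E}Z,
\]
where $T^*$ is the classical optimality operator. This operator is a $\gamma$-contraction in $\|\cdot\|_\infty$ and so has the unique fixed point $Q^*$.

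Now let $Z$ be any fixed point of $\mathcal{T}^*$ with bounded $p$-th moments. Its mean then satisfies $\mathbb{E}Z = Q^*$. The greedy policy it induces is therefore $\pi^* := \mathfrak{g}(Q^*)$, which does not depend on $Z$. Consequently $Z = \mathcal{T}^{\pi^*}Z$. By the $w_p$-contraction lemma and Banach's theorem on the complete space of measures with bounded $p$-th moments, this forces $Z = Z^{\pi^*}$, which proves uniqueness.

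Conversely, for existence set $Z^* := Z^{\pi^*}$. Its mean is $q_{\pi^*} = Q^*$, because $\pi^*$ is greedy with respect to $Q^*$. Hence $\mathfrak{g}(\mathbb{E}Z^*) = \pi^*$, and therefore $\mathcal{T}^*Z^* = \mathcal{T}^{\pi^*}Z^* = Z^*$.

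The main obstacle is the tie-breaking issue in the first step. Without a rule $\mathfrak{g}$ that depends only on the mean, distinct optimal policies give distinct fixed points $Z^{\pi}$, and the uniqueness claim is false. I would therefore state the hypothesis explicitly. I would also remark that iterates $\mathcal{T}^{*k}Z_0$ converge to $Z^*$ when the action gap is positive. The reason is that once $\|\mathbb{E}\mathcal{T}^{*k}Z_0 - Q^*\|_\infty$ falls below half the action gap, the greedy policy is frozen at $\pi^*$, and from then on the contraction of $\mathcal{T}^{\pi^*}$ takes over.
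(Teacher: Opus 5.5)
The paper gives no proof of this proposition. It is recalled as background from the distributional RL literature, and the appendix never returns to it. The natural comparison is therefore with the source.

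Your argument is correct and is essentially the standard one of Bellemare, Dabney and Munos (2017). Because the push-forward by $f_{r,\gamma}$ is affine, $\mathbb{E}\,\mathcal{T}^*Z = T^*(\mathbb{E}Z)$, so every fixed point has mean $Q^*$. A selection convention then freezes the greedy policy at a single optimal $\pi^*$, and the question reduces to the unique fixed point of $\mathcal{T}^{\pi^*}$. In their control-convergence theorem the convention is a total order on the optimal policies; your mean-only rule $\mathfrak{g}$ plays the same role.

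You are right that some such hypothesis is indispensable, since the statement as written is false in general. Your rule $\mathfrak{g}$ is exactly what the paper's notation $a^*=\arg\max_{a'}\mathbb{E}[Z(S',a')]$ tacitly presupposes. Your alternative hypothesis, a unique maximiser of $Q^*(s',\cdot)$ at every $s'$, is even cleaner. The fixed-point argument only ever evaluates the selection at $Q^*$, so no convention for other $Z$ is then needed.

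Three points to tighten:
\begin{itemize}
\item The contraction you invoke for $\mathcal{T}^{\pi^*}$ holds in the supremum metric $\bar w_p$, on collections with uniformly bounded $p$-th moments. This is also what makes $\mathbb{E}Z$ a bounded function, so that the sup-norm contraction of $T^*$ applies.
\item In the counterexample, the two-action state must appear as the successor $S'$ of some state--action pair, for instance via a self-loop or the two-state chain of Bellemare et al. A single state whose actions are terminal would make $\mathcal{T}^*$ constant.
\item Without a mean-only rule, the failure need not be multiplicity of fixed points. A tie-break that inspects whole distributions can leave $\mathcal{T}^*$ with no fixed point at all, as Bellemare et al.\ show on that same two-state chain.
\end{itemize}
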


\section{The geometry of anticipation: Surmounting the sampling limit}
\label{sec:geometry_anticipation}

In this section, we motivate the transition from standard Reinforcement Learning (RL) paradigms to our Anticipatory framework. We reframe the classical Temporal Difference (TD) learning paradigm within the context of path-dependent, non-Markovian environments and identify the fundamental sampling bottlenecks in standard value estimation. By establishing the signature-linear representation of path-dependent rewards, we demonstrate how the geometry of the Marcus-CDE and Self-Consistent Field (SCF) dynamics provide a deterministic, algebraic resolution to these stochastic limitations, transforming the learning process into a single-pass optimisation on the signature manifold.

\subsection{Classical TD learning}

Standard Reinforcement Learning (RL) (Sutton et al. ~\cite{SuttonEtAl17}, Lapan ~\cite{Lapan20}) assumes the existence of a value function $V^\pi: \mathcal{X} \to \mathbb{R}$, defined as the conditional expectation of a cumulative return $G_t$ given the current state $X_t \in \mathcal{X}$. In the discrete-time setting, let $\{c_{t+k+1}\}_{k=0}^{T-t-1}$ denote a sequence of stage costs or rewards realised over a horizon $T$. The objective is to estimate the discounted sum:
\begin{equation}
    G_t = \sum_{k=0}^{T-t-1} \gamma^k c_{t+k+1}, \quad \gamma \in [0, 1]
\end{equation}
where $\gamma$ is the discount factor. This framework generalises to several boundary cases: (i) the \textit{terminal payoff} problem, where $\gamma=1$ and $c_k = 0$ for all $k < T$ such that $V^\pi(x) = \mathbb{E}[c_T \mid X_t=x]$, and (ii) the \textit{infinite-horizon} MDP, where $T \to \infty$.

\medskip
\noindent
Given a discrete sequence of observations $\{X_t\}_{t=1}^m$ and the terminal state $X_T$, the learner is tasked with generating a corresponding sequence of multi-step predictions $\{P_t\}_{t=1}^m$. Formally, the agent approximates the expected return using a parametric function $P(X_t; \mathbf{w})$, where $\mathbf{w} \in \mathbb{R}^d$ is a weight vector.
Learning is driven by the minimisation of the \textit{Bellman error} across a training ensemble $\mathcal{E}$ of trajectories. The update to the parameter vector $\mathbf{w}$ follows a semi-gradient descent protocol:
\begin{equation}
    \label{eq:TD_update_rule}
    \mathbf{w}_{n+1} = \mathbf{w}_n + \sum_{t=1}^{m} \Delta \mathbf{w}_{t}, \quad \Delta \mathbf{w}_{t} = \alpha \sigma_{t} \mathbf{e}_{t}^{\lambda}
\end{equation}
where $\alpha > 0$ is the learning rate and $\sigma_t$ is the Temporal Difference (TD) error:
\begin{equation}
    \sigma_t = \underbrace{(c_{t+1} + \gamma P(X_{t+1}; \mathbf{w}))}_{\text{TD Target}} - P(X_t; \mathbf{w})
\end{equation}
The vector $\mathbf{e}_{t}^{\lambda} = \sum_{k=1}^{t} (\gamma \lambda)^{t-k} \nabla_{\mathbf{w}} P(X_k; \mathbf{w})$ represents the eligibility trace, which interpolates between TD(0) and Monte Carlo methods via the decay parameter $\lambda \in [0, 1]$. In the specialised case of $\lambda = 0$, the update simplifies to the TD(0) rule:
\begin{equation}
    \label{eq:TD0_learning_rule}
    \Delta \mathbf{w}_{t} = \alpha \sigma_{t} \nabla_{\mathbf{w}} P(X_t; \mathbf{w})
\end{equation}
which minimises the local loss functional $J(\mathbf{w}) = \frac{1}{2} \sum_{t} \sigma_t^2$ under the semi-gradient assumption, wherein the target is treated as a fixed coordinate rather than a differentiable component of the loss.

\subsection{The sampling bottleneck: Why classical RL falters}

In standard Reinforcement Learning, the estimation of a conditional expectation, such as the value function $V^\pi(x) \approx \mathbb{E}_\pi[G_t \mid X_t = x]$, is a statistical challenge. To approximate this expectation at time $t$, an agent must typically rely on:
\begin{enumerate}
    \item \textbf{Historical Aggregation:} Collecting many independent episodes to treat each realised terminal state or reward as a sample of the random variable.
    \item \textbf{Exploration/Simulation:} Utilising an explicit environment model or extensive exploration to generate multiple sample paths starting from $X_t$.
\end{enumerate}
MC methods require full trajectory realisations to compute the return $G_t$, while TD methods bootstrap from a single stochastic successor state $X_{t+1}$.

\medskip
\noindent
In real-life problems, particularly those with shifting latent dynamics or non-Markovian histories, an expectation cannot be computed from a single realised path. Without a generative mechanism, the agent is forced to assume stationarity or ergodicity, treating distinct time segments as approximate samples, an assumption that fails in non-Markovian or non-stationary regimes. In the presence of shifting latent dynamics, the agent's updates are perpetually "lagged," as it must wait for the environment to reveal $X_{t+1}$ and the eventual return $G_t$ across thousands of trials before the law of the process is reflected in the value function.

\subsection{Linear functional representation of path-dependent rewards}
\label{sec:linear_functional_representation}

A fundamental property of the signature manifold is its universality as a basis for path-dependent functionals. Following the results of the \textit{Stone-Weierstrass theorem for signatures}, any continuous path-dependent reward functional $G_{t:T} : \mathcal{C}([t, T], \mathbb{R}^d) \to \mathbb{R}$ can be approximated to arbitrary precision by a linear functional in the signature Hilbert space $\mathcal{H}_{sig}$. See Lyons ~\cite{Lyons98}, Levin et al. ~\cite{LevinEtAl13}, Kiraly et al. ~\cite{KiralyEtAl19}, Cuchiero et al. ~\cite{CuchieroEtAl25}.

\begin{proposition}[Signature-Linear Reward Approximation]
Let $G_{t:T}(\mathbf{X})$ be a reward functional over the path $\mathbf{X} \in \mathcal{V}_p([t, T], \mathbb{R}^d)$ with bounded variation $p < 2$. For any $\epsilon > 0$, there exists a weight vector $\mathbf{w}_G \in \mathcal{H}_{sig}$ such that:
\begin{equation}
    \left| G_{t:T}(\mathbf{X}) - \langle \mathbf{w}_G, S(\mathbf{X})_{t,T} \rangle_{\mathcal{H}_{sig}} \right| < \epsilon,
\end{equation}
where $S(\mathbf{X})_{t,T} = (1, \mathbf{X}^{(1)}, \mathbf{X}^{(2)}, \dots, \mathbf{X}^{(k)})$ denotes the Marcus-signature truncated at degree $k$.
\end{proposition}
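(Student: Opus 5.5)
The plan is the standard Stone--Weierstrass argument for signatures, applied on a compact set of paths. As stated, the proposition is only meaningful uniformly over a compact set $\mathcal{K} \subset \mathcal{V}_p([t,T],\mathbb{R}^{d+1})$ of time-extended paths $\mathbf{X}_s = (s, X_s)$, with $G_{t:T}$ continuous in the $p$-variation topology. For a single fixed path the claim is trivial, since the zeroth signature level equals $1$ and one may take $\mathbf{w}_G = G_{t:T}(\mathbf{X})\, e_\emptyset$. I will therefore prove the uniform version: there exist $k$ and $\mathbf{w}_G \in T^{(k)}(\mathbb{R}^{d+1}) \subset \mathcal{H}_{sig}$ such that $\sup_{\mathbf{X}\in\mathcal{K}} |G_{t:T}(\mathbf{X}) - \langle \mathbf{w}_G, S(\mathbf{X})_{t,T}\rangle| < \epsilon$.

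The argument proceeds in four steps.

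\textbf{Step 1: the algebra.} Let $\mathcal{L} = \{\mathbf{X} \mapsto \langle \ell, S(\mathbf{X})_{t,T}\rangle : \ell \in T(\mathbb{R}^{d+1}) \text{ of finite degree}\}$. Every element of $\mathcal{L}$ is continuous on $\mathcal{K}$, because for $p<2$ the signature map is continuous in $p$-variation (Lyons' extension theorem / continuity of Young integration). The family $\mathcal{L}$ is a linear space. It contains the constants via $\ell = e_\emptyset$. It is closed under products by the \emph{shuffle identity}
\[
\langle \ell_1, S\rangle \langle \ell_2, S\rangle = \langle \ell_1 \shuffle \ell_2, S\rangle,
\]
which holds for geometric paths and hence for bounded-variation paths and their $p<2$ limits.

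\textbf{Step 2: point separation.} Suppose $\langle \ell, S(\mathbf{X})\rangle = \langle \ell, S(\mathbf{Y})\rangle$ for all $\ell$. Then $S(\mathbf{X}) = S(\mathbf{Y})$. By Hambly--Lyons (Boedihardjo et al.\ for $p<2$), $\mathbf{X}$ and $\mathbf{Y}$ then agree up to tree-like equivalence and translation. Because the first coordinate $s$ is strictly increasing, the time-extended path has no tree-like pieces. Fixing the starting point, or including $X_t$ as a coordinate, then gives $\mathbf{X} = \mathbf{Y}$.

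\textbf{Step 3: density.} Stone--Weierstrass now says $\mathcal{L}$ is dense in $C(\mathcal{K})$ under the sup norm. This yields $\ell$ with $\sup_{\mathcal{K}} |G_{t:T} - \langle \ell, S\rangle| < \epsilon$. Set $\mathbf{w}_G := \ell$ and take $k = \deg \ell$. Since $\ell$ vanishes above degree $k$, pairing with the truncated signature $\pi_k S$ gives the same value, which matches the truncated form in the statement.

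\textbf{Step 4 (main obstacle): the Marcus setting.} For c\`adl\`ag paths with jumps, the statement's phrase ``bounded variation $p<2$'' hides real difficulties. I would handle jumps by the Marcus interpolation: each jump is replaced by a straight segment traversed in fictitious time, which produces a continuous path whose signature is by definition the Marcus signature. Steps 1--3 then apply to the image of $\mathcal{K}$ under this map, since the Marcus signature is still group-like and so the shuffle identity holds.

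The delicate points are two.
\begin{itemize}
\item Continuity of the Marcus lift. It is not continuous in the Skorokhod $J_1$ topology in general. I would restrict to compact sets in a Marcus-type $p$-variation metric, as in Chevyrev--Friz and Cuchiero et al., where it is continuous.
\item Injectivity after time-extension. The time coordinate does not advance during a fictitious jump segment, so Step 2 must be re-checked. I would argue that, within a jump segment, the spatial displacement is a straight line that cannot be cancelled by a tree-like excursion, because time is strictly increasing everywhere else.
\end{itemize}

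Compactness of $\mathcal{K}$ is exactly what makes Stone--Weierstrass available, so it should be stated as an explicit hypothesis of the proposition.
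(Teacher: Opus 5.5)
Your proposal is correct and follows essentially the paper's route: the paper gives no separate proof of this proposition, only an appeal to the Stone--Weierstrass (universal approximation) theorem for signatures as in Lyons, Levin et al., Kiraly et al.\ and Cuchiero et al. Your shuffle-algebra, point-separation (via time extension) and density argument, with the Marcus interpolation for jumps, is the content of that citation, and your reformulation as a uniform bound over a compact set $\mathcal{K}$ of time-extended paths, with the degree $k$ allowed to depend on $\epsilon$, is the right reading of the statement, since pointwise it is trivial via $\mathbf{w}_G = G_{t:T}(\mathbf{X})\,e_\emptyset$ and for a fixed $k$ it cannot hold uniformly in general.
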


\begin{definition}[Practical Optimisation via Tikhonov Regularisation]
In the discrete setting, given a dataset of $N$ historical or simulated trajectories $\{ \mathbf{X}^{(i)} \}_{i=1}^N$ and their associated realised rewards $y_i = G_{t:T}(\mathbf{X}^{(i)})$, the weight vector $\mathbf{w}_G$ is recovered by solving a regularised empirical risk minimisation (ERM) problem:
\begin{equation}
    \mathbf{w}_G^* = \operatorname{arg\,min}_{\mathbf{w} \in \mathbb{R}^{\dim(\mathcal{H}_{sig})}} \sum_{i=1}^N \mathcal{L} \left( y_i, \langle \mathbf{w}, S(\mathbf{X}^{(i)})_{t,T} \rangle \right) + \lambda \Omega(\mathbf{w}),
\end{equation}
where $\mathcal{L}$ is typically the $L^2$ loss, and $\Omega(\mathbf{w})$ is a structure-preserving regulariser (e.g., $\|\mathbf{w}\|_2^2$ or a kernel-specific norm). 
\end{definition}

\begin{remark}[Expectation Linearity]
The power of this representation lies in the property of the anticipated path-law proxy. Due to the linearity of the inner product, the expected reward under the anticipated law $\mu_{t:T}$ is computed as:
\begin{equation}
    \mathbb{E}_{\mu_{t:T}} [ G_{t:T}(\mathbf{X}) ] \approx \langle \mathbf{w}_G^*, \mathbb{E}_{\mu_{t:T}} [ S(\mathbf{X})_{t,T} ] \rangle = \langle \mathbf{w}_G^*, \hat{\Phi}_{T|t} \rangle.
\end{equation}
This allows the agent to bypass Monte Carlo integration of $G$ at inference time, as the proxy $\hat{\Phi}_{T|t}$ serves as a sufficient statistic for all linearisable path-dependent rewards.
\end{remark}

\subsection{The ARL approach: Law-conditioned updates via SCF}

Our goal remains to produce a sequence of predictions, but as we are restricted to a single observed trajectory, the sequence must be defined relative to the current temporal fixed point $t$. At each step, the learner generates a rolling sequence of anticipatory predictions $\{\hat{P}_{s|t}\}_{s=t}^T$ for the remainder of the horizon, where the entire forecast is updated dynamically as the fixed point progresses to $t+1$; consequently, we do not dispose of a static training set of several independent sequences, but rather a continuously evolving manifold of law-conditioned projections.

\medskip
\noindent
The primary challenge in estimating the rolling sequence of anticipatory predictions $\{\mathbb{E}_{\mu_{s:T}} [G_{s:T}]\}_{s=t}^T$ lies in the time-varying nature of the integration domain $[s, T]$. While a standard Signature-Linear Reward Approximation provides a mechanism for a fixed interval, a naive application would require a distinct weight vector $\mathbf{w}_G^*(s)$ for every discrete step in the forecast. We resolve this by utilising the algebraic properties of the signature group to achieve horizon-consistent learning with a single, time-invariant parameter set, allowing the entire forecast to be updated through the \textbf{Anticipatory TD(0)} protocol.

\medskip
\noindent
Since we often observe only a single trajectory, the sole valid observation we possess at time $t$ is the \textbf{Signature-Augmented State Space} $\mathbf{S}_t := (t, X_t, \Phi_{t|\mathcal{A}_t})$, which incorporates the entire filtered history.
To surmount this, we propose \textbf{Anticipatory Reinforcement Learning (ARL)}. We replace the point-estimate $P(X_{t+1}, w)$ with an evaluation over the \textit{Anticipatory Neural Jump-Diffusion} (ANJD) process. This requires solving a \textbf{Self-Consistent Field (SCF)} problem:
\begin{enumerate}
    \item \textbf{Latent Propagation:} The anticipated path-law proxy $\hat{\Phi}_{s|t}$ parameterises the dynamics of the ANJD sample paths $\tilde{X}_{t:s}$.
    \item \textbf{Bi-directional Constraint:} The aggregate statistics (the signature ensemble) of those generated paths must, in turn, justify the evolution of the proxy $\hat{\Phi}_{s|t}$.
\end{enumerate}

\medskip
\noindent
This transforms the TD-update. Rather than waiting for the environment to reveal the successor states $\{X_{s}\}_{s=t}^T$, the agent uses the \textit{Self-Consistent Flow} to generate a "law-consistent" projection of the future. The anticipatory TD-error $\delta_{s|t}^A$ is computed against the expected path-measure, effectively performing a variance-reduced integration of the ensemble in a single deterministic update:
\begin{equation}
    \delta_{s|t}^A = R(\hat{X}_{s:s+1}) + \langle \mathbf{w}_G, (\gamma \hat{\Phi}_{s+1|t}^{-1} - \hat{\Phi}_{s|t}^{-1}) \otimes \hat{\Phi}_{T|t} \rangle.
\end{equation}
Consequently, the ARL agent learns from the \textit{anticipated geometry} of the signature manifold rather than just the realised noise of a single trajectory, allowing for $O(1)$ evaluation of complex path-dependent expectations. This transforms value estimation from a statistical sampling problem into a deterministic differential geometry problem.

\section{The filtered value function: Bridging history to state}
\label{sec:filtered_value_function}

In this section, we address the non-Markovian nature of the observational filtration $\mathbb{A} = (\mathcal{A}_t)_{t \geq 0}$ by constructing a signature-augmented state space. This process, which we term \textit{Markovianisation}, allows us to recover the recursive structure of reinforcement learning by embedding the path history as a sufficient statistic. We demonstrate that the value of the past is not merely a record of transitions, but a linear functional on the signature manifold.

\subsection{The augmented state space $\mathcal{S}_{sig}$}

Traditional reinforcement learning assumes the current observation $X_t$ is a sufficient statistic for the future. In the presence of memory and jump-diffusive shocks, this assumption is violated. We define an augmented state that captures the non-commutative geometry of the path.

\begin{definition}[Signature-Augmented State Space]
The signature-augmented state space $\mathcal{S}_{sig}$ is the product manifold $\mathbb{R}_+ \times \mathbb{R}^d \times \mathcal{H}_{sig}$. An element $\mathbf{S}_t \in \mathcal{S}_{sig}$ is defined as the tuple:
\begin{equation}
    \mathbf{S}_t := (t, X_t, \Phi_{t|\mathcal{A}_t})
\end{equation}
where $\Phi_{t|\mathcal{A}_t} = \mathbb{E}[S(\mathbf{X})_{0,t} \mid \mathcal{A}_t]$ is the filtered path-law proxy representing the expected Marcus-signature of the history.
\end{definition}

\begin{proposition}[Injectivity and Sufficiency]
\label{pro:injectivity_sufficiency}
Let $\mathcal{F}(\mathcal{D})$ be the space of continuous functionals on the Skorokhod space. Following the universal approximation theorem for signatures, the mapping $\mathcal{A}_t \mapsto \Phi_{t|\mathcal{A}_t}$ is injective. Consequently, $\mathbf{S}_t$ is a sufficient statistic for the conditional distribution of any path-dependent reward $G_t$ given $\mathcal{A}_t$.
\end{proposition}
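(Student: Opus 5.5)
The plan has two stages: first show injectivity by invoking uniqueness of the signature, then obtain sufficiency from a factorisation argument. The statement as written is loose, since $\Phi_{t|\mathcal{A}_t}$ is a conditional expectation and need not determine the full sigma-algebra $\mathcal{A}_t$. I would therefore make it precise as follows: the map sending the observed (interpolated, time-extended) history path $\mathbf{X}_{[0,t]}$ to its Marcus-signature $S(\mathbf{X})_{0,t}$ is injective. Further, on the event where $\mathbf{X}_{[0,t]}$ is $\mathcal{A}_t$-measurable (which holds for the rectilinear interpolation of the observed data), $\Phi_{t|\mathcal{A}_t}$ equals that signature almost surely.

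\textbf{Step 1 (measurability reduction).} By construction, the rectilinear interpolation $\tilde X^{\le t}$ is a deterministic function of the observations $\{(t_i,X_{t_i},M_{t_i})\}_{t_i\le t}$. Hence $S(\mathbf{X})_{0,t}$ is $\mathcal{A}_t$-measurable, and the conditional expectation collapses to
\begin{equation*}
\Phi_{t|\mathcal{A}_t}=S(\mathbf{X})_{0,t}\quad\text{a.s.}
\end{equation*}
This is where the filtration structure is used.

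\textbf{Step 2 (injectivity).} The time coordinate of $\mathbf{X}_s=(s,X_s)$ is strictly increasing, so the path has no tree-like pieces. I would invoke the uniqueness theorem for signatures of bounded-variation paths modulo tree-like equivalence (Hambly--Lyons, extended to càdlàg Marcus signatures by Chevyrev--Friz and Cuchiero et al.) to conclude that $\mathbf{X}_{[0,t]}\mapsto S(\mathbf{X})_{0,t}$ is injective. Composed with the injectivity of the interpolation map, this gives injectivity of $\mathcal{A}_t$-histories $\mapsto\Phi_{t|\mathcal{A}_t}$. One caveat: masking indicators are not seen by $X$. Either we append $M$ as an extra channel of the path, or we state injectivity modulo the masking data.

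\textbf{Step 3 (sufficiency).} By Steps 1--2 there is a measurable left inverse $\psi$ with $\psi(\Phi_{t|\mathcal{A}_t})=\mathbf{X}_{[0,t]}$. Measurability of $\psi$ follows because an injective Borel map between Polish spaces has Borel image and Borel inverse (Lusin--Souslin). Therefore
\begin{equation*}
\sigma(\mathbf{S}_t)\supseteq\sigma(\mathbf{X}_{[0,t]}),
\end{equation*}
which equals $\mathcal{A}_t$ up to the masking caveat. Consequently, for any path-dependent reward $G_t$,
\begin{equation*}
\mathbb{P}(G_t\in\cdot\mid\mathcal{A}_t)=\mathbb{P}(G_t\in\cdot\mid\mathbf{S}_t),
\end{equation*}
which is exactly sufficiency in the Doob--Dynkin factorisation sense. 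The universal approximation result (Proposition on Signature-Linear Reward Approximation) is not needed for sufficiency itself. It serves only to add that continuous functionals of the history are approximable linearly in $\Phi$.

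\textbf{Main obstacle.} Step 2 in the càdlàg/Marcus setting is the hardest part: uniqueness of signatures for jump paths requires the Marcus interpolation of the jumps. I would reduce it to the continuous bounded-variation case by applying the Marcus (straight-line) jump-filling and then using the time coordinate to rule out tree-like cancellations. Also delicate is the look-back-window version $S(\mathbf{X})_{t-\Delta,t}$ from Section 2, which only identifies the windowed history. I would state the result for the full history $[0,t]$, as in the Definition of $\mathcal{S}_{sig}$.
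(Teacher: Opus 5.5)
Your argument is correct, but it takes a genuinely different route from the paper.

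The paper works at the level of measures. It treats $\Phi_{t|\mathcal{A}_t}$ as the mean embedding of a conditional path-law. It then uses density of signature-linear functionals in $\mathcal{F}(\mathcal{D})$, together with Riesz--Markov--Kakutani, to conclude that equal proxies force equal conditional laws. Sufficiency comes from approximating $G_t \approx \langle \mathbf{w}_G, S(\mathbf{X})_{t,T}\rangle$ and invoking the architectural assumption $\hat{\Phi}_{T|t}=\mathrm{Flow}_{t\to T}(\Phi_{t|\mathcal{A}_t})$, so that $\mathbb{E}[G_t\mid\mathcal{A}_t]$ becomes a function of $\Phi_{t|\mathcal{A}_t}$.

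You instead proceed pathwise. You collapse the conditional expectation, which is the same measurability step the paper itself uses in proving the Incremental Signature Update lemma. You then apply Hambly--Lyons uniqueness to the tree-reduced time-augmented path and get $\sigma(\mathbf{S}_t)=\mathcal{A}_t$ from Lusin--Souslin, after which sufficiency is just Doob--Dynkin.

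What your route buys:
\begin{itemize}
\item It gives exact sufficiency for the full conditional law of every $G_t$. The paper only reaches conditional expectations, and only up to the approximation error.
\item It needs neither universal approximation nor the Flow assumption.
\item It avoids the moment-determinacy conditions that the paper's passage from equal expected signatures to equal measures silently needs: compact support, or an infinite radius of convergence of the expected signature in the sense of Chevyrev--Lyons.
\item It is more fundamental, since the density theorem the paper cites is itself proved from the point-separation property you use directly.
\end{itemize}

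The price is that everything rests on reading $\mathbf{X}$ as the observed rectilinear interpolation, under which the ``expected'' signature is degenerate. If $\mathbf{X}$ were a latent semimartingale seen only at the $t_i$, your Step 1 would fail and you would be pushed back to a measure-level argument like the paper's.

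Three small repairs are needed:
\begin{itemize}
\item After jump-filling, the time channel is only non-decreasing, since it is frozen along each jump leg. The path is still tree-reduced, because each frozen-time leg is a single segment between time-advancing pieces.
\item Signatures are translation invariant, so your left inverse should act on $(\Phi_{t|\mathcal{A}_t},X_t)$ rather than on $\Phi_{t|\mathcal{A}_t}$ alone. This is harmless, since $X_t$ is a component of $\mathbf{S}_t$.
\item The interpolation map is not injective on observation records. An observation repeating the previous value leaves no trace in the rectilinear path, so the observation times need the same treatment as the masks: append the counting process $N$ as a channel, or show such ties are null events.
\end{itemize}
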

See proof in Appendix (\ref{app:proof_injectivity_sufficiency}). \\

\subsection{Path-dependent Bellman consistency}

By augmenting the state with the filtered proxy, we restore the Markov property in the lifted space $\mathcal{S}_{sig}$. This allows the definition of a value function that remains stationary with respect to the signature manifold by treating the current history as the initial condition for all future expectations.

\begin{theorem}[Existence of Stationary Value Function]
\label{thm:existence_stationary_vf}
Let the environment be a non-Markovian decision process. There exists a stationary value function $V: \mathcal{S}_{sig} \to \mathbb{R}$ such that the expected future return satisfies the Bellman equation:
\begin{equation}
    V(t, X_t, \Phi_{t|\mathcal{A}_t}) = \mathbb{E}_\pi \left[ R_{t+1} + \gamma V(t+1, X_{t+1}, \Phi_{t+1|\mathcal{A}_{t+1}}) \mid \mathcal{A}_t \right]
\end{equation}
where the transition from the filtered past $\Phi_{t|\mathcal{A}_t}$ to the updated state $\Phi_{t+1|\mathcal{A}_{t+1}}$ is a recursive map grounded in the observational increments.
\end{theorem}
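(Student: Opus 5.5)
The plan is to build $V$ directly as a conditional expectation, show it factors through the lifted state, and then obtain the Bellman equation from the tower property. Proposition~\ref{pro:injectivity_sufficiency} supplies the factorisation.

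\textbf{Step 1: candidate value.} Fix the policy $\pi$. Assume bounded rewards, $|R|\le R_{\max}$, and $\gamma<1$. Define the history-dependent value
\begin{equation*}
\mathcal{V}_t := \mathbb{E}_\pi\Big[\sum_{k\ge 0}\gamma^k R_{t+k+1} \,\Big|\, \mathcal{A}_t\Big].
\end{equation*}
The series converges in $L^1$ with bound $R_{\max}/(1-\gamma)$, so $\mathcal{V}_t$ is well defined.

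\textbf{Step 2: factorisation through $\mathbf{S}_t$.} By Proposition~\ref{pro:injectivity_sufficiency}, the map $\mathcal{A}_t\mapsto\Phi_{t|\mathcal{A}_t}$ is injective. Hence the conditional law of the future reward stream given $\mathcal{A}_t$ depends only on $\mathbf{S}_t=(t,X_t,\Phi_{t|\mathcal{A}_t})$. More precisely, $\sigma(\mathbf{S}_t)$ and $\mathcal{A}_t$ carry the same information about the future. The Doob--Dynkin lemma then gives a measurable $V:\mathcal{S}_{sig}\to\mathbb{R}$ with $\mathcal{V}_t=V(\mathbf{S}_t)$.

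\textbf{Step 3: stationarity.} "Stationary" here means that $V$ is one fixed function on $\mathcal{S}_{sig}$, not a family $V_t$. Time dependence is absorbed because $t$ is itself a coordinate of $\mathbf{S}_t$. One must check that the regular conditional law of the future given $\mathbf{S}_t=\mathbf{s}$ is a single kernel $K(\mathbf{s},\cdot)$, independent of any external calendar. This follows because the policy $\pi$ and the ANJD/CDE generators $f_\theta,g_\theta,h_\theta,\rho_\theta$ do not depend on $t$ except through the state.

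\textbf{Step 4: recursion of the state.} From the Jump-Flow CDE in the Filtered Proxy definition, $Z_{t+1}$ is a measurable function of $Z_t$ and of the observational increments on $(t,t+1]$: integrate the flow $f_\theta$, then apply $\rho_\theta$ at each jump of $N$. Composing with $\mathcal{P}_\theta$ gives a map
\begin{equation*}
\Phi_{t+1|\mathcal{A}_{t+1}}=\Psi\big(\mathbf{S}_t,\ (X,M,N)|_{(t,t+1]}\big),
\end{equation*}
which is the recursive map stated in the theorem. At the level of true signatures, the same fact is Chen's identity, $S_{0,t+1}=S_{0,t}\otimes S_{t,t+1}$, followed by conditioning.

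\textbf{Step 5: Bellman equation.} Split the return as $G_t=R_{t+1}+\gamma G_{t+1}$. Since $\mathcal{A}_t\subset\mathcal{A}_{t+1}$, the tower property gives
\begin{equation*}
\mathcal{V}_t=\mathbb{E}_\pi[R_{t+1}+\gamma\,\mathbb{E}_\pi[G_{t+1}\mid\mathcal{A}_{t+1}]\mid\mathcal{A}_t].
\end{equation*}
Substituting $\mathcal{V}_{t+1}=V(\mathbf{S}_{t+1})$ from Step 2 yields the stated equation. Uniqueness follows because the lifted Bellman operator, $(TV)(\mathbf{s})=\int [r+\gamma V(\mathbf{s}')]\,K(\mathbf{s},d\mathbf{s}')$, is a $\gamma$-contraction in the sup norm on bounded functions. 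The Banach fixed-point theorem then makes $V$ its unique bounded fixed point.

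\textbf{Main obstacle.} Step 2 is the hardest part. Injectivity of $\mathcal{A}_t\mapsto\Phi_{t|\mathcal{A}_t}$ yields $\sigma(\mathcal{A}_t)$-equivalence only if the inverse map is measurable. I would handle this by working with the truncated, finite-dimensional $\pi_r$ projection on a Polish path space. In that setting the Lusin--Souslin theorem makes injective Borel maps Borel isomorphisms onto their images. Time-extension guarantees injectivity modulo tree-like equivalence.
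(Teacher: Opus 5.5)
Your argument is sound, granting Proposition~\ref{pro:injectivity_sufficiency} as the paper also does, but it runs differently from the paper's.

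\textbf{The paper's route.} It first asserts that the lifted process is Markov, using the recursive map $\Psi$ and the claim that $X_{t+1}$ depends on the past only through $(X_t,\Phi_{t|\mathcal{A}_t})$. It then defines $\mathcal{T}^\pi$ with the resulting kernel on bounded \emph{continuous} functions on $\mathcal{S}_{sig}$, and obtains $V$ as its Banach fixed point.

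\textbf{Your route.} You construct $V$ directly as the conditional expected return, factor it through $\mathbf{S}_t$ by sufficiency and Doob--Dynkin, and read the Bellman equation off the tower property. The contraction is used only for uniqueness.

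\textbf{What each buys.} Your route gains three things.
\begin{itemize}
\item $V$ is the expected return by construction. The paper's fixed point is tied to the displayed equation (conditioned on $\mathcal{A}_t$) only through the Markov property asserted in its first step.
\item You need sufficiency only at each fixed $t$, because $t$ is a coordinate of $\mathbf{S}_t$ and the family $V_t$ glues into one function. Your Step 3 is therefore automatic. Its appeal to the ANJD generators is beside the point, since those describe the agent's generative model, not the environment whose conditional expectation appears in the theorem.
\item Working with bounded measurable functions avoids the Feller property that the paper's use of continuous functions tacitly requires.
\end{itemize}
The paper's route gives uniqueness and value iteration directly. In yours, $V$ is only pinned down on the support of the law of $\mathbf{S}_t$, so uniqueness should be read there.

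\textbf{The step that would fail.} This is your proposed fix for the obstacle you flag. Projecting to $\pi_r(\Phi_{t|\mathcal{A}_t})$ in the truncated algebra destroys the injectivity that Step 2 rests on.
\begin{itemize}
\item The truncated algebra has fixed dimension at most $\sum_{k=0}^{r}(d+1)^k$.
\item Histories with $n$ observations carry $n(d+1)$ free parameters.
\item By invariance of domain, no continuous injection exists once $n$ is large. At $r=1$, only the total increment survives.
\end{itemize}
Lusin--Souslin then has nothing to act on. Worse, the factorisation $\mathcal{V}_t=V(\mathbf{S}_t)$ need not hold for a truncated proxy at all.

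\textbf{The fix.} Keep the full signature instead.
\begin{itemize}
\item Discretely observed, interpolated histories form a standard Borel space, namely a countable union of finite-dimensional pieces.
\item The full signature is continuous level by level into the Polish space $\prod_{k\ge 0}(\mathbb{R}^{d+1})^{\otimes k}$.
\item It is injective on time-extended paths outright, since these are tree-reduced. The starting point, which the signature cannot see, is supplied by $X_t\in\mathbf{S}_t$.
\end{itemize}
Lusin--Souslin applies there and gives $\sigma(\mathbf{S}_t)=\mathcal{A}_t$, granting, as the Proposition does, that the interpolated path encodes $\mathcal{A}_t$. Alternatively, simply invoke the sufficiency half of the Proposition, which says exactly that $\mathbb{E}_\pi[\,\cdot\mid\mathcal{A}_t]$ factors through $\mathbf{S}_t$. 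Truncation belongs to the implementation, not to this existence argument.
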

See proof in Appendix (\ref{app:proof_existence_stationary_vf}).

\begin{lemma}[Incremental Signature Update] 
\label{lem:incremental_signature_update}
The evolution of the filtered state satisfies a recursive consistency condition derived from Chen's Identity. Assuming the observational filtration $\mathbb{A}$ is causally generated by the rectilinear path $X$, the past trajectory is fully measurable. Given an observation increment $dX_t$ at time $t+dt$, the filtered proxy updates exactly as:
\begin{equation}
    \Phi_{t+dt|\mathcal{A}_{t+dt}} = \Phi_{t|\mathcal{A}_t} \otimes \mathbb{E}[S(\mathbf{X})_{t, t+dt} \mid \mathcal{A}_{t+dt}]
\end{equation}
where $\otimes$ denotes the tensor product in $\mathcal{H}_{sig}$. This recursive update ensures that the TD-error $\delta_t = R_{t+1} + \gamma V(\mathbf{S}_{t+1}) - V(\mathbf{S}_t)$ is correctly localised, as the current proxy $\Phi_{t|\mathcal{A}_t}$ serves as a sufficient statistic for the historical filtration $\mathcal{A}_t$ in the characterisation of the transition to $\mathcal{A}_{t+dt}$.
\end{lemma}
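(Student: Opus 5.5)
The plan is to obtain the update directly from Chen's identity, using the measurability assumption to pull the historical signature factor outside the conditional expectation.

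\textbf{Step 1 (pathwise Chen).} For the time-extended Marcus-signature of the rectilinear path $\mathbf{X}$ on $[0,t+dt]$, Chen's identity gives
\begin{equation*}
S(\mathbf{X})_{0,t+dt} = S(\mathbf{X})_{0,t} \otimes S(\mathbf{X})_{t,t+dt}
\end{equation*}
in $T((V))$. In the Marcus setting this still holds, because the Marcus signature is obtained by inserting fictitious continuous segments at the jump times. Chen's identity holds for the resulting continuous bounded-variation path, and removing the fictitious time preserves the group structure.

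\textbf{Step 2 (measurability).} By hypothesis, $\mathbb{A}$ is causally generated by the rectilinear path $X$, so $X|_{[0,t]}$ is $\mathcal{A}_t$-measurable and $S(\mathbf{X})_{0,t}$ is $\mathcal{A}_t$-measurable. This gives $\Phi_{t|\mathcal{A}_t}=S(\mathbf{X})_{0,t}$ almost surely. Since $\mathcal{A}_t\subseteq\mathcal{A}_{t+dt}$, the same factor is also $\mathcal{A}_{t+dt}$-measurable.

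\textbf{Step 3 (take out what is known).} Condition the identity from Step 1 on $\mathcal{A}_{t+dt}$. The map $(a,b)\mapsto a\otimes b$ is bilinear, and, truncated via $\pi_r$, it acts on a finite-dimensional space. Therefore, when one factor is measurable, the "pull-out" property of conditional expectation applies componentwise: each coordinate of $a\otimes b$ is a finite sum of products $a_I b_J$, and each such product can be handled separately. This yields
\begin{equation*}
\Phi_{t+dt|\mathcal{A}_{t+dt}} = \Phi_{t|\mathcal{A}_t}\otimes \mathbb{E}[S(\mathbf{X})_{t,t+dt}\mid\mathcal{A}_{t+dt}].
\end{equation*}
Integrability is needed to justify this step. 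Truncated signatures of bounded-variation paths have polynomial moments in the increments, so I would assume finite moments of $X$ up to order $r$, as implicit in the definition of $\Phi$. The identity then holds level by level, and it holds in $T((V))$ as the projective limit over $r$.

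\textbf{Step 4 (localisation of $\delta_t$).} The update map $(\Phi_{t|\mathcal{A}_t}, \text{increment})\mapsto\Phi_{t+dt|\mathcal{A}_{t+dt}}$ is deterministic given the increment. Hence $\mathbf{S}_{t+1}$ is a measurable function of $\mathbf{S}_t$ and the new observations. Combined with Proposition~\ref{pro:injectivity_sufficiency} and Theorem~\ref{thm:existence_stationary_vf}, this shows that $\delta_t$ depends on the history only through $\mathbf{S}_t$.

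\textbf{Main obstacle.} I expect the main difficulty in Step 2. The filtration is generated by discrete observations $(t_i,X_{t_i},M_{t_i})$ and an interpolation, so the claim that the entire path on $[0,t]$ is measurable must be taken from the lemma's hypothesis rather than derived. The right-hand expectation may also be degenerate, equal to the realised increment signature, once $X_{t+dt}$ is observed. I would state explicitly that the lemma holds under the full-measurability hypothesis; without it, the conditional expectation does not factor, and one only gets the tower-property version.
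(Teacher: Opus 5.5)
Your proposal is correct and follows essentially the same route as the paper: pathwise Chen's identity, then the $\mathcal{A}_t$-measurability of $S(\mathbf{X})_{0,t}$ (from the causal rectilinear construction) to pull the historical factor out of $\mathbb{E}[\,\cdot \mid \mathcal{A}_{t+dt}]$, and finally identifying the remaining factor with the signature of the observed increment. Your componentwise justification via bilinearity and the integrability assumption, together with your remark that the right-hand expectation is degenerate under full measurability, add care to the paper's argument without changing it.
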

See proof in Appendix (\ref{app:proof_incremental_signature_update}). \\

\subsection{Synthesis of the anticipatory path-drift}

Within the Anticipatory Reinforcement Learning (ARL) framework, the synthetic future trajectory $\hat{X}_{t:s}$ is generated through a non-autonomous, coupled dynamical system. We formally distinguish between the \textbf{working latent state} $\hat{Z}_u \in \mathbb{R}^d$, which serves as the agent's internal memory during the simulation phase, and the \textbf{formal predictive latent} $Z_s$ utilised for signature proxy propagation. To account for the non-stationarity of the optimal control policy over the finite horizon $[t, T]$, we employ a \textbf{time-augmented latent flow} that enforces temporal consistency as the terminal boundary is approached.

\medskip
\noindent
For a fixed observation time $t$ and a look-ahead horizon $s \in (t, T]$, the infinitesimal increments of the anticipated path and the evolution of the augmented working latent state are governed by the following coupled system for $u \in [t, s]$:
\begin{equation}
    d\hat{X}_u = \mu_\theta(u, \hat{Z}_u, \hat{\Phi}_{t|t}) \, du,
\end{equation}
where $\mu_\theta: [t, T] \times \mathbb{R}^d \times \mathcal{G} \to \mathbb{R}^n$ denotes the drift forecaster conditioned on the normalised intrinsic clock, the latent state, and the filtered signature proxy $\hat{\Phi}_{t|t}$ at time $t$. The dynamics of the working latent state $\hat{Z}_u$ are initialised at $\hat{Z}_t = Z_t$ and satisfy the non-autonomous Controlled Differential Equation (CDE):
\begin{equation}
    d\hat{Z}_u = F_\theta(u, \hat{Z}_u) \cdot d\hat{X}_u,
\end{equation}
where $F_\theta: [t, T] \times \mathbb{R}^d \to \mathbb{R}^{d \times n}$ defines a time-varying vector field on the latent manifold. By explicitly parameterising the dependence on the temporal coordinate $u$, the model captures the vanishing horizon effect ($u \to T$) and the associated shifts in latent manifold sensitivity.

\medskip
\noindent
The resulting trajectory $\hat{X}_{t:s} = X_t + \int_t^s \mu_\theta(u, \hat{Z}_u, \hat{\Phi}_{t|t}) \, du$ subsequently serves as the exogenous driver for the formal anticipatory latent propagation. This hierarchical coupling ensures that the projected signature manifold $\hat{\Phi}_{s|t}$ is grounded in a self-consistent realisation of the path, where the synthetic drift and latent response are topologically intertwined.

\subsection{The value function as a prospective linear functional}

The primary utility of the signature-augmented state is the reduction of the value function to a linear structure in the anticipated space. By lifting the state to the signature manifold, the expected future return is evaluated as a projection of the anticipated path-law.

\begin{proposition}[Linearity of the Anticipated Value]
\label{pro:linearity_anticipated_value}
For a path-dependent reward functional $G_{t:T}$ over a future horizon $T \in [t, \infty]$, the value function is a linear functional of the anticipated proxy $\hat{\Phi}_{T|t}$. Let $V(t, T; \hat{X}_{T|t}, \hat{\Phi}_{T|t})$ denote the expected return over the future horizon $[t, T]$:
\begin{equation}
    V(t, T; \hat{X}_{T|t}, \hat{\Phi}_{T|t}) = \mathbb{E}[G_{t:T} \mid \mathcal{A}_t] = \langle \mathbf{w}_{G}, \hat{\Phi}_{T|t} \rangle_{\mathcal{H}_{sig}}
\end{equation}
where $\hat{X}_{T|t}$ the Terminal Path Anticipation and $\hat{\Phi}_{T|t} = \text{Flow}_{t \to T}(\mathbf{1}; \Phi_{t|\mathcal{A}_t})$ is the expected Marcus-signature of the anticipated future path segment. This representation holds for both finite horizons and the infinite limit $T \to \infty$ (provided the inner product converges under the $\gamma$-discounted measure). At the junction $T=t$, the proxy recovers the group identity $\hat{\Phi}_{t|t} = \mathbf{1}$, reflecting that the future segment has zero length, while the generative flow remains grounded in the filtered historical baseline $\Phi_{t|\mathcal{A}_t}$.
\end{proposition}  
See proof in Appendix (\ref{app:proof_linearity_anticipated_value}). \\

\section{Anticipatory reinforcement learning (ARL)}
\label{sec:arl_framework}

In this section, we transition from the retrospective filtered state to the prospective generative state. By conditioning the value function on the path-law proxy $\hat{\Phi}_{s|t}$, we perform a Markovianisation of the future trajectory, effectively treating the anticipated distribution as a deterministic coordinate on the signature manifold. This shift allows the agent to evaluate the "value of anticipation" before the physical realisation of future shocks.

\subsection{The distributional state manifold} 

We utilise the theory of Predictive State Representations (PSRs) to define the state through future observable distributions rather than past histories. This ensures that the agent's internal representation is intrinsically forward-looking.

\begin{definition}[Path-Law Proxy as a PSR]
The path-law proxy $\hat{\Phi}_{s|t} \in \mathcal{V}_{m,s}$ is a continuous-time Predictive State Representation for the future interval $[t, s]$. It provides a sufficient statistic for the distribution of future c\`adl\`ag paths conditioned on the current filtration $\mathcal{A}_t$. The proxy is governed by the mapping $s \mapsto \hat{\Phi}_{s|t} = \text{Flow}_{t \to s}(\mathbf{1}; \Phi_{t|\mathcal{A}_t})$, defining a deterministic trajectory on the signature manifold $\mathcal{S}_{sig}$ that originates at the group identity $\hat{\Phi}_{t|t} = \mathbf{1}$. This flow captures the non-stationary evolution of the underlying jump-diffusion by extending the historical context $\Phi_{t|\mathcal{A}_t}$ into the anticipated future.
\end{definition}

\begin{proposition}[Markovianisation via Topological Embedding] 
\label{pro:markovianisation_topological_embedding}
Let $\mathcal{P}_\theta$ be the topological embedding defined in the latent propagation. The transition dynamics of the anticipated law are Markovian in the lifted space of signature proxies. Specifically, the evolution $d\hat{\Phi}_{s|t} = \psi_\theta(\hat{\Phi}_{s|t}) ds$ satisfies the semi-group property, ensuring that the return distribution $\eta^\pi$ is locally stationary with respect to the proxy coordinate.
\end{proposition}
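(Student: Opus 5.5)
The plan is to read the statement as a claim about an autonomous ODE on the lifted space. Once the latent dynamics are pushed through the embedding $\mathcal{P}_\theta$, the anticipated proxy should obey a time-homogeneous flow. Its solution map $\text{Flow}_{t\to s}$ should then be a semigroup. Stationarity of $\eta^\pi$ in the proxy coordinate would follow because the forward law depends on the past only through the current proxy.

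\textbf{Step 1: build the vector field $\psi_\theta$.} Since $\mathcal{P}_\theta$ is assumed to be a topological embedding, it has a continuous inverse $\mathcal{P}_\theta^{-1}$ on its image $\mathcal{M}:=\mathcal{P}_\theta(\mathcal{Z})\subset\mathcal{H}_{sig}$. For the chain rule I will also need $\mathcal{P}_\theta$ to be a $C^1$ immersion; I will state this explicitly as a regularity hypothesis. In the Anticipatory Latent Propagation, the driver $\hat{X}$ is generated by the drift $\mu_\theta$ of the preceding subsection, so $d\hat X_u=\mu_\theta\,du$. I take this drift to be autonomous, or I absorb the clock $u$ as an extra coordinate, exactly as the time-extended path $\mathbf{X}_s=(s,X_s)$ does. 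The latent equation then becomes an ODE
\begin{equation*}
dZ_s=F_\theta\bigl(Z_s,\mathcal{P}_\theta(Z_s)\bigr)\mu_\theta\bigl(Z_s,\mathcal{P}_\theta(Z_s)\bigr)\,ds=:b_\theta(Z_s)\,ds .
\end{equation*}
Applying the chain rule to $\hat\Phi_{s|t}=\mathcal{P}_\theta(Z_s)$ gives $d\hat\Phi_{s|t}=D\mathcal{P}_\theta(Z_s)\,b_\theta(Z_s)\,ds$. I then define $\psi_\theta:=D\mathcal{P}_\theta\circ b_\theta\circ\mathcal{P}_\theta^{-1}$ on $\mathcal{M}$. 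This is well defined precisely because of injectivity, which is the content of the embedding hypothesis and is in the spirit of Proposition~\ref{pro:injectivity_sufficiency}.

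\textbf{Step 2: semigroup property.} I will assume $F_\theta$ and $\mu_\theta$ are Lipschitz, as is standard for neural vector fields. Picard--Lindelöf then gives a unique global solution of the latent ODE, and uniqueness gives $\text{Flow}_{u\to s}\circ\text{Flow}_{t\to u}=\text{Flow}_{t\to s}$ for the latent flow. Conjugating by $\mathcal{P}_\theta$ transfers this to the proxy flow on $\mathcal{M}$. Because the system is autonomous, $\text{Flow}_{t\to s}$ depends only on $s-t$, which yields a one-parameter semigroup. Together with Chen's identity in the form of Lemma~\ref{lem:incremental_signature_update}, this is consistent with the multiplicative concatenation of signature segments.

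\textbf{Step 3: Markov property and local stationarity.} The proxy trajectory is deterministic and solves an autonomous ODE. The future of $\hat\Phi$ after time $u$ is therefore a measurable function of $\hat\Phi_{u|t}$ alone, which is the Markov property in the lifted space. By Proposition~\ref{pro:linearity_anticipated_value}, together with the Marcus ANJD whose coefficients depend on the state only through $\hat\Phi$, the conditional return law has the form $\eta^\pi(\cdot)=\Lambda(\hat\Phi_{s|t})$ for a fixed map $\Lambda$ that does not depend on $s$. Time-homogeneity of $\psi_\theta$ then gives stationarity in the proxy coordinate. The qualifier ``locally'' is handled by restricting to a neighbourhood on which the truncation $\pi_r$ and the Lipschitz bounds hold uniformly.

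\textbf{Main obstacle.} The hard part is Step 1. The generator $F_\theta(Z_u,\hat\Phi_{u|t})$ and the explicit clock dependence threaten autonomy, and the proxy is not automatically a function of its own value unless $\mathcal{P}_\theta$ is invertible along the flow. My first approach is to lift the clock into the state, using the time-extended path. If that fails to give genuine stationarity, I will weaken the claim to Markovianity on the enlarged space $\mathbb{R}_+\times\mathcal{M}$, with a two-parameter evolution family $\text{Flow}_{u\to s}$ in place of a one-parameter semigroup.
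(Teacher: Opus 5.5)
Your Steps 2 and 3 reproduce the paper's argument. The paper takes $\psi_\theta$ to be a Lipschitz field on $\mathcal{H}_{sig}$ that depends neither on the history nor explicitly on $t$. It gets $\Psi_{s,t}=\Psi_{s,r}\circ\Psi_{r,t}$ from uniqueness of ODE solutions, reads off Markovianity in the lifted space via Proposition~\ref{pro:injectivity_sufficiency}, and obtains local stationarity because the proxy-to-return map is a fixed functional.

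The real difference is your Step 1. The paper never constructs $\psi_\theta$, since the statement supplies it, and it applies uniqueness directly on the proxy space. You instead build $\psi_\theta=D\mathcal{P}_\theta\circ b_\theta\circ\mathcal{P}_\theta^{-1}$ and transport the latent semigroup by conjugation. This buys two things:
\begin{itemize}
\item The embedding hypothesis is genuinely used: injectivity makes $\psi_\theta$ well defined on $\mathcal{M}$, whereas in the paper it plays no role.
\item Uniqueness is needed only for the latent field, so you never require $\psi_\theta$ to be Lipschitz on $\mathcal{M}$, where $\mathcal{P}_\theta^{-1}$ is merely continuous.
\end{itemize}

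The price is that your closed latent ODE is not what the paper's definitions give. There, $\hat X$ is driven by the working latent $\hat Z_u$, which solves the different CDE $d\hat Z_u=F_\theta(u,\hat Z_u)\,d\hat X_u$, and by the frozen $\hat\Phi_{t|t}$. It is not driven by $Z_s$ and $\mathcal{P}_\theta(Z_s)$. Identifying them is an extra assumption, of the same strength as the autonomy the paper simply posits. Without it, the closed state is $(u,\hat Z_u,Z_u)$ and $\hat\Phi$ alone is not Markov.

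Two smaller points:
\begin{itemize}
\item Lifting the clock into the state makes \emph{stationarity} vacuous, because every Markov process is homogeneous once time is a coordinate. Your two-parameter fallback is the honest version of that branch.
\item Proposition~\ref{pro:linearity_anticipated_value} only determines the mean of the return, and the ANJD coefficients also depend on $s$ and $X_s$. So $\eta^\pi=\Lambda(\hat\Phi_{s|t})$ needs the characteristic-property assertion $\eta^\pi=\mathcal{M}(\hat\Phi)$ of Theorem~\ref{thm:distributional_risk_evaluation}, or must be assumed outright, as the paper effectively does.
\end{itemize}
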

See proof in Appendix (\ref{app:proof_markovianisation_topological_embedding}). \\

\subsection{The anticipatory value function (AVF)}

The core of our framework is the value function conditioned on the geometry of anticipated trajectories. Unlike the stationary value function $V(\mathbf{S}_t)$, which looks back at the factual record, the AVF projects the expected return onto the counterfactual manifold.

\begin{definition}[Anticipatory Value Function]
The Anticipatory Value Function (AVF) provides a forecast of future expected returns across a temporal horizon. For $t \le s \le T$, we define the factual value at $s$ and its anticipatory projection at $t$ as follows:

\textit{1. Factual Value at $s$:} The expected return from $s$ to $T$, conditioned on the future filtration $\mathcal{A}_s$, is a linear functional of the future proxy:
\begin{equation}
    V(s, T; \hat{X}_{T|s}, \hat{\Phi}_{T|s}) = \mathbb{E}[G_{s:T} \mid \mathcal{A}_s] = \langle \mathbf{w}_{G}, \hat{\Phi}_{T|s} \rangle_{\mathcal{H}_{sig}}.
\end{equation}

\textit{2. Anticipatory Projection at $t$:} The agent's current estimate of that future value, conditioned on the current filtration $\mathcal{A}_t$, is defined by the nested expectation:
\begin{equation}
    V(s, T; \hat{X}_{T|t}, \hat{\Phi}_{T|t}) := \mathbb{E}_\pi \left[ V(s, T; \hat{X}_{T|s}, \hat{\Phi}_{T|s}) \mid \hat{X}_{T|t}, \hat{\Phi}_{T|t} \right],
\end{equation}
which, by the linearity of the inner product, simplifies to:
\begin{equation}
    V(s, T; \hat{X}_{T|t}, \hat{\Phi}_{T|t}) = \langle \mathbf{w}_{G}, \mathbb{E}_\pi [ \hat{\Phi}_{T|s} \mid \mathcal{A}_t ] \rangle_{\mathcal{H}_{sig}}.
\end{equation}
Here, $\hat{X}_{T|t}$ and $\hat{\Phi}_{T|t}$ serve as sufficient statistics for the non-Markovian dynamics over $[t, T]$. This structure allows the agent to compute a differentiable forecast of the future value $P_s$ using only the current self-consistent path-law.
\end{definition}

\begin{theorem}[$C^1$-Boundary Consistency] 
\label{thm:boundary_consistency}
To ensure the generative flow is a valid extension of the historical record, the Anticipatory Value Function must satisfy the following boundary condition at the junction $s=t$, where the realised history transitions into the anticipated law:
\begin{equation}
    V(t, T; \hat{X}_{T|t}, \hat{\Phi}_{T|t}) = \lim_{s \to t^+} V(s, T; \hat{X}_{T|s}, \hat{\Phi}_{T|s}), \quad \text{and} \quad \nabla_{X} V|_{s=t} = \nabla_{\hat{X}} V|_{s=t}.
\end{equation}
Under the deterministic flow $\hat{\Phi}_{s|t} = \text{Flow}_{t \to s}(\mathbf{1}; \Phi_{t|\mathcal{A}_t})$, this consistency requires that the value functional is differentiable across the signature manifold at the junction. This continuity ensures that the policy gradient remains well-defined as the agent transitions from the realised filtration $\mathcal{A}_t$ to the simulated generative extension, identifying the current anticipated value with the limit of future-conditioned expectations.
\end{theorem}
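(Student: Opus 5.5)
The plan is to reduce both parts of the boundary condition to properties of the deterministic proxy flow $s \mapsto \hat{\Phi}_{T|s}$, and then use linearity of the value functional (Proposition \ref{pro:linearity_anticipated_value}).

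By that proposition and the definition of the Anticipatory Value Function, both sides of the first identity are inner products against the same time-invariant weight $\mathbf{w}_G$:
\begin{equation*}
V(t, T; \hat{X}_{T|t}, \hat{\Phi}_{T|t}) = \langle \mathbf{w}_G, \hat{\Phi}_{T|t} \rangle, \qquad V(s, T; \hat{X}_{T|s}, \hat{\Phi}_{T|s}) = \langle \mathbf{w}_G, \hat{\Phi}_{T|s} \rangle .
\end{equation*}
Since the truncated inner product on $\mathcal{T}^{(r)}(V)$ is continuous, value continuity follows once we show $\hat{\Phi}_{T|s} \to \hat{\Phi}_{T|t}$ in $\mathcal{H}_{sig}$ as $s \to t^+$.

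\textbf{Step 1 (value continuity).} Use Chen's identity to split the anticipated segment as
\begin{equation*}
\hat{\Phi}_{T|t} = \hat{\Phi}_{s|t} \otimes \mathbb{E}[\hat{\Phi}_{T|s} \mid \mathcal{A}_t].
\end{equation*}
In the deterministic SCF regime the inner expectation is just $\hat{\Phi}_{T|s}$, by the semigroup property in Proposition \ref{pro:markovianisation_topological_embedding}. Two facts then finish the step:
\begin{itemize}
\item the proxy starts at the group identity, $\hat{\Phi}_{t|t} = \mathbf{1}$;
\item $\hat{\Phi}_{s|t}$ solves a Neural CDE driven by the absolutely continuous drift $\hat{X}$, so $\|\hat{\Phi}_{s|t} - \mathbf{1}\| = O(s-t)$.
\end{itemize}
Hence $\hat{\Phi}_{T|t} - \hat{\Phi}_{T|s} = (\hat{\Phi}_{s|t} - \mathbf{1}) \otimes \hat{\Phi}_{T|s} \to 0$. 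This uses local boundedness of $\hat{\Phi}_{T|s}$, which holds for bounded-variation drivers through the factorial decay of signature levels.

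\textbf{Step 2 (gradient matching).} Differentiate both functionals with respect to the spatial argument by the chain rule through the latent CDE:
\begin{equation*}
\nabla_X V = \langle \mathbf{w}_G, \partial_X \hat{\Phi}_{T|\cdot} \rangle, \qquad \partial_X \hat{\Phi} = D\mathcal{P}_\theta(Z)\, \partial_X Z .
\end{equation*}
Here $\partial_X Z$ is given by the first-variation equation of the Neural CDE. At $s=t$ the realised and anticipated paths share the initial condition $\hat{X}_t = X_t$ and $\hat{Z}_t = Z_t$. Since $F_\theta$, $\mu_\theta$ and $\mathcal{P}_\theta$ are $C^1$, the flows are differentiable in their initial data, so their Jacobians agree at the junction. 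This is the $C^1$-Boundary Consistency built into the Anticipatory Generative Flow definition.

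\textbf{Main obstacle.} The hardest point is the jump component. If $\Delta N_t \neq 0$ at the junction, the realised proxy is updated by the rectification $\rho_\theta$ while the anticipated flow is not, so continuity can fail. I would handle this in one of two ways:
\begin{itemize}
\item interpret the limit with the càdlàg convention, using the post-jump state $Z_t = \rho_\theta(Z_{t-}, X_t, M_t)$ as the common initial condition;
\item or assume $\Delta N_t = 0$ almost surely at fixed $t$, which holds because a counting process has no fixed jump times.
\end{itemize}
Under Marcus integration the jump acts as a smooth flow map, so differentiability in the initial data survives. The gradient identity then follows from the same first-variation argument applied to the Marcus-lifted vector fields.
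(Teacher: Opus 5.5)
Your Step 2 has a genuine gap, and it is the part that makes the theorem a $C^1$ rather than a $C^0$ statement. Shared initial data ($\hat X_t = X_t$, $\hat Z_t = Z_t$) is only zeroth-order agreement. It gives continuity of the value at the junction, not equality of its gradients.

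The two gradients run along different channels. $\nabla_X V$ propagates through the filtering dynamics that produce $Z_t$ and $\Phi_{t|\mathcal{A}_t}$, namely the field $f_\theta(Z,\pi_r(X))$ and the rectification $\rho_\theta(Z_{t-},X_t,M_t)$. $\nabla_{\hat X} V$ propagates through the anticipated driver $\mu_\theta$ and the field $F_\theta(u,\hat Z)$. The value at the junction, $\langle \mathbf{w}_G,\hat\Phi_{T|t}\rangle$, depends on the whole flow over $[t,T]$. Two different $C^1$ flows issued from the same state generally have different tangents at $s=t$ and different sensitivities over the horizon. Differentiability in the initial data only ensures each Jacobian exists, not that they coincide.

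What is missing is first-order matching of the realised and anticipated dynamics at $s=t$. Your closing appeal to the $C^1$-consistency ``built into'' the definition of the Anticipatory Generative Flow assumes exactly what is to be proved. The paper supplies this ingredient through Marcus-Score Matching: the generator is fitted so that $\partial_s \hat\Phi_{s|t}|_{s=t} = \lim_{dt\to 0}(\mathbb{E}[S(\mathbf{X})_{t,t+dt}\mid\mathcal{A}_{t+dt}]-\mathbf{1})/dt$. That is, the anticipated tangent reproduces the realised infinitesimal signature increment. The $C^1$ regularity of $\mathcal{L}_\theta$ is then used to transfer this agreement to $\nabla_X V|_{s=t} = \nabla_{\hat X} V|_{s=t}$. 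Your first-variation computation needs to go through that condition, or an equivalent one.

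Your Step 1 takes a different route from the paper. The paper gets $C^0$ continuity by substituting the initialisation into the linear representation, written there as $\hat\Phi_{t|t}=\Phi_{t|\mathcal{A}_t}$, which sits uneasily with the statement's $\hat\Phi_{t|t}=\mathbf{1}$. Your Chen-based reduction is more informative and even gives an $O(s-t)$ rate. However, the step ``the inner expectation is just $\hat\Phi_{T|s}$, by the semigroup property'' does not hold as stated, for two reasons:
\begin{itemize}
\item The factual proxy $\hat\Phi_{T|s}$ is conditioned on $\mathcal{A}_s$, so it is not $\mathcal{A}_t$-measurable.
\item The semigroup property of Proposition~\ref{pro:markovianisation_topological_embedding} is the composition law $\Psi_{T,t}=\Psi_{T,s}\circ\Psi_{s,t}$, in which the flow simply continues from $\hat\Phi_{s|t}$. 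It is not the multiplicative factorisation $\hat\Phi_{T|t}=\hat\Phi_{s|t}\otimes\text{Flow}_{s\to T}(\mathbf{1};\cdot)$, which would also require the generator to be left-invariant along the trajectory.
\end{itemize}
Without that identification, your computation shows $\mathbb{E}[\hat\Phi_{T|s}\mid\mathcal{A}_t]\to\hat\Phi_{T|t}$. That is continuity of the anticipatory projection, not the pathwise limit of the factual value in the display.

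For the pathwise limit you can bypass Chen altogether. Write $\hat\Phi_{T|s}=\text{Flow}_{s\to T}(\mathbf{1};\Phi_{s|\mathcal{A}_s})$. Then combine right-continuity of the c\`adl\`ag filtered proxy $\Phi_{s|\mathcal{A}_s}=\mathcal{P}_\theta(Z_s)$ with continuous dependence of the flow on its initial time and conditioning data.

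Finally, of your two treatments of a jump at the junction, only the first is available. A counting process can have fixed jump times: a deterministic observation grid is the typical case here, and $t$ is usually an observation epoch. So use the post-jump state $Z_t=\rho_\theta(Z_{t-},X_t,M_t)$ as the common initial condition; with that convention the right limit is unaffected by the jump.
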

See proof in Appendix (\ref{app:proof_boundary_consistency}). \\

\subsection{Local stationarity and invariance}

The power of ARL lies in recovering stationarity within an inherently non-stationary environment by shifting the frame of reference to the signature manifold.

\begin{proposition}[Manifold-Conditioned Return Distributions]
\label{pro:manifold_conditioned_return_distributions}
The return distribution $\eta^\pi(t, s; \hat{X}_{s|t}, \hat{\Phi}_{s|t})$ is locally stationary on the signature manifold. While the physical environment $X_t$ is non-stationary in $\mathbb{R}^d$, the inclusion of the path-law proxy $\hat{\Phi}_{s|t}$ as a sufficient statistic allows the distribution to shift dynamically. In this framework, non-stationarity is modelled as a deterministic trajectory in the Hilbert space $\mathcal{H}_{sig}$, where the proxy $\hat{\Phi}_{s|t}$ captures the evolution of the underlying jump-diffusive measure.
\end{proposition}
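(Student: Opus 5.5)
We first make the claim precise enough to prove. We read ``locally stationary on the signature manifold'' as follows. There is a single time-homogeneous kernel $\mathcal{K}^\pi$ on $\mathcal{S}_{sig}$ such that
\begin{equation}
\eta^\pi(t, s; \hat{X}_{s|t}, \hat{\Phi}_{s|t}) = \mathcal{K}^\pi(\hat{X}_{s|t}, \hat{\Phi}_{s|t}) .
\end{equation}
That is, the law of the return depends on the pair $(t,s)$ only through the lifted coordinate. The apparent non-stationarity of $X$ in $\mathbb{R}^d$ is then carried entirely by the deterministic curve $s \mapsto \hat{\Phi}_{s|t}$ in $\mathcal{H}_{sig}$. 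The proof therefore has two parts: first show that this curve is deterministic and Markov, then show that conditioning on it removes all remaining explicit dependence on $(t,s)$.

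\emph{Step 1 (deterministic trajectory in $\mathcal{H}_{sig}$).} We invoke Proposition~\ref{pro:markovianisation_topological_embedding}. Under the flow $d\hat{\Phi}_{s|t} = \psi_\theta(\hat{\Phi}_{s|t})\,ds$, started at the identity $\hat{\Phi}_{t|t}=\mathbf{1}$ with historical context $\Phi_{t|\mathcal{A}_t}$, the map $s\mapsto\hat{\Phi}_{s|t}$ is a deterministic curve. It also satisfies the semigroup property
\begin{equation}
\text{Flow}_{u\to s}\circ\text{Flow}_{t\to u} = \text{Flow}_{t\to s}.
\end{equation}
Because the generator $\psi_\theta$ is autonomous on the lifted space (time is already a coordinate of $\mathbf{X}_s=(s,X_s)$), the flow depends only on the starting point in $\mathcal{S}_{sig}$ and the elapsed parameter. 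It does not depend on absolute time. This is the ``deterministic trajectory'' part of the statement.

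\emph{Step 2 (sufficiency implies the law factors through the proxy).} By Proposition~\ref{pro:injectivity_sufficiency}, the augmented state is a sufficient statistic for the conditional law of any path-dependent return given $\mathcal{A}_t$. By Proposition~\ref{pro:linearity_anticipated_value}, expectations of signature-linear functionals are evaluated as $\langle \mathbf{w}, \hat{\Phi}_{\cdot|t}\rangle$. We will extend this from the mean to the whole return distribution. The return moments, or equivalently its characteristic function via $\mathbb{E}[e^{i\xi G}]$, are continuous path functionals. By the universality result of Section~\ref{sec:linear_functional_representation} they can be approximated by linear functionals of the expected signature, so they are determined by $\hat{\Phi}_{s|t}$. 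Combined with the uniqueness of the fixed point of $\mathcal{T}^\pi$ in $w_p$ (the $w_p$-contraction lemma), the distribution $\eta^\pi$ is then a measurable function of $(\hat{X}_{s|t},\hat{\Phi}_{s|t})$ alone. This gives the kernel $\mathcal{K}^\pi$.

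\emph{Step 3 (local stationarity).} Take two times $(t,s)$ and $(t',s')$ whose lifted coordinates coincide. By Step 2 their return distributions coincide. By Step 1 their subsequent proxy trajectories also coincide, so the distributional Bellman recursion
\begin{equation}
\mathcal{T}^\pi\eta = \mathbb{E}\big[(f_{r,\gamma})_\#\eta\big]
\end{equation}
acts identically along them. Stationarity therefore holds in the proxy coordinate. The qualifier ``locally'' enters only because the signature is truncated to degree $r$ by $\pi_r$. In a neighbourhood of the trajectory the approximation error is uniformly $\epsilon$-small on compact sets of paths, so we can only claim $w_p$-closeness of the return laws rather than equality.

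\emph{Main obstacle.} The hard step is the distributional upgrade in Step 2. The expected signature characterises a law only under moment-growth conditions, such as finite radius of convergence or a Carleman-type condition. Heavy-tailed jump-diffusions may violate these. We would first restrict to laws with bounded $p$-th moments, since that is the setting of the contraction lemma. We would also use tensor normalisation (Chevyrev--Lyons) to recover characteristicness, and work with the normalised expected signature in place of the raw one.
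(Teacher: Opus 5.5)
Your skeleton is the paper's own. Sufficiency of the lifted state (Proposition~\ref{pro:injectivity_sufficiency}) plus autonomy and the semigroup property of the proxy flow (Proposition~\ref{pro:markovianisation_topological_embedding}) give a time-invariant map from lifted coordinates to return laws. The paper simply posits this map as $F$; you call it $\mathcal{K}^\pi$, and your Step 2 is more candid than the paper about passing from means to laws.

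There is nonetheless a genuine gap where you, like Step 2 of the paper's proof, lean on autonomy. You start every anticipated flow at $\hat{\Phi}_{t|t}=\mathbf{1}$. This forces a dichotomy, and each branch breaks one of your steps.
\begin{itemize}
\item If $\psi_\theta$ depends on $\hat{\Phi}$ alone, then $\hat{\Phi}_{s|t}=\mathrm{Flow}_{s-t}(\mathbf{1})$ is the same curve for every junction and every history. It carries no information about $\mathcal{A}_t$, so it cannot be the sufficient statistic Step 2 needs.
\item If history enters through the generator, as in $\mathcal{L}_\theta(\hat{\Phi}_{s|t};\Phi_{t|\mathcal{A}_t})$, or through the latent $\hat{Z}_t=Z_t$ driving $\hat{X}$, then two junctions with different histories can pass through the same $(\hat{X}_{s|t},\hat{\Phi}_{s|t})$ and continue differently. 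Your Step 3 claim that ``their subsequent proxy trajectories also coincide'' is then false.
\end{itemize}
Either way, $\mathcal{K}^\pi$ cannot be a function of $(\hat{X}_{s|t},\hat{\Phi}_{s|t})$ alone. The repair is to carry the history in the coordinate. You can adjoin $\Phi_{t|\mathcal{A}_t}$ as a frozen parameter, together with the working latent $\hat{Z}_s$ as an evolving coordinate. Alternatively, run the autonomous flow from $\Phi_{t|\mathcal{A}_t}$ rather than from $\mathbf{1}$; this is the initialisation the paper actually uses in the proofs of Proposition~\ref{pro:markovianisation_topological_embedding} and Theorem~\ref{thm:boundary_consistency}. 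Note also that the time-extended signature over $[t,s]$ records only the elapsed time $s-t$. So ``time is already a coordinate'' does not absorb any dependence of $\mu_\theta$ or $F_\theta$ on the clock $u$ or on the remaining horizon $T-s$; that must be adjoined too.

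There are four further points.

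\textbf{Step 2 depends on Step 1.} The return from $s$ is a functional of the path on $[s,T]$. Universality therefore ties its characteristic function to the expected signature of a segment reaching $T$, such as $\hat{\Phi}_{T|t}$, not to $\hat{\Phi}_{s|t}$. The conclusion ``determined by $\hat{\Phi}_{s|t}$'' follows only after Step 1 has expressed $\hat{\Phi}_{T|t}$ through the current (enlarged) coordinate and $T-s$.

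\textbf{The contraction lemma.} The $w_p$-contraction lemma cannot supply the factorisation either. It presupposes a state that screens off each reward from the subsequent return, and a deterministic proxy coordinate does not do this for sample-path rewards.

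\textbf{``Locally''.} Your reading of ``locally'' as a truncation effect is not the paper's; the paper means stationarity relative to the proxy coordinate. Moreover, deducing $w_p$-closeness from approximation on compact sets of paths needs precisely the tail control you postpone.

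\textbf{Your remedy for characteristicness.} Tensor normalisation is due to Chevyrev--Oberhauser, not Chevyrev--Lyons. Because the normalising dilation depends on the element, it is not multiplicative. The normalised expected signature therefore violates Chen's identity and is not the object governed by Proposition~\ref{pro:markovianisation_topological_embedding}, so Step 1 would have to be postulated afresh for it. The Chevyrev--Lyons tool is the characteristic function $\mathbb{E}[M(S(\mathbf{X}))]$ over unitary representations $M$, which is bounded and pathwise multiplicative. Otherwise, assume infinite radius of convergence of the raw expected signature as an explicit hypothesis, accepting that this excludes the heavy-tailed regime.
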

See proof in Appendix (\ref{app:proof_manifold_conditioned_return_distributions}).

\begin{lemma}[Invariance of the Meta-Policy]
\label{lem:invariance_meta_policy}
Under the ARL framework, the optimal meta-policy $\pi^*$ is stationary with respect to the signature-augmented state $\mathbf{S}_{t,s} = (t, s; \hat{X}_{s|t}, \hat{\Phi}_{s|t})$. The apparent non-stationarity of the optimal action in the original physical space $\mathbb{R}^d$ is fully recovered by the deterministic evolution of the path-law proxy $\hat{\Phi}_{s|t}$ on the signature manifold. Consequently, the policy $\pi^*(\cdot | \mathbf{S}_{t,s})$ remains invariant to external shifts in the environment's drift or volatility, provided these shifts are captured by the self-consistent update of the proxy.
\end{lemma}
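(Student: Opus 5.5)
The plan is to reduce the statement to the classical fact that an optimal policy of an MDP can be taken stationary and Markov in the state. Then we show that the lifted process $\mathbf{S}_{t,s} = (t, s; \hat{X}_{s|t}, \hat{\Phi}_{s|t})$ is such a Markov state.

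First, we invoke Proposition \ref{pro:injectivity_sufficiency} and Proposition \ref{pro:markovianisation_topological_embedding}. Together they say that the proxy is a sufficient statistic for the conditional law of future paths and rewards. They also say that its evolution $d\hat{\Phi}_{s|t} = \psi_\theta(\hat{\Phi}_{s|t})\,ds$ has the semigroup property. From this we define a transition kernel $\tilde P(\cdot \mid \mathbf{S}, a)$ and a reward $\tilde R(\mathbf{S}, a)$ on $\mathcal{S}_{sig}$, both depending only on the lifted state and action. The time coordinates are carried explicitly inside $\mathbf{S}_{t,s}$, so they are part of the state. We then check that $(\mathcal{S}_{sig}, \mathcal{A}, \tilde P, \tilde R, \gamma)$ is a genuine MDP. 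Lemma \ref{lem:incremental_signature_update} supplies this through Chen's identity: the update $\Phi \mapsto \Phi \otimes \mathbb{E}[S(\mathbf{X})_{t,t+dt}\mid\cdot]$ is a measurable map of the current state and the new increment, independent of the earlier filtration.

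Second, Theorem \ref{thm:existence_stationary_vf} gives a Bellman equation on the lifted space. We introduce the optimality operator $(\mathcal{T}^*V)(\mathbf{S}) = \max_a \{\tilde R(\mathbf{S},a) + \gamma \int V\, d\tilde P(\cdot\mid \mathbf{S},a)\}$ on bounded measurable functions. It is a $\gamma$-contraction in sup-norm, with $\gamma<1$ and bounded rewards. Its fixed point $V^*$ therefore exists and is unique. A measurable selection theorem then gives a greedy selector $\pi^*(\cdot\mid \mathbf{S}) \in \arg\max_a$. This selector depends on $\mathbf{S}$ alone and not on calendar time or the raw history, so $\pi^*$ is stationary on $\mathcal{S}_{sig}$. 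To use the selection theorem we need compactness of the action space, or at least upper semicontinuity of $a \mapsto \tilde R + \gamma\int V^* d\tilde P$. We would assume this, or pass to $\epsilon$-optimal selectors.

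Third, we treat the invariance claim. Let an external shift in drift or volatility change the physical law of $X$ while leaving the generative map intact. By hypothesis the self-consistent update absorbs the shift, so it changes only the trajectory $s\mapsto\hat{\Phi}_{s|t}$. The kernel $\tilde P$ and the reward $\tilde R$, written as functions of $(\mathbf{S},a)$, are unchanged. We use Proposition \ref{pro:manifold_conditioned_return_distributions} here: return distributions conditioned on the proxy coordinate are locally stationary. Hence $\mathcal{T}^*$ and its fixed point are unchanged, and so is the greedy map $\mathbf{S}\mapsto\pi^*(\cdot\mid\mathbf{S})$. Only the visited states differ, and this accounts for the apparent non-stationarity of the optimal action in $\mathbb{R}^d$.

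The main obstacle is the first step. One must make precise that the lifted kernel $\tilde P$ is well defined and independent of the shift, that is, that "captured by the self-consistent update" really means the conditional law of $(\tilde R, \mathbf{S}_{t,s+1})$ given $\mathbf{S}_{t,s}$ is invariant. We would state this as the formal content of the hypothesis: there is an SCF fixed point under which the proxy is a sufficient statistic. Sufficiency would then be established via injectivity of the expected signature, using the truncation $\pi_r$ only up to an approximation error that we carry through the contraction as an $O(\epsilon/(1-\gamma))$ perturbation.
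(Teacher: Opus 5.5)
Your proposal follows the paper's proof essentially step for step: the paper likewise reduces to a time-invariant Bellman optimality equation on $\mathcal{S}_{sig}$ (taking time-invariance of $R$ and $P$ directly from Proposition~\ref{pro:manifold_conditioned_return_distributions}), defines $\pi^*$ as the greedy action for the resulting stationary $V^*$, and attributes the apparent non-stationarity in $\mathbb{R}^d$ to the flow of the proxy coordinate. Your additions --- the sup-norm contraction giving existence of $V^*$, the measurable-selection caveat, and reading the proviso that shifts are ``captured by the self-consistent update'' as invariance of $\tilde P$ and $\tilde R$ --- only make explicit what the paper leaves implicit.
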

See proof in Appendix (\ref{app:proof_invariance_meta_policy}). \\

\section{Distributional dynamics and policy evaluation}
\label{sec:distributional_dynamics}

In this section, we formalise the mechanism for evolving the path-law proxy and demonstrate how the anticipatory framework enables the analytical evaluation of path-dependent rewards. By operating on the signature manifold, we reduce the complexity of policy evaluation from high-dimensional Monte Carlo integration to a deterministic "Single-Pass" linear functional evaluation.

\subsection{Self-consistent field (SCF) dynamics and signature matching}

A core challenge in anticipatory modelling is the coupling between the forecasted law and the generated trajectories. We treat this as a \textbf{Self-Consistent Field (SCF)} problem: the path-law proxy $\hat{\Phi}_{s|t}$ parameterises the dynamics of the ANJD sample paths $\tilde{X}_{t:s}$, while the aggregate statistics of those same paths must, in turn, justify the evolution of the proxy. This bi-directional constraint ensures that the latent state propagation is grounded in the physical realisation of the stochastic process across the interval $[t, s]$.

\begin{definition}[Operator-Valued Generator]
The infinitesimal evolution of the anticipated path-law proxy $\hat{\Phi}_{s|t}$ for the future interval $[t, s]$ is governed by the operator $\mathcal{L}_\theta$. This operator represents the differential form of the Neural CDE conditioned on the filtered history $\Phi_{t|\mathcal{A}_t}$:
\begin{equation}
    \frac{\partial}{\partial s} \hat{\Phi}_{s|t} = \mathcal{L}_\theta(\hat{\Phi}_{s|t} ; \Phi_{t|\mathcal{A}_t}), \quad \hat{\Phi}_{t|t} = \mathbf{1},
\end{equation}
where $\mathbf{1}$ is the identity element of the signature group. The generator $\mathcal{L}_\theta$ is learned via \textbf{Marcus-Score Matching}, which ensures the tangent of the proxy trajectory in the RKHS aligns with the expected infinitesimal increments of the future path's Marcus-signature, preserving the geometric structure of the jump-diffusion.
\end{definition}

\begin{definition}[Marcus-Score Matching Objective]
\label{def:marcus_score_matching}
The parameters $\theta$ of the operator-valued generator $\mathcal{L}_\theta$ are optimised by minimising the $L^2$-distance between the predicted tangent and the expected infinitesimal increment of the Marcus-signature. Let $\Delta S(\tilde{\mathbf{X}})_{s, s+\Delta s}$ be the signature increment of an ANJD sample path. The objective functional $\mathcal{J}(\theta)$ is defined as:
\begin{equation}
    \mathcal{J}(\theta) = \mathbb{E}_{\mu_{t:T}(\theta)} \left[ \int_{t}^T \left\| \mathcal{L}_\theta(\hat{\Phi}_{s|t}; \Phi_{t|\mathcal{A}_t}) - \lim_{\Delta s \to 0} \frac{\mathbb{E}[ \Delta S(\tilde{\mathbf{X}})_{s, s+\Delta s} \mid \mathcal{A}_s ]}{\Delta s} \right\|_{\mathcal{H}_{sig}}^2 ds \right].
\end{equation}
By forcing the generator to map into the Lie algebra $\mathfrak{g}_{sig}$ of the signature group, this objective ensures that the deterministic flow $s \mapsto \hat{\Phi}_{s|t}$ remains on the signature manifold $\mathcal{S}_{sig}$ while consistently matching the distribution's non-commutative drift and diffusion characteristics.
\end{definition}

\begin{theorem}[Convergence to the SCF Stationary Point]
\label{thm:convergence_scf_stationary_point}
The parameters of the Neural CDE (governing the proxy) and the ANJD (governing the sample paths) are jointly optimised to satisfy a consistency condition. Let $\tilde{X}_{t:s}^{(i)}$ be i.i.d. sample paths drawn from the ANJD law $\mu_{t:s}(\theta)$ conditioned on $\mathcal{A}_t$. Under the AVNSG metric $\mathcal{Q}_s$, the system converges to a \textbf{stationary point} where the predicted proxy and the empirical average of the sample signatures over the future interval are in equilibrium:
\begin{equation}
    \lim_{n \to \infty} \left\| \frac{1}{n} \sum_{i=1}^n S(\tilde{\mathbf{X}})_{t,s}^{(i)} - \hat{\Phi}_{s|t} \right\|_{\mathcal{Q}_s} = 0.
\end{equation}
At this stationary point, the proxy $\hat{\Phi}_{s|t}$ is a "honest" representation of the future distribution, as it correctly anticipates the non-commutative moments of the trajectories it helps generate.
\end{theorem}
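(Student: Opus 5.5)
The plan is to split the statement into two limits and treat them separately: a statistical limit in $n$ at fixed parameters, and an optimisation limit in $\theta$ that produces the equilibrium. Fix $\theta$ and the horizon $s \in [t,T]$. Let $\mu_{t:s}(\theta)$ be the ANJD law conditioned on $\mathcal{A}_t$. Work in the truncated algebra $\mathcal{T}^{(r)}(V)$ via $\pi_r$, as the paper prescribes, so that every norm is finite-dimensional and all norms are equivalent. In particular $\|\cdot\|_{\mathcal{Q}_s}$ is equivalent to the Euclidean norm, provided the AVNSG metric $\mathcal{Q}_s$ is uniformly positive definite on $[t,T]$; I will assume this.

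\textbf{Step 1 (strong law for signatures).} I will show that $\mathbb{E}_{\mu_{t:s}(\theta)}\|\pi_r S(\tilde{\mathbf{X}})_{t,s}\| < \infty$.
\begin{itemize}
\item The level-$k$ Marcus signature is bounded by a polynomial in the $p$-variation of the path, including its jump sizes.
\item Integrability therefore reduces to finite moments of order $r$ for the ANJD solution of (\ref{eq:ANJD}).
\item These moments follow from Lipschitz/linear-growth assumptions on $f_\theta, g_\theta, h_\theta$ together with a jump measure with finite $r$-th moment. I will state these as standing hypotheses, since heavy tails must be tempered at order $r$.
\end{itemize}
The Kolmogorov strong law, applied coordinatewise, then gives $\frac1n\sum_{i=1}^n \pi_r S(\tilde{\mathbf{X}})^{(i)}_{t,s} \to \mathbb{E}_{\mu_{t:s}(\theta)}[\pi_r S(\tilde{\mathbf{X}})_{t,s}]$ almost surely. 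The quantity in the theorem therefore converges to $\|\mathbb{E}_{\mu_{t:s}(\theta)}[S] - \hat{\Phi}_{s|t}(\theta)\|_{\mathcal{Q}_s}$. What remains is to show that this residual vanishes at the optimised parameters.

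\textbf{Step 2 (zero residual at a minimiser of the Marcus-score objective).} Write $m_s(\theta) := \mathbb{E}_{\mu_{t:s}(\theta)}[S(\tilde{\mathbf{X}})_{t,s}]$.
\begin{itemize}
\item Expected signatures of Marcus jump-diffusions satisfy a forward equation $\partial_s m_s = \mathcal{G}_\theta(m_s)$. Here the right-hand side is the limit term of Definition \ref{def:marcus_score_matching}, averaged over $\mathcal{A}_s$ by the tower property, and $m_t = \mathbf{1}$.
\item The proxy satisfies $\partial_s \hat{\Phi}_{s|t} = \mathcal{L}_\theta(\hat{\Phi}_{s|t})$ with the same initial value $\mathbf{1}$.
\item Suppose $\mathcal{J}(\theta^*)=0$, which is attainable by the universality of signature-linear functionals (Proposition on Signature-Linear Reward Approximation) applied to the generator class. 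Then $\mathcal{L}_{\theta^*}$ coincides with the true tangent along the trajectory for a.e. $s$.
\item Set $e_s = m_s - \hat{\Phi}_{s|t}$. Lipschitz continuity of $\mathcal{L}_\theta$ on the truncated group gives $\|e_s\| \le \int_t^s (\epsilon_u + L\|e_u\|)\,du$, where $\epsilon_u$ is the pointwise score-matching residual. Grönwall's inequality then yields $\sup_s\|e_s\| \le e^{L(T-t)} \sqrt{(T-t)\mathcal{J}(\theta)}$.
\end{itemize}
This bound vanishes at the stationary point. The Markov/semigroup structure of Proposition \ref{pro:markovianisation_topological_embedding} is what makes $\hat{\Phi}$ a well-defined deterministic flow here.

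\textbf{Step 3 (the joint optimisation reaches a stationary point), the main obstacle.} The coupling is genuinely circular: $\mu(\theta)$ depends on $\hat{\Phi}$, which depends on $\theta$. I would first define the SCF map $\Gamma: \hat{\Phi}_{\cdot|t} \mapsto m_\cdot(\theta;\hat{\Phi})$ on $C([t,T], \mathcal{T}^{(r)})$. The aim is to show that $\Gamma$ is a contraction, using the weighted sup-norm $\sup_s e^{-\beta (s-t)}\|\cdot\|$ with large $\beta$; this is a Picard-type argument in which the Lipschitz dependence of the ANJD coefficients on $\hat{\Phi}$ propagates through the moment bounds of Step 1. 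Banach's theorem then gives a unique self-consistent proxy, which is exactly the stationary point. Note that "convergence" of training can then be read as gradient descent on $\mathcal{J}$, which the Grönwall bound of Step 2 controls. The delicate points are:
\begin{itemize}
\item obtaining Lipschitz bounds on the expected Marcus signature with respect to the coefficients in the presence of jumps, where I would use a Marcus flow-map stability estimate (Friz et al.);
\item the role of $\mathcal{Q}_s$, which I expect to enter only through uniform equivalence of norms, so that whitening does not alter the limit.
\end{itemize}
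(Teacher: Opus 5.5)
\textbf{Overall.} Your skeleton is the paper's own: a strong law at fixed $\theta$, then identification of $\hat\Phi_{s|t}$ with $m_s(\theta)$ at equilibrium. You carry it out with more care. Your Step 3 contraction for $\Gamma$ is a genuine addition, since the paper never treats the circular dependence of $\mu(\theta)$ on $\hat\Phi$ and simply asserts the equilibrium.

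\textbf{The gap: $\mathcal{J}(\theta^*)=0$ is not attainable.} This premise of Step 2 fails. In $\mathcal{J}$ the term $\mathcal{L}_\theta(\hat\Phi_{s|t};\Phi_{t|\mathcal{A}_t})$ is deterministic given $\mathcal{A}_t$. The target $\tau_s:=\lim_{\Delta s\to0}\mathbb{E}[\Delta S(\tilde{\mathbf X})_{s,s+\Delta s}\mid\mathcal{A}_s]/\Delta s$ is random, so
\[
\mathcal{J}(\theta)=\int_t^T\bigl\|\mathcal{L}_\theta(\hat\Phi_{s|t};\Phi_{t|\mathcal{A}_t})-\mathbb{E}[\tau_s\mid\mathcal{A}_t]\bigr\|^2_{\mathcal{H}_{sig}}\,ds+\int_t^T\mathbb{E}\bigl[\|\tau_s-\mathbb{E}[\tau_s\mid\mathcal{A}_t]\|^2_{\mathcal{H}_{sig}}\mid\mathcal{A}_t\bigr]\,ds .
\]
Your identification $\mathbb{E}[\tau_s\mid\mathcal{A}_t]=\partial_s m_s$ requires $\Delta S$ to be the increment $S(\tilde{\mathbf X})_{t,s+\Delta s}-S(\tilde{\mathbf X})_{t,s}$, and you should state this reading explicitly. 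The paper's proof instead integrates the local increment $S(\tilde{\mathbf X})_{s,s+\Delta s}-\mathbf 1$ additively. That does not reproduce $m_s$: writing $e_0$ for the time direction, it already misses the term $\tfrac{(s-t)^2}{2}e_0\otimes e_0$.

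Under your reading, $\tau_s=S(\tilde{\mathbf X})_{t,s}\otimes\ell_s$, with $\ell_s$ the local Marcus generator. For a spatial direction $e_i$, its $e_i\otimes e_0$ component is exactly $\tilde X^i_s-\tilde X^i_t$. Consequences:
\begin{itemize}
\item the second term is at least $\int_t^T\mathrm{Var}(\tilde X^i_s-\tilde X^i_t\mid\mathcal{A}_t)\,ds>0$ whenever $r\ge2$ and the noise is non-degenerate;
\item so $\mathcal{J}$ has an irreducible floor, and your bound $e^{L(T-t)}\sqrt{(T-t)\mathcal{J}(\theta)}$ never vanishes.
\end{itemize}
Universality would in any case give only an $\epsilon$-approximation, not an exact zero.

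\textbf{The repair.} Require only the bias term to vanish: $\int_t^T\|\mathcal{L}_\theta(\hat\Phi_{s|t};\Phi_{t|\mathcal{A}_t})-\partial_s m_s\|^2ds=0$, i.e.\ $\mathcal{J}$ minimised over the generator with the law $\mu$ frozen. This is not cosmetic. The floor depends on $\theta$ through $\mu(\theta)$, so jointly minimising $\mathcal{J}$ also rewards shrinking the ensemble's noise. The correct notion of stationarity is therefore the stop-gradient condition, which is exactly your fixed point $\hat\Phi=\Gamma(\hat\Phi)$.

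With this change, Step 2 needs neither of two things you currently use:
\begin{itemize}
\item Gr\"onwall is unnecessary, because the residual is evaluated at $\hat\Phi_{u|t}$ itself, giving $\|e_s\|\le\int_t^s\|\mathcal{L}_\theta(\hat\Phi_{u|t};\Phi_{t|\mathcal{A}_t})-\partial_u m_u\|\,du$ directly;
\item the closed equation $\partial_s m_s=\mathcal{G}_\theta(m_s)$ is unnecessary, and is false for state-dependent coefficients, since $\partial_s m_s=\mathbb{E}[S(\tilde{\mathbf X})_{t,s}\otimes\ell_s\mid\mathcal{A}_t]$ does not close in $m_s$.
\end{itemize}
The cleanest argument is then your Step 1 plus Step 3. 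At the unique fixed point of $\Gamma$ the theorem follows immediately from the strong law. Step 2 then only shows that a Neural-CDE proxy with zero score-matching bias under its own law must coincide with that fixed point.
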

See proof in Appendix (\ref{app:proof_convergence_scf_stationary_point}). \\

\begin{lemma}[Nested Anticipatory Expectations]
\label{lem:nested_anticipatory_expectations}
Given the SCF stationary point, for any intermediate time $t < s < T$, the anticipated proxy satisfies the tower property via Chen's Identity. The conditional expectation of the future proxy is given by the right-residual of the deterministic flow:
\begin{equation}
    \mathbb{E}_\pi \left[ \hat{\Phi}_{T|s} \mid \mathcal{A}_t \right] = \hat{\Phi}_{s|t}^{-1} \otimes \hat{\Phi}_{T|t}.
\end{equation}
This algebraic decomposition ensures that the Anticipatory Value Function $V(s, T; \cdot)$ is consistent with the current path-law $\hat{\Phi}_{T|t}$ without requiring auxiliary sampling at time $s$.
\end{lemma}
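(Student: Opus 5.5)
The plan is to derive the identity from Chen's identity for the Marcus-signature together with the SCF stationary point of Theorem~\ref{thm:convergence_scf_stationary_point}. Write $\mathbf{X}$ for a generic ANJD sample path drawn from $\mu_{t:T}(\theta)$ conditioned on $\mathcal{A}_t$. Chen's identity holds exactly in the Marcus sense for c\`adl\`ag paths, since the Marcus interpretation replaces each jump by a straight-line segment traversed in fictitious time and so preserves the multiplicative property. It gives, pathwise,
\begin{equation*}
S(\mathbf{X})_{t,T} = S(\mathbf{X})_{t,s} \otimes S(\mathbf{X})_{s,T}.
\end{equation*}
The element $S(\mathbf{X})_{t,s}$ is grouplike, so it is invertible in $T((V))$, with inverse given by the signature of the time-reversed segment. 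Hence
\begin{equation*}
S(\mathbf{X})_{s,T} = S(\mathbf{X})_{t,s}^{-1} \otimes S(\mathbf{X})_{t,T}.
\end{equation*}

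The second step is to identify $\hat{\Phi}_{T|s}$, viewed from time $t$, with the conditional expected signature of the segment $[s,T]$. By the PSR definition and Proposition~\ref{pro:markovianisation_topological_embedding}, the semigroup property gives $\hat{\Phi}_{T|s} = \mathbb{E}[S(\mathbf{X})_{s,T}\mid\mathcal{A}_s]$ at the stationary point. The tower property then yields
\begin{equation*}
\mathbb{E}_\pi[\hat{\Phi}_{T|s}\mid\mathcal{A}_t] = \mathbb{E}[S(\mathbf{X})_{s,T}\mid\mathcal{A}_t].
\end{equation*}
Theorem~\ref{thm:convergence_scf_stationary_point} identifies $\hat{\Phi}_{s|t}$ and $\hat{\Phi}_{T|t}$ with $\mathbb{E}[S(\mathbf{X})_{t,s}\mid\mathcal{A}_t]$ and $\mathbb{E}[S(\mathbf{X})_{t,T}\mid\mathcal{A}_t]$ respectively.

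The third step, which I expect to be the main obstacle, is moving the expectation through the product $S(\mathbf{X})_{t,s}^{-1}\otimes S(\mathbf{X})_{t,T}$. In general $\mathbb{E}[A^{-1}\otimes B] \neq \mathbb{E}[A]^{-1}\otimes\mathbb{E}[B]$, because the tensor product is not linear jointly in both factors and inversion is nonlinear. I would attack this as follows.
\begin{itemize}
\item First, write $S(\mathbf{X})_{t,T} = S(\mathbf{X})_{t,s}\otimes S(\mathbf{X})_{s,T}$, so the quantity to compute is exactly $\mathbb{E}[S(\mathbf{X})_{s,T}\mid\mathcal{A}_t]$.
\item Second, invoke the SCF structure: the proxy flow $s\mapsto\hat{\Phi}_{s|t}$ is deterministic, and the generator $\mathcal{L}_\theta$ maps into the Lie algebra $\mathfrak{g}_{sig}$ (Definition~\ref{def:marcus_score_matching}). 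This lets me treat the increments over $[t,s]$ and $[s,T]$ as conditionally independent given the proxy. That is precisely the Markovianisation of Proposition~\ref{pro:markovianisation_topological_embedding}, under which the dependence of the future on the past passes only through the deterministic coordinate $\hat{\Phi}$.
\item Third, under this conditional independence, $\mathbb{E}[S_{t,T}] = \mathbb{E}[S_{t,s}]\otimes\mathbb{E}[S_{s,T}]$, which is the expected-signature Chen identity for processes with independent increments.
\end{itemize}
Right-multiplying by $\hat{\Phi}_{s|t}^{-1}$ then gives the claim. This is legitimate because $\hat{\Phi}_{s|t}$ has scalar component $1$, so it is invertible in $T((V))$ via the geometric series $\sum_k(\mathbf{1}-\hat{\Phi}_{s|t})^{\otimes k}$, which terminates after truncation by $\pi_r$. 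If the independence assumption proves too strong, the fallback is to state the identity as holding at the level of the SCF proxy: define $\mathbb{E}_\pi[\hat{\Phi}_{T|s}\mid\mathcal{A}_t]$ through the deterministic flow, $\text{Flow}_{s\to T}$ composed with $\text{Flow}_{t\to s}$, and use the semigroup property $\hat{\Phi}_{T|t} = \hat{\Phi}_{s|t}\otimes\hat{\Phi}_{T|s}$ directly.

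Finally, I would note the consequence for the Anticipatory Value Function. By linearity of $\langle\mathbf{w}_G,\cdot\rangle$ and the AVF definition, $V(s,T;\hat{X}_{T|t},\hat{\Phi}_{T|t}) = \langle\mathbf{w}_G,\hat{\Phi}_{s|t}^{-1}\otimes\hat{\Phi}_{T|t}\rangle$. This requires only the current flow and no sampling at time $s$. The boundary checks are consistent: at $s=t$ the expression recovers $\hat{\Phi}_{T|t}$ since $\hat{\Phi}_{t|t}=\mathbf{1}$, and at $s=T$ it gives $\mathbf{1}$.
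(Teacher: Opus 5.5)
Your proof follows the paper's own route: Chen's identity, the tower property reducing the left-hand side to $\mathbb{E}[S(\mathbf{X})_{s,T}\mid\mathcal{A}_t]$, the SCF identification of $\hat{\Phi}_{s|t}$ and $\hat{\Phi}_{T|t}$ with expected signatures, the factorisation $\mathbb{E}[S_{t,s}\otimes S_{s,T}\mid\mathcal{A}_t]=\mathbb{E}[S_{t,s}\mid\mathcal{A}_t]\otimes\mathbb{E}[S_{s,T}\mid\mathcal{A}_t]$, and inversion. You are right that the factorisation is the crux, but your justification for it fails. The paper's proof has the same gap: its step~4 simply asserts that ``for the self-consistent field, the expectation of the product equates to the product of expectations''.

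Markovianisation does not give conditionally independent increments. The Markov property only makes $\mathbb{E}[S_{s,T}\mid\mathcal{F}_s]$ a function of the state at time $s$, and that random quantity is generally correlated with $S_{t,s}$. Here is a concrete failure. Let $\mathbf{X}_u=(u,X_u)$ with $dX_u=-\kappa X_u\,du+dW_u$ and $X_t=0$. This is an admissible ANJD with state-dependent drift; it is Markov, and its expected-signature flow is deterministic. In the $XX$ entry of level two, Chen gives $S^{XX}_{t,T}=S^{XX}_{t,s}+S^{X}_{t,s}S^{X}_{s,T}+S^{XX}_{s,T}$, so the defect in the factorisation is
\[
\mathrm{Cov}\bigl(X_s-X_t,\;X_T-X_s\bigr)=\bigl(e^{-\kappa(T-s)}-1\bigr)\mathrm{Var}(X_s)\neq 0 .
\]
Left multiplication by the invertible element $\hat{\Phi}_{s|t}$ is a bijection, so the lemma's identity fails for this process even at an exact SCF point. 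The statement therefore needs an extra hypothesis; no argument from SCF or Markovianisation alone can close this step.

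A sufficient condition is that $\mathbb{E}[S_{s,T}\mid\mathcal{F}_s]$ be $\mathcal{A}_t$-measurable. Here $\mathcal{F}_s$ must be the filtration of the \emph{generated} path, generated by $\mathcal{A}_t$ and the simulated noise on $[t,s]$. The observational $\mathcal{A}_s$ is the wrong $\sigma$-algebra; you share this conflation with the paper. For example, you can require $f_\theta,g_\theta,h_\theta$ and the jump intensity to depend on $(s,\hat{\Phi}_{s|t})$ but not on $X$. The increments are then conditionally independent given $\mathcal{A}_t$, so $\mathbb{E}[S_{t,s}\otimes\mathbb{E}[S_{s,T}\mid\mathcal{F}_s]\mid\mathcal{A}_t]$ factorises by bilinearity of $\otimes$ level by level, and the rest of your argument goes through.

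Your fallback does not rescue the general case. The semigroup property of an ODE flow is the composition rule $\Psi_{T,t}=\Psi_{T,s}\circ\Psi_{s,t}$, not the tensor factorisation $\hat{\Phi}_{T|t}=\hat{\Phi}_{s|t}\otimes\hat{\Phi}_{T|s}$. The latter holds for a left-invariant equation $\partial_u\hat{\Phi}=\hat{\Phi}\otimes\xi(u)$ with $\xi$ independent of $\hat{\Phi}$, which is precisely the structure independent increments produce. Moreover, defining the left-hand side through the flow would detach it from $\mathbb{E}_\pi$, which is what the downstream results rely on.

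Two minor points. The last step is \emph{left} multiplication by $\hat{\Phi}_{s|t}^{-1}$, not right multiplication. Your invertibility argument (scalar part $1$, terminating geometric series) is the right one, and sounder than the paper's appeal to group membership, since an expected signature is generally not group-like.
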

See proof in Appendix (\ref{app:proof_nested_anticipatory_expectations}). \\

\subsection{Analytical policy evaluation: The single-pass Monte Carlo}

Once the SCF consistency is established, the agent can bypass discrete path sampling during policy evaluation. The signature property allows for an exact "Single-Pass" evaluation by integrating reward functionals directly against the anticipatory proxy. This reduces the evaluation of expected rewards to a deterministic inner product in the Hilbert space, effectively utilising the proxy as a sufficient statistic for the future law.

\begin{proposition}[Deterministic Proxy Integration]
\label{pro:deterministic_proxy_integration}
Let $G_{s:T}$ be a path-dependent reward functional over the future sub-interval $[s, T]$ for $t \le s \le T$. By the universal property of signatures, there exists a weight vector $\mathbf{w}_{G} \in \mathcal{H}_{sig}$ such that the factual value at $s$ is $V(s, T; \cdot) = \langle \mathbf{w}_{G}, \hat{\Phi}_{T|s} \rangle$. In the anticipatory setting, the agent's estimate of this future value, projected from the current filtration $\mathcal{A}_t$, is evaluated as:
\begin{equation}
    V(s, T; \hat{X}_{T|t}, \hat{\Phi}_{T|t}) = \mathbb{E}_{\pi} \left[ \langle \mathbf{w}_{G}, \hat{\Phi}_{T|s} \rangle \mid \mathcal{A}_t \right] = \langle \mathbf{w}_{G}, \hat{\Phi}_{s|t}^{-1} \otimes \hat{\Phi}_{T|t} \rangle_{\mathcal{H}_{sig}},
\end{equation}
where $\hat{X}_{T|t}$ is the Terminal Path Anticipation and $\hat{\Phi}_{T|t} = \text{Flow}_{t \to T}(\mathbf{1}; \Phi_{t|\mathcal{A}_t})$. This identity, facilitated by Lemma \ref{lem:nested_anticipatory_expectations}, allows the agent to analytically compute the expected value of future states ($P_{t+1}, P_{t+2}, \dots$) using only the current signature proxy. The $O(N)$ sampling of future value distributions is thus reduced to an $O(1)$ algebraic operation on the signature manifold.
\end{proposition}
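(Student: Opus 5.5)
The plan is to assemble the identity from three earlier ingredients: the signature-linear representation of $G_{s:T}$, linearity of the inner product with respect to conditional expectation, and Lemma \ref{lem:nested_anticipatory_expectations}.

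\textbf{Step 1: factual value.} Apply the Signature-Linear Reward Approximation to the restriction of $G$ to $[s,T]$. This gives a weight $\mathbf{w}_G$ with $G_{s:T}(\mathbf{X}) \approx \langle \mathbf{w}_G, S(\mathbf{X})_{s,T}\rangle$. Conditioning on $\mathcal{A}_s$ and invoking Proposition \ref{pro:linearity_anticipated_value}, with $t$ replaced by $s$, yields the factual value $V(s,T;\cdot)=\langle \mathbf{w}_G,\hat{\Phi}_{T|s}\rangle$.

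The key point in this step is that one $\mathbf{w}_G$ serves every $s\in[t,T]$. I would justify this by taking $\mathbf{w}_G$ to depend only on the functional form of the reward, not on the window. This is legitimate when the reward is time-homogeneous in the time-extended path $\mathbf{X}_u=(u,X_u)$, because the signature is invariant under translation of the path.

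\textbf{Step 2: project to time $t$.} By the definition of the Anticipatory Projection, $V(s,T;\hat X_{T|t},\hat\Phi_{T|t})=\mathbb{E}_\pi[\langle \mathbf{w}_G,\hat\Phi_{T|s}\rangle\mid\mathcal{A}_t]$. The map $\langle \mathbf{w}_G,\cdot\rangle$ is continuous and linear on $\mathcal{H}_{sig}$, or on the truncated algebra $\mathcal{T}^{(r)}(V)$ after applying $\pi_r$. So it commutes with the conditional (Bochner) expectation, provided $\mathbb{E}\|\hat\Phi_{T|s}\|<\infty$. In the truncated, finite-dimensional setting this integrability follows from moment bounds on the Marcus signature of the ANJD. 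Commuting the two gives $\langle \mathbf{w}_G,\mathbb{E}_\pi[\hat\Phi_{T|s}\mid\mathcal{A}_t]\rangle$.

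\textbf{Step 3: apply the nested lemma.} Lemma \ref{lem:nested_anticipatory_expectations}, valid at the SCF stationary point of Theorem \ref{thm:convergence_scf_stationary_point}, replaces the inner expectation by $\hat\Phi_{s|t}^{-1}\otimes\hat\Phi_{T|t}$. This is Chen's identity $S_{t,T}=S_{t,s}\otimes S_{s,T}$ read at the level of the deterministic proxy flow. Because $\hat\Phi_{s|t}$ is group-like, the flow $\text{Flow}_{t\to s}$ keeps it in the signature group via the Lie-algebra constraint of Definition \ref{def:marcus_score_matching}. Its inverse therefore exists in $T((V))$, and after truncation $\pi_r(\hat\Phi_{s|t}^{-1}\otimes\hat\Phi_{T|t})$ is well defined.

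\textbf{Complexity claim.} The computation now involves only the two deterministic objects $\hat\Phi_{s|t}$ and $\hat\Phi_{T|t}$, both produced by a single integration of the proxy flow. Evaluation therefore costs one truncated tensor product plus one inner product. This cost is independent of the number $N$ of sample paths, which gives the stated $O(1)$ claim.

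\textbf{Main obstacle.} The delicate step is the interchange between the approximation and the expectation, together with the $s$-uniformity of $\mathbf{w}_G$. The universal approximation holds only on compact sets of paths, and only up to $\epsilon$, whereas the identity is stated exactly. I would first establish a uniform $\epsilon$-bound under $\mu_{t:T}$ by splitting on a high-probability compact set of paths and controlling the tail by signature moment bounds. I would then interpret the equality in the proposition as holding up to this controllable error, or exactly once the reward is taken to be signature-linear by definition.
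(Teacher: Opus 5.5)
Your argument follows the paper's proof essentially step for step: signature-linear factual value at $s$, commuting $\langle \mathbf{w}_G,\cdot\rangle$ with $\mathbb{E}_\pi[\,\cdot\mid\mathcal{A}_t]$, substitution via Lemma~\ref{lem:nested_anticipatory_expectations}, and a cost independent of the number of paths; your remarks on integrability, $s$-uniformity of $\mathbf{w}_G$ and the $\epsilon$-approximation are refinements the paper omits, though the signature moment bounds you invoke are an extra assumption on the jump law, not automatic under heavy tails. One small correction: $\hat\Phi_{s|t}$ is invertible simply because its scalar component is $1$, in $T((V))$ and in $\mathcal{T}^{(r)}(V)$ alike, so you do not need group-likeness, which an expected signature generally lacks.
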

See proof in Appendix (\ref{app:proof_deterministic_proxy_integration}). \\

\begin{theorem}[Distributional Risk Evaluation] 
\label{thm:distributional_risk_evaluation}
Let $\rho: \mathcal{P}(\mathbb{R}) \to \mathbb{R}$ be a risk-functional (e.g., CVaR) acting on the law $\eta^\pi$ of the path-dependent return. Since the anticipated path-law proxy $\hat{\Phi}_{T|t}$ uniquely determines the moments of the future process, the return distribution is a functional of the proxy, $\eta^\pi = \mathcal{M}(\hat{\Phi}_{T|t})$, rendering the risk-adjusted value a composite mapping $\rho(\eta^\pi) = (\rho \circ \mathcal{M})(\hat{\Phi}_{T|t})$. Under the Nystr\"om-compressed proxy representation, this value is an analytically differentiable function of the proxy coordinates:
\begin{equation}
    \nabla_\theta \rho(\eta^\pi) = \frac{\partial \rho}{\partial \hat{\Phi}_{T|t}} \cdot \frac{\partial \hat{\Phi}_{T|t}}{\partial \Phi_{t|\mathcal{A}_t}} \cdot \nabla_\theta \Phi_{t|\mathcal{A}_t}.
\end{equation}
This allows the agent to optimise policies directly against tail-risk or volatility-regime shifts, leveraging the manifold gradients provided by the deterministic flow of the path-law proxy $s \mapsto \hat{\Phi}_{s|t}$ originating at the junction $s=t$.
\end{theorem}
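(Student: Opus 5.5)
The argument has three links: the proxy determines the return law, a risk functional of that law is differentiable in the proxy, and the chain rule along the deterministic flow gives the stated gradient.

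\emph{Step 1: the return law as a functional of the proxy.} By Proposition \ref{pro:deterministic_proxy_integration} and the signature-linear reward approximation, the return satisfies $G_{t:T} \approx \langle \mathbf{w}_G, S(\mathbf{X})_{t,T}\rangle$. Its $k$-th moment is therefore
\[
\mathbb{E}[G_{t:T}^k] \approx \langle \mathbf{w}_G^{\shuffle k}, \hat{\Phi}_{T|t}\rangle ,
\]
obtained from the shuffle product identity $\langle a, S\rangle\langle b, S\rangle = \langle a \shuffle b, S\rangle$ and linearity of expectation. Every moment of $\eta^\pi$ is thus a linear functional of $\hat{\Phi}_{T|t}$. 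I will assume either that the expected signature has infinite radius of convergence (Chevyrev et al.), or, in the truncated setting, that the law is moment-determinate; under either assumption, Carleman's condition fixes $\eta^\pi$ uniquely from these moments. This defines a map $\mathcal{M}$ with $\eta^\pi = \mathcal{M}(\hat{\Phi}_{T|t})$, well defined by the injectivity in Proposition \ref{pro:injectivity_sufficiency}.

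\emph{Step 2: differentiability of $\rho \circ \mathcal{M}$.} This is the main obstacle, because CVaR is not smooth on all of $\mathcal{P}(\mathbb{R})$. On the Nystr\"om-compressed, truncated representation, $\hat{\Phi}_{T|t}$ lives in a finite-dimensional space $\mathcal{T}^{(r)}$, so $\mathcal{M}$ can be taken as a finite-parameter moment-matching map (for instance a maximum-entropy density with the given moments). That map is smooth in the moment coordinates wherever its Hessian is nondegenerate. For CVaR I would use the Rockafellar--Uryasev representation
\[
\mathrm{CVaR}_\alpha = \min_c \bigl\{ c + \alpha^{-1}\mathbb{E}[(c - G)^+] \bigr\}.
\]
Provided the quantile at level $\alpha$ is unique (continuous distribution with positive density there), Danskin's theorem makes $\rho$ differentiable in the density. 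Composing the two gives a Fr\'echet derivative $\partial \rho / \partial \hat{\Phi}_{T|t}$.

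\emph{Step 3: the chain rule along the flow.} By the operator-valued generator definition and Proposition \ref{pro:markovianisation_topological_embedding}, $\hat{\Phi}_{T|t}$ solves an ODE whose vector field $\mathcal{L}_\theta$ depends on the parameter $\Phi_{t|\mathcal{A}_t}$. Standard ODE parameter-dependence results (smooth Neural CDE vector fields, Lipschitz on the truncated manifold) give a differentiable map $\Phi_{t|\mathcal{A}_t} \mapsto \hat{\Phi}_{T|t}$. Its Jacobian solves the variational equation started from zero at $s=t$, since $\hat{\Phi}_{t|t}=\mathbf{1}$ does not depend on $\Phi_{t|\mathcal{A}_t}$. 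The filtered proxy $\Phi_{t|\mathcal{A}_t} = \mathcal{P}_\theta(Z_t)$ is in turn differentiable in $\theta$ through the jump-flow CDE, by the same argument applied between observations and smoothness of $\rho_\theta$ at the jumps.

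Multiplying these three derivatives yields the stated formula. The caveat is that this formula captures only the dependence of $\theta$ routed through $\Phi_{t|\mathcal{A}_t}$. The direct dependence of $\mathcal{L}_\theta$ on $\theta$ contributes an extra term, which I would either add explicitly or set aside by treating the generator parameters as frozen at the SCF stationary point of Theorem \ref{thm:convergence_scf_stationary_point}.
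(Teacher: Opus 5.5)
Your three links are the same as the paper's proof: the proxy determines the return law, $\rho$ is differentiable in the proxy, and the chain rule runs through the flow with the Jacobian from adjoint or variational equations. Your caveat about the missing direct $\theta$-dependence of $\mathcal{L}_\theta$ is also correct. The paper's proof has the same omission, even though its own Theorem~\ref{thm:analytical_sensitivity_generalised} differentiates $\hat\Phi_{T|t}$ in $\theta$ through the generator.

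The genuine gap is that your Steps 1 and 2 use two different maps $\mathcal{M}$ that never meet. In the truncated, Nystr\"om-compressed setting where you establish differentiability, the proxy carries $\mathbb{E}[G^k]$ only for $k\deg\mathbf{w}_G\le r$, because the $k$-th shuffle power of $\mathbf{w}_G$ has degree $k\deg\mathbf{w}_G$. With $r=4$ and $\mathbf{w}_G$ of full degree this is the mean alone, and the compression keeps only $M$ functionals of it. Finitely many moments never determine a law with infinite support, since a finitely atomic measure always matches them. So no determinacy hypothesis makes the true $\eta^\pi$ a function of $\pi_r\hat\Phi_{T|t}$. Your maximum-entropy map is therefore a modelling substitution, one that does not even exist on $\mathbb{R}$ when only the mean is prescribed. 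What you differentiate is $\rho$ of a surrogate law, not $\rho(\eta^\pi)$.

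In the untruncated setting the identification does hold, but not via Carleman. Brownian motion has an expected signature with infinite radius of convergence. Yet $G=\langle e_1\otimes e_1\otimes e_1, S(W)_{0,T}\rangle=(W^1_T)^3/6$ is the cube of a Gaussian, which is moment-indeterminate (Berg, 1988). The correct argument is Chevyrev--Lyons, under which equal expected signatures with infinite radius of convergence give equal laws of $S$, followed by push-forward to $G$. In that setting, however, your finite-dimensional max-entropy/Danskin argument is unavailable. Nothing then shows that $\mathcal{M}$ is Fr\'echet differentiable on the set of expected signatures, and the inverse moment problem is notoriously unstable.

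To close the proof you have two options:
\begin{enumerate}
\item Assume that $\eta^\pi$ lies in a finite-parameter family identified by the moments the truncated proxy actually carries, and state the result for that family. Already for a Gaussian family this needs $\deg\mathbf{w}_G\le r/2$.
\item Take differentiability of $\rho\circ\mathcal{M}$ as a hypothesis. This is effectively what the paper does when it asserts that $\rho$ is ``Fr\'echet differentiable with respect to the moments''.
\end{enumerate}

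A minor point: as written, $c+\alpha^{-1}\mathbb{E}[(c-G)^+]\to-\infty$ as $c\to-\infty$. For the lower tail of a reward, the Rockafellar--Uryasev form is $\min_c\{-c+\alpha^{-1}\mathbb{E}[(c-G)^+]\}$.
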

See proof in Appendix (\ref{app:proof_distributional_risk_evaluation}). \\

\section{Learning via temporal and horizon consistency}
\label{sec:learning_temporal_horizon_consistency}

In this section, we transition from the formal definition of the Anticipatory Value Function (AVF) to its practical applications in the learning and control process. By leveraging the analytical structure of the signature manifold and the self-consistent proxy $\hat{\Phi}_{s|t}$, we demonstrate how the AVF framework improves training efficiency through variance reduction and enables proactive risk management via differentiable manifold sensitivities. We establish that the algebraic properties of the signature group allow for a unified representation of the value function that remains consistent across the entire anticipated horizon.

\subsection{The algebraic basis for anticipatory learning}

Standard Temporal Difference (TD) learning relies on a bootstrap estimate from the subsequent realised state. In non-stationary environments characterised by structural breaks, this estimate often suffers from high variance. We propose the \textit{Anticipatory TD-Update}, which leverages the generative flow to provide a high-fidelity, distributionally-aware target.
The primary challenge in estimating the rolling sequence of anticipatory predictions $\{\mathbb{E}_{\mu_{s:T}} [G_{s:T}]\}_{s=t}^T$ lies in the time-varying nature of the integration domain $[s, T]$. While the Signature-Linear Reward Approximation provides a mechanism for a fixed interval $[t, T]$, a naive application requires a distinct weight vector $\mathbf{w}_G^*(s)$ for every discrete $s \in [t, T]$. We resolve this through the \textbf{Anticipatory TD(0)} protocol, which leverages the algebraic properties of the signature group to achieve horizon-consistent learning with a single set of parameters.

\begin{proposition}[Algebraic Shift and Weight Invariance]
\label{pro:algebraic_shift_weight_invariance}
Given the self-consistent field (SCF) stationary point, the value function at any intermediate time $s \in [t, T]$ is determined by the right-residual of the deterministic flow. Under Chen's Identity, the anticipated reward is represented as:
\begin{equation}
    V(s, T; \hat{X}_{T|t}, \hat{\Phi}_{T|t}) = \langle \mathbf{w}_G, \hat{\Phi}_{s|t}^{-1} \otimes \hat{\Phi}_{T|t} \rangle_{\mathcal{H}_{sig}},
\end{equation}
where $\mathbf{w}_G$ is a time-invariant weight vector in $\mathcal{H}_{sig}$. The inversion $\hat{\Phi}_{s|t}^{-1}$ and tensor product $\otimes$ effectively re-center the path-law proxy at the identity of the signature group $\mathcal{S}_{sig}$ for the sub-interval $[s, T]$, mapping the global forecast into a sequence of local evaluations.
\end{proposition}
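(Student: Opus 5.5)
The plan is to reduce the statement to Lemma \ref{lem:nested_anticipatory_expectations} and Proposition \ref{pro:deterministic_proxy_integration}. The only genuinely new content is that a single, $s$-independent weight vector $\mathbf{w}_G$ serves every sub-interval $[s,T]$. The argument runs in three steps: fix the reward representation on a canonical interval, transport it to each sub-interval via Chen's identity, and then take conditional expectations.

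\textbf{Step 1: a single weight vector.} Since the time-extended path $\mathbf{X}_u=(u,X_u)$ carries its clock as a coordinate, the reward $G_{s:T}$ on $[s,T]$ can be viewed as one continuous functional $G$ applied to the path segment. This view requires that $G_{s:T}$ depends on $s$ only through the segment and its time coordinate, a stationarity-of-reward assumption that I will state explicitly. With that in hand, Proposition (Signature-Linear Reward Approximation), applied on the compact set of segments, gives one $\mathbf{w}_G$ with $|G_{s:T}(\mathbf{X})-\langle \mathbf{w}_G, S(\mathbf{X})_{s,T}\rangle|<\epsilon$ uniformly in $s$.

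\textbf{Step 2: re-centring via Chen's identity.} Pathwise, Chen's identity gives
\begin{equation}
S(\mathbf{X})_{s,T}=S(\mathbf{X})_{t,s}^{-1}\otimes S(\mathbf{X})_{t,T}.
\end{equation}
So the local signature is the global one re-centred at the identity $\mathbf{1}$. On the truncated algebra $\mathcal{T}^{(r)}(V)$, the inverse is the finite series $\sum_k(\mathbf{1}-\cdot)^{\otimes k}$, which is well defined because the degree-zero term equals $1$.

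\textbf{Step 3: taking expectations.} Conditioning on $\mathcal{A}_t$ and using linearity of the inner product, the anticipated value is $\langle \mathbf{w}_G,\mathbb{E}[\hat{\Phi}_{T|s}\mid\mathcal{A}_t]\rangle$. Lemma \ref{lem:nested_anticipatory_expectations} then replaces this expectation with $\hat{\Phi}_{s|t}^{-1}\otimes\hat{\Phi}_{T|t}$, which gives the displayed identity. The same computation is already recorded in Proposition \ref{pro:deterministic_proxy_integration}. Because the right-hand side involves $\mathbf{w}_G$ only through a fixed linear pairing, the weight is time-invariant: all $s$-dependence sits in the group element $\hat{\Phi}_{s|t}^{-1}\otimes\hat{\Phi}_{T|t}$. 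At $s=t$ this element reduces to $\hat{\Phi}_{T|t}$, recovering Proposition \ref{pro:linearity_anticipated_value}; at $s=T$ it reduces to $\mathbf{1}$, giving a zero-length residual value.

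\textbf{Main obstacle.} The hard step is justifying the exchange of expectation with the nonlinear operations of inversion and tensor product. In general $\mathbb{E}[S_{t,s}^{-1}\otimes S_{t,T}]\neq \mathbb{E}[S_{t,s}]^{-1}\otimes\mathbb{E}[S_{t,T}]$, so the identity is not automatic. Lemma \ref{lem:nested_anticipatory_expectations} delivers it only at the SCF stationary point of Theorem \ref{thm:convergence_scf_stationary_point}, where the proxy flow is deterministic and the semigroup property of Proposition \ref{pro:markovianisation_topological_embedding} holds, so that $\hat{\Phi}_{T|t}=\hat{\Phi}_{s|t}\otimes\hat{\Phi}_{T|s}$. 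I would therefore make the dependence on this SCF hypothesis explicit. Specifically, I would show the flow-level Chen relation $\hat{\Phi}_{T|t}=\hat{\Phi}_{s|t}\otimes\mathrm{Flow}_{s\to T}(\mathbf{1})$ by uniqueness of solutions to $\partial_s\hat{\Phi}=\mathcal{L}_\theta(\hat{\Phi})$ under left translation, which holds because the generator maps into the Lie algebra $\mathfrak{g}_{sig}$. Left-multiplying by $\hat{\Phi}_{s|t}^{-1}$ then isolates the residual, rather than trying to prove commutation of expectation and product in general.
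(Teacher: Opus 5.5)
Your skeleton is the paper's. You use one $\mathbf{w}_G$ from the signature-linear approximation, a proxy-level Chen identity $\hat{\Phi}_{T|t}=\hat{\Phi}_{s|t}\otimes\hat{\Phi}_{T|s}$, and left multiplication by $\hat{\Phi}_{s|t}^{-1}$. Two of your refinements improve on the paper. The first is the explicit stationarity hypothesis on $G_{s:T}$; state it as invariance under translation of the segment, since $S(\mathbf{X})_{s,T}$ sees $T-s$ but neither $s$ nor $X_s$. The second is the Neumann-series inverse, which needs only unit scalar part. This matters because an expected signature is not group-like once the increment is random: its symmetric degree-two part differs from $\tfrac12$ the square of its degree-one part by $\tfrac12$ the covariance of the increment.

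The gap is in the step you flag and then claim to resolve. Left translation carries solutions of $\partial_u\hat{\Phi}=\mathcal{L}_\theta(\hat{\Phi})$ to solutions only if $\mathcal{L}_\theta(g\otimes\Phi)=g\otimes\mathcal{L}_\theta(\Phi)$, i.e.\ $\mathcal{L}_\theta(\Phi)=\Phi\otimes\xi_u$ with $\xi_u$ independent of $\Phi$. Lie-algebra-valuedness gives no such equivariance. The paper's generator is a general state-dependent field $\psi_\theta(\hat{\Phi})$ conditioned on $\Phi_{t|\mathcal{A}_t}$. Worse, by the covariance remark, a flow confined to the group cannot reproduce the expected signature of a genuinely stochastic ensemble, so that hypothesis clashes with the SCF condition itself.

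More fundamentally, even a flow-level Chen relation only gives $\hat{\Phi}_{s|t}^{-1}\otimes\hat{\Phi}_{T|t}=\mathrm{Flow}_{s\to T}(\mathbf{1})$. That is a statement about the ODE, and SCF identifies the proxy with $\mathbb{E}[S(\mathbf{X})_{t,u}\mid\mathcal{A}_t]$ only along the flow started at $t$. The quantity entering $V(s,T;\cdot)$ is $\mathbb{E}[\hat{\Phi}_{T|s}\mid\mathcal{A}_t]=\mathbb{E}[S(\mathbf{X})_{s,T}\mid\mathcal{A}_t]$ (tower property). Using pathwise Chen and SCF, the required identity $\mathbb{E}[S(\mathbf{X})_{s,T}\mid\mathcal{A}_t]=\hat{\Phi}_{s|t}^{-1}\otimes\hat{\Phi}_{T|t}$ is equivalent to
\[
\mathbb{E}[S(\mathbf{X})_{t,s}\otimes S(\mathbf{X})_{s,T}\mid\mathcal{A}_t]=\mathbb{E}[S(\mathbf{X})_{t,s}\mid\mathcal{A}_t]\otimes\mathbb{E}[S(\mathbf{X})_{s,T}\mid\mathcal{A}_t].
\]
This is exactly the exchange of expectation and product you wanted to avoid, and no manipulation of the ODE can bypass it.

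The identity genuinely fails. At degree two the discrepancy is the conditional covariance of the increments $X_s-X_t$ and $X_T-X_s$. For an Ornstein--Uhlenbeck drift this equals $(e^{-(T-s)}-1)\,\mathrm{Var}(X_s\mid\mathcal{A}_t)\neq 0$, so the identity fails for ANJD with state-dependent coefficients $f_\theta(s,X_s,\cdot)$. It must be imposed as a hypothesis, for instance conditionally independent increments given $\mathcal{A}_t$ (coefficients depending only on $s$ and the deterministic proxy). Under that hypothesis the expected-signature flow is left-invariant, $\partial_u\hat{\Phi}_{u|t}=\hat{\Phi}_{u|t}\otimes\xi_u$, and your translation argument goes through.

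The paper does not close this either. Its proof simply asserts the multiplicative property ``for the deterministic proxy''. Lemma~\ref{lem:nested_anticipatory_expectations}, which your Step 3 cites, rests on the same unproved factorisation in its Step 4, so citing it does not help.
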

See proof in Appendix (\ref{app:proof_algebraic_shift_weight_invariance}). \\

\begin{definition}[Anticipatory TD-Error]
Let $\mathbf{S}_t = (t, T; X_t, \Phi_{t|\mathcal{A}_t})$ be the current filtered state for a horizon $T$. The anticipatory TD-error $\delta_t^A$ is defined as the discrepancy between the reward realised along the generative drift and the projected AVF at the next discrete step:
\begin{equation}
    \delta_t^A := R(\hat{X}_{t:t+1}) + \gamma V(t+1, T; \hat{X}_{T|t}, \hat{\Phi}_{T|t}) - V(t, T; \hat{X}_{T|t}, \hat{\Phi}_{T|t}),
\end{equation}
where $R(\hat{X}_{t:t+1})$ is the reward functional evaluated along the conditional mean skeleton $\hat{X}$. To maintain a single-pass analytical evaluation, $R$ is treated as a signature-linear functional, $R(\hat{X}_{t:t+1}) = \langle \mathbf{w}_R, \hat{\Phi}_{t+1|t} \rangle$, ensuring that the reward itself is a projection of the anticipated path-law. The projected future value $V(t+1, T; \hat{X}_{T|t}, \hat{\Phi}_{T|t}) = \langle \mathbf{w}_{G}, \hat{\Phi}_{t+1|t}^{-1} \otimes \hat{\Phi}_{T|t} \rangle$ utilises the algebraic right-residual of the signature flow (Lemma \ref{lem:nested_anticipatory_expectations}), while the baseline $V(t, T; \hat{X}_{T|t}, \hat{\Phi}_{T|t}) = \langle \mathbf{w}_{G}, \hat{\Phi}_{T|t} \rangle$ represents the current self-consistent expectation derived from the linearity of the anticipated value (Proposition \ref{pro:linearity_anticipated_value}).
\end{definition}
To operationalise this recursive consistency over the entire predictive window, we extend the local discrepancy into a global optimisation objective, transitioning from a single-step temporal error to a horizon-wide update rule that adjusts the parameters against the full anticipated trajectory.

\begin{proposition}[Anticipatory TD(0) Gradient Descent]
\label{pro:anticipatory_td_gradient}
The weights $\mathbf{w}_G$ of the anticipatory value function are updated by minimising the cumulative squared anticipatory TD-error across the future horizon $s \in [t, T]$. Given a learning rate $\alpha$ and iteration $n$, the update rule at time $t$ for a sequence of anticipated steps is:
\begin{equation}
    \mathbf{w}_{G}^{(n+1)} = \mathbf{w}_{G}^{(n)} + \alpha \sum_{s=t}^{T-1} \delta_{s|t}^A \nabla_{\mathbf{w}_G} V(s, T; \hat{X}_{T|t}, \hat{\Phi}_{T|t})
\end{equation}
where the generalised anticipatory TD-error $\delta_{s|t}^A$ for any $s \in [t, T-1]$ is defined as:
\begin{equation}
    \delta_{s|t}^A := R(\hat{X}_{s:s+1}) + \gamma V(s+1, T; \hat{X}_{T|t}, \hat{\Phi}_{T|t}) - V(s, T; \hat{X}_{T|t}, \hat{\Phi}_{T|t}),
\end{equation}
and the boundary condition at the horizon is $V(T, T; \cdot) = z$, where $z$ is the terminal path-dependent payoff. It is computed as a direct inner product on the signature manifold:
\begin{equation}
    \delta_{s|t}^A = R(\hat{X}_{s:s+1}) + \left\langle \mathbf{w}_G^{(n)}, \left( \gamma \hat{\Phi}_{s+1|t}^{-1} - \hat{\Phi}_{s|t}^{-1} \right) \otimes \hat{\Phi}_{T|t} \right\rangle_{\mathcal{H}_{sig}},
\end{equation}
This generalises the classical TD(0) update by replacing stochastic realisations with the deterministic flow of the path-law proxy, ensuring that the gradient descent is performed on the smooth geometry of the signature manifold.
\end{proposition}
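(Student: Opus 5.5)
The plan is to treat the proposition as the semi-gradient descent step on the horizon-wide loss
\[
J(\mathbf{w}_G)=\tfrac12\sum_{s=t}^{T-1}\bigl(\delta_{s|t}^A\bigr)^2 .
\]
The target $R(\hat{X}_{s:s+1})+\gamma V(s+1,T;\cdot)$ is frozen, exactly as in the classical TD(0) rule (\ref{eq:TD0_learning_rule}). With the target fixed, $\partial J/\partial \mathbf{w}_G=-\sum_s \delta_{s|t}^A\,\nabla_{\mathbf{w}_G}V(s,T;\cdot)$. A step of size $\alpha$ against this gradient gives the stated update rule.

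The key ingredient is Proposition \ref{pro:algebraic_shift_weight_invariance}, resting on Lemma \ref{lem:nested_anticipatory_expectations}. It gives the closed form
\[
V(s,T;\hat{X}_{T|t},\hat{\Phi}_{T|t})=\langle \mathbf{w}_G,\hat{\Phi}_{s|t}^{-1}\otimes\hat{\Phi}_{T|t}\rangle ,
\]
with a single time-invariant $\mathbf{w}_G$.

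\textbf{Gradient.} Because this is linear in $\mathbf{w}_G$ and the proxies depend only on the generative flow (not on $\mathbf{w}_G$), we get $\nabla_{\mathbf{w}_G}V(s,T;\cdot)=\hat{\Phi}_{s|t}^{-1}\otimes\hat{\Phi}_{T|t}$.

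\textbf{Inner-product form of $\delta_{s|t}^A$.} Substitute the closed form at $s$ and at $s+1$. Bilinearity of $\otimes$ and linearity of the inner product then collect the two value terms:
\[
\gamma\langle\mathbf{w}_G,\hat{\Phi}_{s+1|t}^{-1}\otimes\hat{\Phi}_{T|t}\rangle-\langle\mathbf{w}_G,\hat{\Phi}_{s|t}^{-1}\otimes\hat{\Phi}_{T|t}\rangle
=\bigl\langle \mathbf{w}_G,(\gamma\hat{\Phi}_{s+1|t}^{-1}-\hat{\Phi}_{s|t}^{-1})\otimes\hat{\Phi}_{T|t}\bigr\rangle ,
\]
evaluated at $\mathbf{w}_G^{(n)}$. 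The reward term is separate: by the definition of the anticipatory TD-error it is itself $\langle\mathbf{w}_R,\hat{\Phi}_{s+1|t}\rangle$, so the whole error is a deterministic inner product.

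\textbf{Consistency checks.} At $s=t$, $\hat{\Phi}_{t|t}=\mathbf{1}$, so $V(t,T;\cdot)=\langle\mathbf{w}_G,\hat{\Phi}_{T|t}\rangle$. This recovers Proposition \ref{pro:linearity_anticipated_value} and agrees with the baseline in the definition of $\delta_t^A$.

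\textbf{Main obstacle: the terminal boundary.} At $s=T-1$, the residual $\hat{\Phi}_{T|t}^{-1}\otimes\hat{\Phi}_{T|t}=\mathbf{1}$. So the formula gives $V(T,T)=\langle\mathbf{w}_G,\mathbf{1}\rangle$, which is the degree-zero coefficient of $\mathbf{w}_G$. This must be reconciled with the prescribed boundary value $z$. I would handle this by convention. The last-step TD error replaces the $\gamma V(T,T)$ term by $\gamma z$, where $z$ is itself written signature-linearly as $\langle\mathbf{w}_z,\hat{\Phi}_{T|t}\rangle$ with $\mathbf{w}_z$ not trained. This is consistent with the semi-gradient convention, since the boundary term is target-side only.

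\textbf{Secondary subtlety: truncation.} After applying $\pi_r$, group inversion and Chen's product still hold in the truncated tensor algebra $\mathcal{T}^{(r)}(V)$. Hence every step above remains exact at truncation level $r$.
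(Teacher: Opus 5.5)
Your proposal is correct and follows essentially the paper's route: semi-gradient descent on $\tfrac12\sum_{s=t}^{T-1}(\delta_{s|t}^A)^2$ with the target frozen, then substitution of the linear closed-form gradient $\hat{\Phi}_{s|t}^{-1}\otimes\hat{\Phi}_{T|t}$. The paper writes this gradient as $\hat{\Phi}_{T|s}$ and cites the later analytical-sensitivity theorem rather than Proposition~\ref{pro:algebraic_shift_weight_invariance}. Your explicit bilinearity step, and your observation that at $s=T-1$ the closed form returns $\langle\mathbf{w}_G,\mathbf{1}\rangle$ rather than $z$, are more careful than the paper, which only asserts that $V(T,T;\cdot)=z$ anchors the recursion; note, though, that under your convention the displayed inner-product expression for $\delta_{s|t}^A$ holds verbatim only for $s\le T-2$.
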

See proof in Appendix (\ref{app:proof_anticipatory_td_gradient}). \\

\subsection{Objective function and stochastic grounding}

Having established the algebraic framework for shifting the evaluation window, we define the global objective function used to calibrate the anticipatory value function. This subsection details the transition from local temporal consistency to a horizon-wide optimisation problem. Crucially, we introduce the mechanism of stochastic grounding, which ensures that while the learning process proceeds deterministically along the manifold-valued flow, the resulting value estimates remain physically anchored to the underlying stochastic ensemble via the self-consistent field constraint.

\begin{definition}[Horizon-Consistent Objective]
The weights $\mathbf{w}_G$ are optimized to satisfy the recursive Bellman consistency across the entire anticipated horizon simultaneously. The agent minimizes the cumulative loss functional $\mathcal{J}(\mathbf{w}_G)$ projected from the current temporal junction $t$:
\begin{equation}
    \mathcal{J}(\mathbf{w}_G) = \sum_{s=t}^{T-1} \frac{1}{2} \left( \delta_{s|t}^A \right)^2 ,
\end{equation}
where the generalised anticipatory TD-error $\delta_{s|t}^A$ enforces a local equilibrium between the signature-linear reward $R(\hat{X}_{s:s+1}) = \langle \mathbf{w}_R, \hat{\Phi}_{s|t} \rangle$ and the discounted value transition. 
\end{definition}

\begin{remark}[Deterministic Learning Signal]
Unlike traditional Temporal Difference learning, which minimizes the expected squared error over stochastic state transitions $\mathbb{E}_{\pi} [ \delta^2 ]$, the \textit{Anticipatory TD} minimises the discrepancy directly along the mean skeleton of the generative flow. This shift allows the agent to treat the learning process as a deterministic optimization over the path-law's geometric evolution.
\end{remark}

\begin{lemma}[Generative Honesty under SCF Equilibrium]
\label{lem:generative_honesty}
The ARL framework bifurcates the learning objective such that the optimization of the value weights $\mathbf{w}_G$ is strictly deterministic, while the validity of the underlying state space is governed by the Self-Consistent Field (SCF) equilibrium. 
\begin{enumerate}
    \item \textbf{Manifold Determinism:} The objective $\mathcal{J}(\mathbf{w}_G)$ is a deterministic loss function evaluated on the ``imagined'' future encoded by the proxy $\hat{\Phi}_{T|t}$, representing the evolution of the path-law's mean skeleton.
    \item \textbf{Stochastic Grounding:} All environmental stochasticity and policy-induced variance are encapsulated within the SCF constraint:
    \begin{equation}
        \mathcal{L}_{SCF}(\theta) = \eta \| \bar{S}_{t,T} - \hat{\Phi}_{T|t} \|_{\mathcal{Q}_T}^2,
    \end{equation}
    where $\bar{S}_{t,T}$ is the empirical signature average of the stochastic ensemble. 
\end{enumerate}
This lemma ensures that provided the SCF constraint is satisfied, the deterministic gradient descent on the signature manifold is a mathematically consistent surrogate for policy evaluation in the true stochastic environment.
\end{lemma}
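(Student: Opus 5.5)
The plan has two parts, matching the two items of the statement, and is closed by a consistency argument that turns the SCF constraint into a bound on the discrepancy between the deterministic surrogate and the true stochastic policy-evaluation objective.

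\emph{Manifold Determinism.} By Proposition \ref{pro:anticipatory_td_gradient}, each generalised error can be written as
\begin{equation*}
\delta_{s|t}^A = \langle \mathbf{w}_R, \hat{\Phi}_{s|t}\rangle + \left\langle \mathbf{w}_G, \left(\gamma \hat{\Phi}_{s+1|t}^{-1} - \hat{\Phi}_{s|t}^{-1}\right)\otimes \hat{\Phi}_{T|t}\right\rangle .
\end{equation*}
Every quantity in this expression is a value of the deterministic flow $s\mapsto \hat{\Phi}_{s|t}=\text{Flow}_{t\to s}(\mathbf{1};\Phi_{t|\mathcal{A}_t})$, which is fixed once $\theta$ and $\mathcal{A}_t$ are given. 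Hence, for fixed $\theta$, $\mathcal{J}(\mathbf{w}_G)$ is a deterministic convex quadratic in $\mathbf{w}_G$. It has the form $\tfrac12\|A_t\mathbf{w}_G+b_t\|^2$, where the rows of $A_t$ are the vectors $(\gamma\hat{\Phi}_{s+1|t}^{-1}-\hat{\Phi}_{s|t}^{-1})\otimes\hat{\Phi}_{T|t}$, projected by $\pi_r$ so that everything is finite-dimensional. No sampling measure enters this objective. This proves the first item, and it already shows that the gradient step in Proposition \ref{pro:anticipatory_td_gradient} carries no variance.

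\emph{Stochastic Grounding.} Let $\mathcal{J}^{\text{stoch}}(\mathbf{w}_G)$ denote the true objective. It is obtained by replacing each proxy $\hat{\Phi}_{\cdot|t}$ with the expected signature of the ANJD law $\mu_{t:T}(\theta)$, that is, $\mathbb{E}_{\mu}[S(\tilde{\mathbf{X}})_{t,\cdot}]$. By Proposition \ref{pro:deterministic_proxy_integration} and Lemma \ref{lem:nested_anticipatory_expectations}, linearity of the inner product lets the expectation pass inside $\langle\mathbf{w},\cdot\rangle$. The target is therefore a smooth function of the expected signatures alone. The plan is to bound $|\mathcal{J}-\mathcal{J}^{\text{stoch}}|$ by a Lipschitz estimate in the proxy coordinates, and this needs three steps:
\begin{itemize}
\item use the local Lipschitz property of the group inverse and of $\otimes$ on the truncated algebra $\mathcal{T}^{(r)}(V)$, uniformly over a bounded set;
\item use equivalence of the $\mathcal{Q}_T$-norm with the Hilbert norm, which should follow from the positive definiteness of the AVNSG metric;
\item use the law of large numbers, as in Theorem \ref{thm:convergence_scf_stationary_point}, to replace $\bar{S}_{t,T}$ by the true expected signature.
\end{itemize}
Together these give
\begin{equation*}
|\mathcal{J}(\mathbf{w}_G)-\mathcal{J}^{\text{stoch}}(\mathbf{w}_G)| \le C(\|\mathbf{w}_G\|,\|\mathbf{w}_R\|)\,\Big(\eta^{-1}\mathcal{L}_{SCF}(\theta)\Big)^{1/2}+o(1).
\end{equation*}
A matching bound on the gradients follows in the same way, so stationary points of $\mathcal{J}$ converge to those of $\mathcal{J}^{\text{stoch}}$ as $\mathcal{L}_{SCF}\to 0$. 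This is the precise sense in which the deterministic descent is a consistent surrogate.

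\emph{Main obstacle.} The difficulty lies in the grounding step. The SCF constraint as stated controls only the terminal proxy $\hat{\Phi}_{T|t}$, whereas the TD errors involve intermediate proxies $\hat{\Phi}_{s|t}$ and their inverses. The first attempt would be to use Chen's identity together with the tower property of Lemma \ref{lem:nested_anticipatory_expectations} to propagate control from $T$ back to each $s$. If that fails, I would strengthen the hypothesis to require SCF consistency at every $s$, which is exactly the content of Theorem \ref{thm:convergence_scf_stationary_point}, applied on the discrete grid.

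A further subtlety is that the expectation of a group inverse is not the inverse of an expectation. I would handle this exactly as Lemma \ref{lem:nested_anticipatory_expectations} does, interpreting $\hat{\Phi}_{s|t}^{-1}\otimes\hat{\Phi}_{T|t}$ as the expected right-residual. If a remaining error term appears, I would bound it by the truncation level and the SCF residual.
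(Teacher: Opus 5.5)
Your Manifold Determinism argument agrees with the paper's closing paragraph. For the grounding part you take a different, quantitative route. The paper sets $\mathcal{L}_{SCF}=0$ and reads this as $\hat{\Phi}_{T|t}=\mathbb{E}_\pi[S(\mathbf{X})_{t,T}\mid\mathcal{A}_t]$. It then gets $V(t,T;\cdot)=V^\pi(t,T)$ by linearity, and simply asserts that $\delta^A_{s|t}$, being built from tensor products and inverses of the proxies, ``is exactly the expected TD-error''. A Lipschitz bound in $\sqrt{\mathcal{L}_{SCF}}$ would say more than that. Your plan also correctly isolates the two facts the paper's last step uses without proof. As written, however, your proposed arguments close neither of them.

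\textbf{Interior proxies.} Your displayed inequality cannot hold with only the terminal residual on the right, because $\delta^A_{s|t}$ depends on every interior proxy $\hat{\Phi}_{s|t}$. Your first attempt to recover these from $\hat{\Phi}_{T|t}$ fails for a structural reason. Chen's identity $\hat{\Phi}_{T|t}=\hat{\Phi}_{s|t}\otimes\hat{\Phi}_{T|s}$ ties the endpoint to two interior unknowns. With $\hat{\Phi}_{T|s}$ taken as the right-residual $\hat{\Phi}_{s|t}^{-1}\otimes\hat{\Phi}_{T|t}$, as in Lemma \ref{lem:nested_anticipatory_expectations}, the identity holds for every choice of $\hat{\Phi}_{s|t}$ and so constrains nothing. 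Two proxy flows with the same endpoint $\hat{\Phi}_{T|t}=\bar{S}_{t,T}$ can differ at every interior grid point; in degree one, the flow can be any path ending at $\mathbb{E}[\tilde{\mathbf{X}}_T-\tilde{\mathbf{X}}_t]$. SCF consistency on the whole grid (Theorem \ref{thm:convergence_scf_stationary_point} at each $s$) is therefore a necessary hypothesis, not a fallback. The right-hand side should be $\max_s\|\bar{S}_{t,s}-\hat{\Phi}_{s|t}\|_{\mathcal{Q}_s}$, with norm-equivalence constants uniform in $s$.

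\textbf{Factorisation.} Suppose every proxy equals the true expected signature $E_s:=\mathbb{E}[S(\tilde{\mathbf{X}})_{t,s}\mid\mathcal{A}_t]$ under $\mu_{t:T}(\theta)$. Your $\mathcal{J}^{\text{stoch}}$ is still only the plug-in objective. Identifying it with the expected TD objective requires $E_s^{-1}\otimes E_T=\mathbb{E}[S(\tilde{\mathbf{X}})_{s,T}\mid\mathcal{A}_t]$, which is exactly the product-of-expectations step of Lemma \ref{lem:nested_anticipatory_expectations}. The defect is
\[
E_s^{-1}\otimes\Big(\mathbb{E}\big[S(\tilde{\mathbf{X}})_{t,s}\otimes S(\tilde{\mathbf{X}})_{s,T}\mid\mathcal{A}_t\big]-E_s\otimes\mathbb{E}\big[S(\tilde{\mathbf{X}})_{s,T}\mid\mathcal{A}_t\big]\Big).
\]
It vanishes in degrees zero and one. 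Its degree-two component is the cross-covariance of the increments $\tilde{\mathbf{X}}_s-\tilde{\mathbf{X}}_t$ and $\tilde{\mathbf{X}}_T-\tilde{\mathbf{X}}_s$. This depends only on the law $\mu_{t:T}(\theta)$, not on $\hat{\Phi}$, so it does not shrink as $\mathcal{L}_{SCF}\to 0$. It is present for every truncation level $r\ge 2$, so it is not a truncation artefact. A state-dependent drift $f_\theta(s,X_s,\cdot)$ such as mean reversion already makes it nonzero.

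Your fallback of bounding this term by the SCF residual and the truncation level therefore fails. You must either take Lemma \ref{lem:nested_anticipatory_expectations} as given, as the paper implicitly does, or add a hypothesis such as conditionally uncorrelated consecutive signature increments. In the first case your argument is a quantified version of the paper's.

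\textbf{Stationary points.} Uniformly close gradients give convergence of stationary points only when the limit problem is nondegenerate. Your $A_t$ has at most $T-t$ rows but $\dim\mathcal{T}^{(r)}(V)$ columns, so the quadratic is generically rank-deficient. You should state consistency for objective values or predictions, or add a Tikhonov term. Also, the descent in Proposition \ref{pro:anticipatory_td_gradient} is a semi-gradient. Its limit is the TD fixed point $\mathbf{A}^{-1}\mathbf{b}$ of Theorem \ref{thm:global_convergence}, not a minimiser of $\mathcal{J}$, so that linear system is what you should perturb.
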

See proof in Appendix (\ref{app:proof_generative_honesty}). \\

\subsection{Convergence and performance guarantees}

The final step in operationalising the anticipatory value function is the verification of its convergence properties and the quantification of its performance advantages. Here, we prove that the anticipatory gradient descent converges to a unique fixed point on the signature manifold. Furthermore, we provide a formal treatment of the variance reduction properties inherent in the AVF, demonstrating how the use of the anticipated law as a control variate stabilises the learning signal compared to classical stochastic TD(0) methods.

\begin{theorem}[Global Convergence on the Signature Manifold]
\label{thm:global_convergence}
Subject to the SCF equilibrium established in Lemma \ref{lem:generative_honesty}, by performing gradient descent on the manifold-valued trajectory $s \mapsto \hat{\Phi}_{s|t}$, the Anticipatory TD(0) update:
\begin{equation}
    \mathbf{w}_G^{(n+1)} = \mathbf{w}_G^{(n)} + \alpha \sum_{s=t}^{T-1} \delta_{s|t}^A \nabla_{\mathbf{w}_G} \langle \mathbf{w}_G^{(n)}, \hat{\Phi}_{s|t}^{-1} \otimes \hat{\Phi}_{T|t} \rangle
\end{equation}
converges to a fixed point where the single weight vector $\mathbf{w}_G^*$ encodes the entire sequence of future expectations. This avoids the $O(T-t)$ complexity of training independent models for each sub-interval, instead utilising the non-commutative geometry of $\mathcal{H}_{sig}$ to perform a ``Single-Pass'' evaluation of the rolling predictive sequence across the entire anticipated horizon.
\end{theorem}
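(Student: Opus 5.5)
Once the SCF constraint of Lemma \ref{lem:generative_honesty} holds, the proxies $\hat{\Phi}_{s|t}$, $s\in\{t,\dots,T\}$, are fixed deterministic elements of the truncated algebra $\mathcal{T}^{(r)}(V)$. The update is then a linear stochastic-approximation-free iteration on a finite-dimensional vector. So the plan is to rewrite the Anticipatory TD(0) step as an affine map $\mathbf{w}\mapsto \mathbf{w}+\alpha(b-A\mathbf{w})$ and prove convergence by the standard linear-TD argument.

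First define the feature vectors
\begin{equation*}
\phi_s := \pi_r\bigl(\hat{\Phi}_{s|t}^{-1}\otimes\hat{\Phi}_{T|t}\bigr), \qquad s=t,\dots,T,
\end{equation*}
which are well defined because group inversion and $\otimes$ are polynomial maps on $\mathcal{T}^{(r)}(V)$. Proposition \ref{pro:algebraic_shift_weight_invariance} gives $V(s,T;\cdot)=\langle\mathbf{w}_G,\phi_s\rangle$, so $\nabla_{\mathbf{w}_G}V=\phi_s$. Then $\delta^A_{s|t}=r_s+\langle \mathbf{w}_G,\gamma\phi_{s+1}-\phi_s\rangle$, with $r_s=R(\hat X_{s:s+1})$ signature-linear and independent of $\mathbf{w}_G$. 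At $s=T-1$ I will use the boundary convention $V(T,T;\cdot)=z$: the term $\gamma\phi_T$ is replaced by a constant absorbed into $r_{T-1}$.

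Summing over $s$, the update becomes $\mathbf{w}^{(n+1)}=\mathbf{w}^{(n)}+\alpha(b-A\mathbf{w}^{(n)})$, where
\begin{equation*}
A=\sum_{s}\phi_s(\phi_s-\gamma\phi_{s+1})^{\top}, \qquad b=\sum_s r_s\phi_s .
\end{equation*}
Any fixed point solves $A\mathbf{w}^*=b$, and the error obeys $e^{(n+1)}=(I-\alpha A)e^{(n)}$. Convergence therefore reduces to showing that $A$ is positive definite in its symmetric part on $\mathrm{span}\{\phi_s\}$. Directions orthogonal to this span are never updated, so the limit is the fixed point nearest the initialisation. With that in hand, choosing $\alpha<2\lambda_{\min}(A_{\mathrm{sym}})/\|A\|^2$ makes $I-\alpha A$ a contraction on the span, and this yields a unique $\mathbf{w}_G^*$ there.

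The main obstacle is positive definiteness of $A$. The forecast sequence is deterministic and finite rather than drawn from a stationary on-policy distribution, so the usual Tsitsiklis--Van Roy argument via the invariant measure is unavailable. I would try the direct bound
\begin{equation*}
\mathbf{w}^\top A\mathbf{w}=\sum_s x_s^2-\gamma\sum_s x_sx_{s+1}, \qquad x_s=\langle\mathbf{w},\phi_s\rangle .
\end{equation*}
With the boundary term removed, Cauchy--Schwarz gives $\sum_s x_sx_{s+1}\le \sum_s x_s^2$. Hence $\mathbf{w}^\top A\mathbf{w}\ge(1-\gamma)\sum_s x_s^2>0$ for $\gamma<1$, with equality only if every $x_s=0$, that is, only if $\mathbf{w}\perp\mathrm{span}\{\phi_s\}$. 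The case $\gamma=1$ would need the terminal anchoring $z$ to break the degeneracy, and I would treat it separately or exclude it.

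Finally, the single-weight claim and the avoidance of $O(T-t)$ complexity follow from the construction itself. Every horizon $s$ shares $\mathbf{w}_G^*$ through the Chen shift, so one solve of $A\mathbf{w}=b$ reproduces all values $\langle\mathbf{w}_G^*,\phi_s\rangle$, $s=t,\dots,T$.
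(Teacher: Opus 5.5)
Your argument is correct. Its skeleton matches the paper's: the paper also sets $\Psi_{s,T}=\hat{\Phi}_{s|t}^{-1}\otimes\hat{\Phi}_{T|t}$, rewrites the update as $\mathbf{w}^{(n+1)}=\mathbf{w}^{(n)}+\alpha(\mathbf{b}-\mathbf{A}\mathbf{w}^{(n)})$ with the same $\mathbf{A},\mathbf{b}$, and appeals to linear TD theory.

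The two routes split at the key step. The paper asserts that the $\Psi_{s,T}$ are linearly independent ``by the universal property of the signature algebra''. It concludes directly that $\mathbf{A}$ is positive definite and obtains $\mathbf{w}_G^*=\mathbf{A}^{-1}\mathbf{b}$ under an annealed step size. Its $\mathbf{A}$ also silently keeps the boundary feature $\Psi_{T,T}=\mathbf{1}$. You instead drop that feature through the terminal convention $V(T,T;\cdot)=z$ and prove $\mathbf{w}^\top\mathbf{A}\mathbf{w}\ge(1-\gamma)\sum_s x_s^2$ by Cauchy--Schwarz, which needs no independence. You then deal with the kernel by working on $\mathrm{span}\{\phi_s\}$.

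This buys real rigour, on three counts.
\begin{itemize}
\item Universality does not imply independence of finitely many expected-signature residuals. Independence is in fact impossible once $T-t$ exceeds $\dim\mathcal{T}^{(r)}(V)$.
\item $\mathbf{A}$ is a sum of $T-t$ rank-one terms, so it is singular whenever the horizon is shorter than the feature dimension, and then $\mathbf{A}^{-1}$ does not exist.
\item Even granting independence, positivity of the non-symmetric $\mathbf{A}$ needs exactly the shift/discount estimate you supply, which the paper omits.
\end{itemize}
Your conclusion is convergence with a constant step $\alpha<2\lambda_{\min}(\mathbf{A}_{\mathrm{sym}})/\|\mathbf{A}\|^2$ to the solution of $\mathbf{A}\mathbf{w}=\mathbf{b}$ nearest $\mathbf{w}^{(0)}$, unique within the span. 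This is the correct form of the paper's ``unique fixed point'', which as stated relies on an invertibility that generally fails.

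One small strengthening: you need not exclude $\gamma=1$. With the boundary term removed,
$$\mathbf{w}^\top\mathbf{A}\mathbf{w}=\tfrac{1}{2}x_t^2+\tfrac{1}{2}x_{T-1}^2+\tfrac{1}{2}\sum_{s=t}^{T-2}(x_s-x_{s+1})^2,$$
which vanishes only when every $x_s=0$. So the terminal anchoring you already use removes that degeneracy.
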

See proof in Appendix (\ref{app:proof_global_convergence}). \\

\begin{proposition}[Variance Reduction] 
\label{pro:variance_reduction}
The anticipatory update reduces the variance of the policy gradient compared to standard TD(0). Because $\hat{\Phi}_{T|t}$ is a self-consistent expectation over the Marcus-signature of the ensemble flow, the AVF acts as a variance-reducing control variate. By substituting the stochastic realisation $X_{t+1}$ with the anticipated law $\hat{\Phi}_{t+1|t}$, the agent filters out idiosyncratic noise in the transition $X_t \to X_{t+1}$ while preserving the structural non-Markovian history.
\end{proposition}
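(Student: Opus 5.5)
The natural route is a Rao--Blackwell (conditional-variance) decomposition, treating the anticipatory target as the conditional expectation of the classical stochastic target given $\mathcal{A}_t$. Write the classical TD(0) target at $t$ as $Y_t := c_{t+1} + \gamma P(X_{t+1};\mathbf{w})$, so the stochastic semi-gradient increment is $\sigma_t \nabla_{\mathbf{w}} P(X_t;\mathbf{w})$. The anticipatory target is
\[
Y^A_t := \langle \mathbf{w}_R, \hat{\Phi}_{t+1|t}\rangle + \gamma \langle \mathbf{w}_G, \hat{\Phi}_{t+1|t}^{-1}\otimes \hat{\Phi}_{T|t}\rangle .
\]
Both increments share the same $\mathcal{A}_t$-measurable baseline $V(t,T;\cdot)$ and the same gradient factor, which is linear in $\hat{\Phi}$ and hence $\mathcal{A}_t$-measurable. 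The variance comparison therefore reduces to comparing $\mathrm{Var}(Y_t)$ with $\mathrm{Var}(Y^A_t)$, and more precisely to their conditional variances given $\mathcal{A}_t$.

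The first step is to show $Y^A_t = \mathbb{E}[Y_t \mid \mathcal{A}_t]$, up to the approximation error of the signature-linear reward representation. For the reward term, I would use the signature-linear reward approximation together with the expectation-linearity remark. At the SCF stationary point of Theorem \ref{thm:convergence_scf_stationary_point}, these give $\mathbb{E}[R \mid \mathcal{A}_t] \approx \langle \mathbf{w}_R, \hat{\Phi}_{t+1|t}\rangle$. For the bootstrap term, I would apply Lemma \ref{lem:nested_anticipatory_expectations} with $s=t+1$. This gives
\[
\mathbb{E}_\pi[\langle \mathbf{w}_G, \hat{\Phi}_{T|t+1}\rangle \mid \mathcal{A}_t] = \langle \mathbf{w}_G, \hat{\Phi}_{t+1|t}^{-1}\otimes\hat{\Phi}_{T|t}\rangle,
\]
and identifies the value function via Proposition \ref{pro:deterministic_proxy_integration}. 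Sufficiency of the augmented state (Proposition \ref{pro:injectivity_sufficiency}) is used to justify that conditioning on $\mathbf{S}_t$ is the same as conditioning on $\mathcal{A}_t$. Lemma \ref{lem:generative_honesty} ensures that the SCF-grounded proxy is a consistent surrogate for the true law.

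The second step is the law of total variance:
\[
\mathrm{Var}(Y_t) = \mathbb{E}[\mathrm{Var}(Y_t\mid\mathcal{A}_t)] + \mathrm{Var}(\mathbb{E}[Y_t\mid\mathcal{A}_t]) \ge \mathrm{Var}(Y^A_t).
\]
Here $Y^A_t$ is $\mathcal{A}_t$-measurable, so its conditional variance vanishes. Multiplying by the common $\mathcal{A}_t$-measurable gradient factor extends the inequality to the covariance matrices of the update vectors in the positive semidefinite order. This is the sense in which the AVF acts as a control variate: $Y_t - Y^A_t$ is a zero-conditional-mean correction that removes the idiosyncratic transition noise. The gap $\mathbb{E}[\mathrm{Var}(Y_t\mid\mathcal{A}_t)]$ is strictly positive whenever the transition $X_t\to X_{t+1}$ is nondegenerate, for example when the diffusion or jump intensity is nonzero. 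This quantifies the reduction.

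The main obstacle is step one: equality holds only at the SCF equilibrium and under the truncation $\pi_r$. I would handle this by carrying an error term $\varepsilon_{SCF} + \varepsilon_{r}$. The term $\varepsilon_{SCF}$ is bounded via the $\mathcal{Q}$-norm residual in $\mathcal{L}_{SCF}$ together with norm equivalence with $\mathcal{H}_{sig}$. The term $\varepsilon_r$ is the truncation and approximation error from the Stone--Weierstrass-type proposition. The resulting statement is that the variance is reduced up to $O\big((\varepsilon_{SCF}+\varepsilon_r)^2\big)$ bias terms.

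Preservation of the non-Markovian structure needs no extra work. Conditioning is on the full filtered state $\Phi_{t|\mathcal{A}_t}$ rather than on $X_t$ alone, so no historical information is discarded, by the injectivity in Proposition \ref{pro:injectivity_sufficiency}.
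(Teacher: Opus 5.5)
Your argument is essentially the paper's: both identify the anticipatory target with $\mathbb{E}[Y\mid\mathcal{A}_t]$ via Proposition~\ref{pro:linearity_anticipated_value} and Lemma~\ref{lem:nested_anticipatory_expectations}, and both conclude that replacing $Y$ by this $L^2$-projection removes $\mathbb{V}[Y\mid\mathcal{A}_t]$; your law-of-total-variance and positive-semidefinite covariance comparison makes precise what the paper's variance step only asserts. Do state explicitly that the classical comparator is TD(0) with the same signature-linear value evaluated at the realised successor state $\mathbf{S}_{t+1}$, i.e.\ $\langle \mathbf{w}_G, \hat{\Phi}_{T|t+1}\rangle$ built from $\mathcal{A}_{t+1}$ (as the paper does), rather than $P(X_{t+1};\mathbf{w})$ on the raw state, because with the latter the baselines and gradient factors do not coincide and $\mathbb{E}[Y_t\mid\mathcal{A}_t]\neq Y^A_t$.
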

See proof in Appendix (\ref{app:proof_variance_reduction}). \\

\section{Differentiable anticipatory control and risk sensitivity}
\label{sec:differentiable_anticipatory_control}

With the convergence of the anticipatory value function (AVF) established in the preceding section, we now operationalise these results for real-time decision-making and risk mitigation. By exploiting the linearity of the AVF within the signature Hilbert space $\mathcal{H}_{sig}$, we transition from passive value estimation to active manifold-based control. This section details how the structural properties of the signature, specifically its graded algebra and its role as a sufficient statistic for path-dependent laws, enable the analytical derivation of \textit{Signature Greeks}. These sensitivities allow the agent to perform gradient-based policy rectification and stress-testing directly on the anticipated path-measure, bypassing the need for computationally prohibitive nested Monte Carlo simulations.

\subsection{Manifold sensitivity and analytical Greeks}

A unique advantage of the ARL framework is that the value function is linear in the signature RKHS. This allows for the analytical computation of sensitivities, or \textit{Signature Greeks}, without the need for computationally expensive nested simulations. By differentiating through the deterministic flow, we obtain exact gradients for both value weight updates and risk management.

\begin{theorem}[Analytical Sensitivity and Manifold Gradients]
\label{thm:analytical_sensitivity_generalised}
Let $V(s, T; \hat{X}_{T|t}, \hat{\Phi}_{T|t})$ be the anticipatory value function evaluated at any time $s \in [t, T]$ along the self-consistent flow. Given the representation $V(s) = \langle \mathbf{w}_G, \hat{\Phi}_{s|t}^{-1} \otimes \hat{\Phi}_{T|t} \rangle_{\mathcal{H}_{sig}}$, the following analytical sensitivities hold:
\begin{enumerate}
    \item \textbf{Weight Sensitivity:} The gradient with respect to the value weights is the algebraic right-residual of the anticipated proxy:
    \begin{equation}
        \nabla_{\mathbf{w}_G} V(s) = \hat{\Phi}_{s|t}^{-1} \otimes \hat{\Phi}_{T|t}.
    \end{equation}
    \item \textbf{Manifold Sensitivity:} The Fr\'echet derivative with respect to the anticipated path-law proxy $\hat{\Phi}_{T|t}$ is the weight vector transformed by the left-action of the inverse proxy:
    \begin{equation}
        \nabla_{\hat{\Phi}_{T|t}} V(s) = (\hat{\Phi}_{s|t}^{-1})^\top \mathbf{w}_G.
    \end{equation}
    \item \textbf{Generative Sensitivity:} The sensitivity with respect to the underlying model parameters $\theta$ is recovered via the chain rule through the Marcus-CDE generator:
    \begin{equation}
        \nabla_\theta V(s) = \left[ (\hat{\Phi}_{s|t}^{-1})^\top \mathbf{w}_G \right]^\top \cdot \frac{\partial \hat{\Phi}_{T|t}}{\partial \theta},
    \end{equation}
    where $\partial \hat{\Phi}_{T|t} / \partial \theta$ is computed in a single pass using the adjoint sensitivity method over the interval $[t, T]$.
\end{enumerate}
\end{theorem}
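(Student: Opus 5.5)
The plan is to treat $V(s)$ as a composition of elementary maps on the (truncated) tensor algebra $\mathcal{T}^{(r)}(V)$, where all objects are finite-dimensional. Every identity then reduces to the chain rule together with the bilinearity of $\otimes$. Concretely, I will write
\begin{equation*}
V(s)=\langle \mathbf{w}_G, L_{\hat{\Phi}_{s|t}^{-1}}\hat{\Phi}_{T|t}\rangle_{\mathcal{H}_{sig}}, \qquad L_a(b):=\pi_r(a\otimes b).
\end{equation*}
For fixed $a$, the map $L_a$ is a linear operator on $\mathcal{T}^{(r)}(V)$, so $V(s)$ is bilinear in the pair $(\mathbf{w}_G,\hat{\Phi}_{T|t})$ once $\hat{\Phi}_{s|t}$ is held fixed. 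This is exactly the representation supplied by Proposition \ref{pro:algebraic_shift_weight_invariance}, itself resting on Lemma \ref{lem:nested_anticipatory_expectations}. I take it as given at the SCF stationary point of Theorem \ref{thm:convergence_scf_stationary_point}.

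\textbf{Weight sensitivity.} Since $V(s)$ is linear in $\mathbf{w}_G$, the Riesz representer of its differential in $\mathcal{H}_{sig}$ is the vector paired with $\mathbf{w}_G$, namely $\hat{\Phi}_{s|t}^{-1}\otimes\hat{\Phi}_{T|t}$. The one point to check is that $\hat{\Phi}_{s|t}$ and $\hat{\Phi}_{T|t}$ do not depend on $\mathbf{w}_G$. This holds because they are generated by the flow $\text{Flow}_{t\to s}(\mathbf{1};\Phi_{t|\mathcal{A}_t})$ with parameters $\theta$, which are disjoint from the value weights; this is the bifurcation recorded in Lemma \ref{lem:generative_honesty}.

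\textbf{Manifold sensitivity.} Here I differentiate in the direction $h$ of $\hat{\Phi}_{T|t}$, keeping $\hat{\Phi}_{s|t}$ fixed, which treats the two as independent coordinates. This gives
\begin{equation*}
DV[h]=\langle \mathbf{w}_G, L_{\hat{\Phi}_{s|t}^{-1}}h\rangle=\langle L_{\hat{\Phi}_{s|t}^{-1}}^{\top}\mathbf{w}_G, h\rangle .
\end{equation*}
The gradient is therefore $(\hat{\Phi}_{s|t}^{-1})^\top\mathbf{w}_G$, where $(\cdot)^\top$ denotes the adjoint of left multiplication. I will also note that the inverse exists and is well defined in the truncated group. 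It is given by $a^{-1}=\sum_{k\le r}(\mathbf{1}-a)^{\otimes k}$, since $a$ has scalar component $1$. Its grading is upper-triangular, so $L_{a^{-1}}$ is invertible and bounded.

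\textbf{Generative sensitivity.} I apply the chain rule to $\theta\mapsto\hat{\Phi}_{T|t}(\theta)$ and compose with the manifold gradient to obtain $[(\hat{\Phi}_{s|t}^{-1})^\top\mathbf{w}_G]^\top\,\partial\hat{\Phi}_{T|t}/\partial\theta$. The Jacobian $\partial\hat{\Phi}_{T|t}/\partial\theta$ comes from the adjoint method applied to the ODE $\partial_s\hat{\Phi}_{s|t}=\mathcal{L}_\theta(\hat{\Phi}_{s|t};\Phi_{t|\mathcal{A}_t})$. The adjoint state $\lambda_u$ solves
\begin{equation*}
\dot\lambda_u=-(\partial_\Phi\mathcal{L}_\theta)^\top\lambda_u, \qquad \lambda_T=(\hat{\Phi}_{s|t}^{-1})^\top\mathbf{w}_G,
\end{equation*}
and then $\nabla_\theta V=\int_t^T\lambda_u^\top\partial_\theta\mathcal{L}_\theta\,du$. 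Computing this requires one backward pass. I will assume $\mathcal{L}_\theta$ is $C^1$ in both arguments, which is needed for this step.

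The main obstacle is this third item: $\hat{\Phi}_{s|t}$ itself also depends on $\theta$. The stated formula is therefore the partial derivative, taken with the left factor frozen. The full total derivative would carry an additional term
\begin{equation*}
-\langle \mathbf{w}_G,\hat{\Phi}_{s|t}^{-1}\otimes(\partial_\theta\hat{\Phi}_{s|t})\otimes\hat{\Phi}_{s|t}^{-1}\otimes\hat{\Phi}_{T|t}\rangle,
\end{equation*}
which follows from differentiating $a\otimes a^{-1}=\mathbf{1}$. I would first try to make the statement precise as the partial derivative holding the re-centring operator fixed, in keeping with the semi-gradient convention used for TD. Alternatively, I would show that the extra term is captured by the same adjoint pass: the adjoint would be seeded with an additional jump at $u=s$, so the single-pass claim survives.
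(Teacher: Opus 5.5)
Your proposal follows the paper's proof essentially step for step: linearity in $\mathbf{w}_G$ gives the weight gradient, the adjoint of left multiplication by $\hat{\Phi}_{s|t}^{-1}$ (the paper's identity $\langle a, b\otimes c\rangle = \langle b^\top a, c\rangle$) gives the manifold gradient, and the chain rule with an adjoint state seeded at $\lambda_T = (\hat{\Phi}_{s|t}^{-1})^\top \mathbf{w}_G$ gives the generative gradient. Your caveat on item 3 is correct and the paper's proof passes over it: the paper silently holds $\hat{\Phi}_{s|t}$ fixed, so for $s>t$ the stated formula is only the partial derivative (the extra term vanishes only at $s=t$, where $\hat{\Phi}_{t|t}=\mathbf{1}$), and seeding the adjoint with a jump at $u=s$ recovers the total derivative within the same backward pass.
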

See Appendix (\ref{app:proof_analytical_sensitivity_generalised}).

\subsection{Predictive risk rectification}

The AVF enables the agent to adjust its policy based on the anticipated deformation of the path-measure. This allows for "stress-testing" the policy against future regimes (e.g., impending high-volatility clusters) identified on the signature manifold before they manifest in the realised state $X_t$.

\begin{definition}[Risk-Rectified Policy Gradient]
The agent optimises a risk-adjusted objective $\mathcal{J}(\pi) = \mathbb{E}[V] - \beta \rho(\eta^\pi)$, where $\rho$ is a differentiable tail-risk functional (e.g., CVaR) and $\eta^\pi$ is the law of the return encoded by the proxy. The policy gradient is rectified by the manifold sensitivity of the anticipated law:
\begin{equation}
    \nabla_\pi \mathcal{J} \approx \mathbb{E} \left[ \nabla_\pi \log \pi(a|\mathbf{S}_t) \left( \delta_t^A - \beta \left\langle \frac{\partial \rho}{\partial \hat{\Phi}_{T|t}}, \frac{\partial \hat{\Phi}_{T|t}}{\partial a} \right\rangle_{\mathcal{H}_{sig}} \right) \right].
\end{equation}
The term $\delta_t^A$ provides the low-variance anticipatory TD-error, while the inner product rectifies the advantage by the projected change in tail-risk. This ensures the agent proactively avoids actions that lead to regions of the signature manifold associated with structural instability or adverse non-commutative moments.
\end{definition}

\section{Convergence and stability analysis}
\label{sec:convergence_stability_analysis}

In this section, we establish the theoretical guarantees for the Anticipatory Reinforcement Learning framework. We prove that lifting the state space into the signature manifold preserves the contraction properties necessary for stable learning and provide bounds on the generalisation error of the value function under the Self-Consistent Field (SCF) equilibrium.

\subsection{Contraction properties on the signature manifold}

The stability of RL algorithms depends on the properties of the Bellman operator. We show that Markovianisation via the path-law proxy $\hat{\Phi}_{T|t}$ maintains the convergence guarantees of distributional RL by operating on the lifted geometry of the signature manifold $\mathcal{S}_{sig}$.

\begin{theorem}[Distributional Bellman Contraction] 
\label{thm:distributional_bellman_contraction}
Let $\mathcal{T}^\pi$ be the anticipatory Bellman operator defined over the augmented state space. For any two return-distribution laws $\eta_1, \eta_2$ encoded by the proxies $\hat{\Phi}^{(1)}, \hat{\Phi}^{(2)}$, the operator $\mathcal{T}^\pi$ is a $\gamma$-contraction under the AVNSG-adjusted metric $d_{\mathcal{Q}}$ in the signature Hilbert space. To establish the contraction property rigorously, we define the metric $d_{\mathcal{Q}}$ as the distance between the expected signatures induced by the AVNSG precision operator $\mathcal{Q}$:
\begin{equation}
    d_{\mathcal{Q}}(\eta_1, \eta_2) := \|\hat{\Phi}^{(1)} - \hat{\Phi}^{(2)}\|_{\mathcal{Q}} = \sqrt{\langle \hat{\Phi}^{(1)} - \hat{\Phi}^{(2)}, \mathcal{Q} (\hat{\Phi}^{(1)} - \hat{\Phi}^{(2)}) \rangle_{\mathcal{H}_{sig}}}
\end{equation}
The anticipatory Bellman operator $\mathcal{T}^\pi$ then satisfies:
\begin{equation}
    d_{\mathcal{Q}} (\mathcal{T}^\pi \eta_1, \mathcal{T}^\pi \eta_2) \le \gamma d_{\mathcal{Q}} (\eta_1, \eta_2).
\end{equation}
This contraction ensures that the iterative evaluation of the Anticipatory Value Function (AVF) converges to a stable representation, provided the self-consistent field (SCF) is maintained via the Marcus-Score Matching objective.
\end{theorem}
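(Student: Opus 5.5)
The plan is to realise $\mathcal{T}^\pi$ at the level of proxies, so that the contraction reduces to a linear-algebra estimate in $\mathcal{H}_{sig}$. By Proposition \ref{pro:deterministic_proxy_integration} and Theorem \ref{thm:distributional_risk_evaluation}, each return law $\eta$ is encoded by a proxy $\hat{\Phi}$. The natural lift of the distributional Bellman operator, obtained from the push-forward $f_{r,\gamma}(z)=r+\gamma z$, therefore acts on proxies by an affine map.

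First, I write $\mathcal{T}^\pi$ as
\begin{equation}
\mathcal{T}^\pi \hat{\Phi} = \mathcal{R}^\pi + \gamma\, \mathcal{K}^\pi \hat{\Phi}.
\end{equation}
Here $\mathcal{R}^\pi$ collects the signature-linear one-step reward contribution, $R=\langle \mathbf{w}_R,\hat{\Phi}_{s+1|t}\rangle$. The operator $\mathcal{K}^\pi$ is the transition operator. Concretely, it is the conditional expectation under $P$ and $\pi$ of the left-multiplication by the one-step increment $\hat{\Phi}_{s+1|t}$, following Chen's identity and Lemma \ref{lem:nested_anticipatory_expectations}. The constant term $\mathcal{R}^\pi$ does not depend on which $\eta_i$ we start from, so it cancels in the difference. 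This gives
\begin{equation}
\mathcal{T}^\pi\hat{\Phi}^{(1)}-\mathcal{T}^\pi\hat{\Phi}^{(2)}=\gamma\,\mathcal{K}^\pi(\hat{\Phi}^{(1)}-\hat{\Phi}^{(2)}).
\end{equation}
Consequently the claim reduces to showing that $\|\mathcal{K}^\pi\|_{\mathcal{Q}\to\mathcal{Q}}\le 1$.

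Second, I establish this operator bound using the AVNSG precision operator $\mathcal{Q}$. The intended mechanism is spectral whitening. I take $\mathcal{Q}$ to be the inverse of the (regularised) second-moment operator of the ensemble signatures under the SCF stationary point of Theorem \ref{thm:convergence_scf_stationary_point}. With this choice, conjugation gives $\mathcal{Q}^{1/2}\mathcal{K}^\pi\mathcal{Q}^{-1/2}$, which is a conditional expectation operator on the whitened space. A conditional expectation is an $L^2$-projection, so Jensen's inequality yields norm at most one. In short, I show that $\mathcal{K}^\pi$ is a Markov (expectation) operator that is self-consistent with the measure defining $\mathcal{Q}$. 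Nonexpansiveness then follows exactly as for the classical Bellman operator in $L^2(\mu^\pi)$ with $\mu^\pi$ stationary.

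Third, I combine the two steps:
\begin{equation}
d_{\mathcal{Q}}(\mathcal{T}^\pi\eta_1,\mathcal{T}^\pi\eta_2)\le\gamma\|\mathcal{K}^\pi\|_{\mathcal{Q}}\,d_{\mathcal{Q}}(\eta_1,\eta_2)\le\gamma\, d_{\mathcal{Q}}(\eta_1,\eta_2).
\end{equation}
I also check that $d_{\mathcal{Q}}$ is a genuine metric on laws. This holds because $\mathcal{Q}$ is positive definite and, by Proposition \ref{pro:injectivity_sufficiency}, the expected signature characterises the law. Banach's fixed-point theorem then gives convergence of the AVF iteration, provided $\mathcal{H}_{sig}$ (or its truncation $\pi_r$) is complete under the $\mathcal{Q}$-norm. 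This completeness is immediate in the truncated, finite-dimensional case.

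The main obstacle is the second step. Tensor multiplication by a group-like increment is not an isometry in the flat Hilbert norm, because higher levels grow factorially. Hence $\mathcal{K}^\pi$ is nonexpansive only because $\mathcal{Q}$ is adapted to the stationary ensemble. I would first try defining $\mathcal{Q}$ directly as the stationary covariance inverse, which makes the projection argument work. If that is too strong an identification, the fallback is to work at truncation level $r$ with a level-dependent rescaling $\mathcal{Q}=\mathrm{diag}(\lambda_k)$. The weights $\lambda_k$ would be chosen so that the factorial growth of increments is absorbed, and I would absorb any residual constant into an effective discount $\gamma'<1$, stating the result under the SCF assumption.
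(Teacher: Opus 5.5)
\textbf{Verdict: genuine gap.} The proposal fails at its decisive step, and Step 1 has a related defect.

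\textbf{Step 2 cannot be completed.} Your reduction asks for $\|\mathcal{K}^\pi\|_{\mathcal{Q}\to\mathcal{Q}}\le 1$. This is false for every positive-definite $\mathcal{Q}$ once the truncation depth is at least two. The operator you specify is left multiplication by an expected increment $a=\mathbf{1}+a_1+a_2+\cdots$; averaging over increments keeps this form. On $\mathcal{T}^{(r)}(V)$ such a map equals $I+N$, with $N$ strictly degree-raising and hence nilpotent. A difference $v=\hat{\Phi}^{(1)}-\hat{\Phi}^{(2)}$ has $v_0=0$, and for it you get
\[
(\mathcal{K}^n v)_2 = v_2 + n\, a_1\otimes v_1 .
\]
For the time-extended path $\mathbf{X}_s=(s,X_s)$, the level-one part $a_1$ always has time component $\Delta s\neq 0$. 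With state-dependent increments the level-one parts accumulate in the same way.

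So the powers of $\mathcal{K}^\pi$ grow linearly, which is impossible for any operator that is nonexpansive in some norm. Whitening does not help. We have $\|\mathcal{K}^\pi\|_{\mathcal{Q}}=\|\mathcal{Q}^{1/2}\mathcal{K}^\pi\mathcal{Q}^{-1/2}\|$, and conjugation is a similarity. The whitened operator is therefore still a non-identity unipotent map, and such a map is never a projection or conditional expectation.

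The Jensen / $L^2(\mu)$ argument you invoke concerns a Markov operator acting on functions of the state, with $P\mathbf{1}=\mathbf{1}$ and $\mu P=\mu$. It does not transfer to tensor multiplication on signature coordinates. The real obstruction is this nilpotent coupling between levels, not factorial growth. At depth one, where only means survive, $\mathcal{K}^\pi$ is the identity on differences and your argument is the classical one.

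\textbf{Step 1 also fails: the reward is not an additive constant in proxy coordinates.} Embed a return law through its expected signature $m_k=\mathbb{E}[Z^k]/k!$. The push-forward by $z\mapsto r+\gamma z$ then acts as $m\mapsto e^{r}\otimes\delta_\gamma m$, where $\delta_\gamma$ scales level $k$ by $\gamma^k$. Only the constant $e^{r}$ cancels in a difference. The level-two component of the image of $\Delta$ is $\gamma^2\Delta_2+r\gamma\Delta_1$. Hence $(\Delta_1,\Delta_2)=(1,0)$ is sent to $(\gamma,r\gamma)$, which violates the $\gamma$-bound in any fixed $\mathcal{Q}$-norm once $|r|$ is large.

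\textbf{Comparison with the paper.} The paper never proves an operator bound in $d_{\mathcal{Q}}$. It writes $d_{\mathcal{Q}}(\mathcal{T}^\pi\eta_1,\mathcal{T}^\pi\eta_2)$ as a Wasserstein distance between $R+\gamma\eta_1$ and $R+\gamma\eta_2$. It then uses translation invariance and homogeneity of $w_p$ on the scalar return. It only asserts that the transition $\hat{\Phi}_{T|t}\mapsto\hat{\Phi}_{t+1|t}^{-1}\otimes\hat{\Phi}_{T|t}$ is non-expansive. That map is again left multiplication by a nontrivial group-like element, so the same obstruction applies to it.

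\textbf{What survives is your fallback, and only in weaker form.} Take weights $\lambda_k=\varepsilon^{2k}$ and assume uniformly bounded increments and rewards. Conjugation then shrinks the degree-raising parts to $O(\varepsilon)$, giving a $\gamma'$-contraction for some $\gamma<\gamma'<1$. This is weaker than the stated theorem in three ways:
\begin{itemize}
\item the constant is $\gamma'$, not $\gamma$;
\item the metric is not the AVNSG one;
\item at finite depth it is only a pseudometric on laws, because truncated expected signatures do not characterise the law.
\end{itemize}
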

See Proof in Appendix (\ref{app:proof_distributional_bellman_contraction}). \\

\begin{lemma}[Fixed Point Uniqueness]
\label{lem:fixed_point_uniqueness}
By the Banach Fixed Point Theorem, the operator $\mathcal{T}^\pi$ possesses a unique fixed point $\eta^\pi$ in the space of signature-conditioned probability measures. This fixed point represents the stationary return distribution on the non-Markovian manifold, where the anticipated law $\hat{\Phi}_{T|t}$ and the realised path statistics are in equilibrium at the temporal junction $s=t$.
\end{lemma}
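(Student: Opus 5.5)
The plan is to apply the Banach Fixed Point Theorem directly, using Theorem \ref{thm:distributional_bellman_contraction} for the contraction estimate. Two preliminary facts are needed: (i) the space of signature-conditioned return laws, equipped with $d_{\mathcal{Q}}$, is a genuine complete metric space, and (ii) $\mathcal{T}^\pi$ maps this space into itself.

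\emph{Step 1: the metric space.} Define the space as the set $\mathcal{E}_{\mathcal{Q}}$ of laws $\eta$ that are encoded by a proxy $\hat{\Phi}\in\mathcal{H}_{sig}$ via the map $\eta = \mathcal{M}(\hat{\Phi})$ of Theorem \ref{thm:distributional_risk_evaluation}. Injectivity of the encoding is inherited from Proposition \ref{pro:injectivity_sufficiency}, since the expected signature determines the law. Injectivity is what makes $d_{\mathcal{Q}}$ a metric rather than a pseudometric: if $d_{\mathcal{Q}}(\eta_1,\eta_2)=0$ then $\hat{\Phi}^{(1)}=\hat{\Phi}^{(2)}$ and hence $\eta_1=\eta_2$. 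This step uses positive definiteness of the AVNSG precision operator $\mathcal{Q}$. For completeness, assume $\mathcal{Q}$ is bounded above and below, $c\,I\preceq\mathcal{Q}\preceq C\,I$. Then $\|\cdot\|_{\mathcal{Q}}$ is equivalent to the Hilbert norm, so Cauchy sequences of proxies converge in $\mathcal{H}_{sig}$. In the truncated setting $\mathcal{T}^{(r)}(V)$ this is automatic, because everything is finite-dimensional.

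\emph{Step 2: closedness.} Show that the set of admissible proxies is closed. Group-like elements of the truncated algebra form a closed subset; more generally, the set of expected signatures of laws with bounded $p$-th moments is closed. The limit proxy therefore still encodes a law, and $(\mathcal{E}_{\mathcal{Q}},d_{\mathcal{Q}})$ is complete. Invariance $\mathcal{T}^\pi(\mathcal{E}_{\mathcal{Q}})\subseteq\mathcal{E}_{\mathcal{Q}}$ then follows from the push-forward form $(f_{r,\gamma})_\#$ together with the Markovian proxy dynamics of Proposition \ref{pro:markovianisation_topological_embedding}. The image of a signature-encoded law under reward shift and $\gamma$-scaling is again the law of a signature-linear return, encoded through Chen's identity (Lemma \ref{lem:nested_anticipatory_expectations}).

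\emph{Step 3: Banach.} With the $\gamma$-contraction of Theorem \ref{thm:distributional_bellman_contraction} and $\gamma<1$, Banach's theorem yields existence and uniqueness of a fixed point $\eta^\pi$. It also yields convergence of the iterates $(\mathcal{T}^\pi)^n\eta_0$ at rate $\gamma^n$. The final interpretive claim is that at $\eta^\pi$ the anticipated law and the realised statistics coincide at the junction $s=t$. I would obtain this by combining the fixed-point identity with the SCF stationary point of Theorem \ref{thm:convergence_scf_stationary_point} and the boundary condition $\hat{\Phi}_{t|t}=\mathbf{1}$ of Theorem \ref{thm:boundary_consistency}. Uniqueness then forces the proxy-encoded fixed point to agree with the empirical signature limit.

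The main obstacle is Step 1/2, namely completeness and closedness. In the infinite tensor algebra, limits of expected signatures need not be expected signatures of any law; the moment problem can fail without growth bounds. My first approach would be to work throughout in $\mathcal{T}^{(r)}(V)$ via $\pi_r$, as the paper's truncation convention allows, where closedness reduces to a compactness argument on bounded moment sets. Only afterwards would I remark that the untruncated case requires a factorial-decay condition on signature levels, of the type used by Chevyrev et al., to guarantee that the encoding is characteristic.
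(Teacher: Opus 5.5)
Your skeleton (complete space, contraction from Theorem~\ref{thm:distributional_bellman_contraction}, then Banach) is the paper's. The gap is in the completeness step, which you correctly flag as the crux: it fails as you set it up, and truncation does not rescue it.

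The paper takes $d_{\mathcal{Q}}$ to be a $p$-Wasserstein metric on probability measures over $\mathcal{S}_{sig}=\mathbb{R}^d\times\mathcal{H}_{sig}$, which is Polish. It then quotes the standard completeness of Wasserstein spaces over Polish spaces, so no question about the range of the expected-signature map arises. You instead pull back the proxy norm, so you need the set of admissible proxies to be closed. Both of your justifications for this fail.

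\emph{Proxies are not group-like.} $\mathbb{E}[S]$ is an average of group-like elements. For example, its symmetric level-2 part is $\tfrac12\mathbb{E}[\Delta X\otimes\Delta X]\neq\tfrac12\,\mathbb{E}[\Delta X]^{\otimes 2}$, so closedness of the group is beside the point.

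\emph{The set of truncated expected signatures is not closed in $\mathcal{T}^{(r)}(V)$,} because it contains a truncated moment problem. For the time-extended path in $\mathbb{R}^{1+1}$, every signature has coordinates $(\Delta X)^2/2$ and $(\Delta X)^4/24$ on the words $11$ and $1111$. Take straight-line paths with $\Delta X\sim(1-a^{-4})\delta_0+\tfrac12a^{-4}(\delta_a+\delta_{-a})$. The moments $\mathbb{E}[(\Delta X)^j]$, $j=1,\dots,4$, are $0,a^{-2},0,1$, so the degree-$4$ expected signatures converge as $a\to\infty$. The limit has $11$-coordinate $0$ and $1111$-coordinate $1/24$, which no law realises, since $\mathbb{E}[(\Delta X)^2]=0$ forces $\Delta X=0$ a.s. These laws are finitely supported with $\mathbb{E}|\Delta X|^p\le 1$ for $p\le 4$. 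So neither finite nor uniformly bounded moments up to the truncation degree give closedness, even at the paper's $k=4$. The factorial-decay conditions you cite for the untruncated case concern characteristicness (uniqueness of the law given $\hat{\Phi}$), not closedness of the range.

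To save your route, you would need two further steps, neither of which is in the proposal:
\begin{itemize}
\item Restrict to a class with a uniform bound on a moment of order strictly above the truncation degree. Then the truncated signatures are uniformly integrable, and Prokhorov's theorem sends limits of proxies to the proxy of the weak limit.
\item Show that $\mathcal{T}^\pi$ preserves that class, for instance by an invariant-ball argument with radius of order $\|R\|_\infty/(1-\gamma)$.
\end{itemize}
Your invariance paragraph also never defines how $\mathcal{T}^\pi$ acts on proxies.

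There is also a direction error in Step~1. Proposition~\ref{pro:injectivity_sufficiency} says the proxy determines the path law, which only makes $\mathcal{M}$ a well-defined map; that alone already gives definiteness. For $d_{\mathcal{Q}}(\eta_1,\eta_2)$ to be a function of the return laws, you need the converse: each $\eta$ has a unique encoding proxy, i.e.\ $\mathcal{M}$ is injective. This fails in general. If $\mathbf{w}_G$ reads only level one, path laws differing only in expected L\'evy area give the same return law but different proxies.

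The cleanest fix is to adopt the paper's choice of space. Otherwise, work directly on a closed, $\mathcal{T}^\pi$-invariant set of proxies with an explicitly lifted operator.
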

See proof in Appendix (\ref{app:proof_fixed_point_uniqueness}). \\

\subsection{Generalisation and complexity}

The capacity of the Anticipatory Value Function (AVF) to generalise across unseen trajectories is analysed through the lens of Rademacher complexity within the signature RKHS, accounting for the spectral properties of the whitened path-law proxy.

\begin{theorem}[Value Function Generalisation]
\label{thm:value_function_generalisation}
Let $\mathcal{V}$ be the class of anticipatory value functions $V(t, T; \hat{X}_{T|t}, \hat{\Phi}_{T|t}) = \langle \mathbf{w}_G, \hat{\Phi}_{T|t} \rangle_{\mathcal{H}_{sig}}$ with $\|\mathbf{w}_G\|_{\mathcal{H}_{sig}} \le B$. The Rademacher complexity $\mathcal{R}_n(\mathcal{V})$ over a sample of $n$ anticipated trajectories is bounded by:
\begin{equation}
    \mathcal{R}_n(\mathcal{V}) \le \frac{B}{n} \sqrt{\sum_{i=1}^n \|\hat{\Phi}_{T|t}^{(i)}\|_{\mathcal{Q}_T}^2}.
\end{equation}
Since the AVNSG precision operator $\mathcal{Q}_T$ performs spectral whitening on the signature coordinates, the norm $\|\hat{\Phi}_{T|t}\|_{\mathcal{Q}_T}$ remains bounded even for jump-diffusions with heavy tails or structural breaks. This ensures that the generalisation error $\epsilon_{gen} \sim O(1/\sqrt{n})$ is stable against black-swan events, as the self-consistent proxy dampens the impact of extreme path realisations.
\end{theorem}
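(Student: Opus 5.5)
The bound is the standard empirical Rademacher estimate for norm-bounded linear classes in a Hilbert space, carried out in the $\mathcal{Q}_T$-geometry. The one point needing care is how the constraint $\|\mathbf{w}_G\|_{\mathcal{H}_{sig}} \le B$ interacts with the $\mathcal{Q}_T$-norm of the features.

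\textbf{Step 1: choose the geometry.} Since $\mathcal{Q}_T$ is the AVNSG precision operator, I take it to be self-adjoint and positive definite on the truncated space $\mathcal{T}^{(r)}(V)$. I then equip $\mathcal{H}_{sig}$ with the whitened inner product $\langle u, v\rangle_{\mathcal{Q}_T} := \langle u, \mathcal{Q}_T v\rangle$. Writing $\tilde{\mathbf{w}} := \mathcal{Q}_T^{-1}\mathbf{w}_G$, each value function becomes
\begin{equation*}
\langle \mathbf{w}_G, \hat{\Phi}\rangle = \langle \tilde{\mathbf{w}}, \hat{\Phi}\rangle_{\mathcal{Q}_T}.
\end{equation*}
I will interpret the norm constraint as $\|\tilde{\mathbf{w}}\|_{\mathcal{Q}_T} \le B$, i.e.\ the weight is bounded in the dual whitened norm. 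In the unwhitened case $\mathcal{Q}_T = I$ this reduces to the stated hypothesis. Otherwise it is an implicit convention that I will state explicitly.

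\textbf{Step 2: the core estimate.} For fixed sampled proxies $\hat{\Phi}^{(1)},\dots,\hat{\Phi}^{(n)}$ and Rademacher signs $\sigma_i$, linearity of the inner product and Cauchy--Schwarz in $\langle\cdot,\cdot\rangle_{\mathcal{Q}_T}$ give
\begin{equation*}
\mathcal{R}_n(\mathcal{V}) = \frac{1}{n}\mathbb{E}_\sigma \sup_{\tilde{\mathbf{w}}} \Big\langle \tilde{\mathbf{w}}, \sum_i \sigma_i \hat{\Phi}^{(i)} \Big\rangle_{\mathcal{Q}_T} \le \frac{B}{n}\,\mathbb{E}_\sigma \Big\| \sum_i \sigma_i \hat{\Phi}^{(i)} \Big\|_{\mathcal{Q}_T}.
\end{equation*}
Jensen's inequality then moves the expectation inside the square root. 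Because $\mathbb{E}[\sigma_i\sigma_j] = \delta_{ij}$, the cross terms vanish and the right-hand side becomes $\frac{B}{n}\sqrt{\sum_i \|\hat{\Phi}^{(i)}\|_{\mathcal{Q}_T}^2}$, which is the claimed bound.

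\textbf{Step 3: the $O(1/\sqrt{n})$ rate and heavy-tail stability.} I need a uniform bound $\|\hat{\Phi}_{T|t}\|_{\mathcal{Q}_T} \le C$. Two facts support this.
\begin{itemize}
\item $\hat{\Phi}_{T|t}$ is an expectation, namely the SCF stationary point of Theorem \ref{thm:convergence_scf_stationary_point}. It therefore lies in the closed convex hull of Marcus-signature values, and truncation at degree $r$ keeps it finite-dimensional.
\item Whitening by $\mathcal{Q}_T$ (inverse-covariance scaling) normalises each coordinate to roughly unit second moment. This gives $\mathbb{E}\|\hat{\Phi}\|_{\mathcal{Q}_T}^2 \lesssim \dim \mathcal{T}^{(r)}(V)$ regardless of tail heaviness, provided the covariance exists.
\end{itemize}
With $\|\hat{\Phi}\|_{\mathcal{Q}_T} \le C$ one gets $\mathcal{R}_n \le BC/\sqrt{n}$. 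The standard symmetrisation plus McDiarmid argument, with a Lipschitz loss, then yields $\epsilon_{gen} = O(1/\sqrt{n})$.

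\textbf{Main obstacle.} Step 3 is the hard part. Under heavy-tailed jumps, second moments of higher signature levels may fail to exist, so a finite whitening operator need not exist. I would first try to rescue it with the truncation $\pi_r$ combined with an assumed moment condition on the jump measure, namely finite $2r$-th moments for the Marcus increments. Failing that, I would state the uniform $\mathcal{Q}_T$-boundedness as a standing assumption on the AVNSG metric.
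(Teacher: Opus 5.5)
Your Steps 1--2 use the same argument as the paper: linearity, Cauchy--Schwarz, Jensen, and $\mathbb{E}[\sigma_i\sigma_j]=\delta_{ij}$. The difference is the geometry you run it in, and that difference matters. The paper works in the plain $\mathcal{H}_{sig}$ inner product. Under the literal hypothesis $\|\mathbf{w}_G\|_{\mathcal{H}_{sig}}\le B$ it therefore obtains $\frac{B}{n}\sqrt{\sum_i\|\hat{\Phi}^{(i)}\|^2_{\mathcal{H}_{sig}}}$, and in its last step simply asserts that the whitened norm may be used instead.

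Your switch to the dual constraint $\langle\mathbf{w}_G,\mathcal{Q}_T^{-1}\mathbf{w}_G\rangle\le B^2$ is not a mere convention; it is what makes the displayed inequality true. With the literal hypothesis, already for $n=1$ one has $\mathcal{R}_1(\mathcal{V})=B\|\hat{\Phi}^{(1)}\|_{\mathcal{H}_{sig}}$. This exceeds the claimed bound whenever $\langle\hat{\Phi}^{(1)},\mathcal{Q}_T\hat{\Phi}^{(1)}\rangle<\|\hat{\Phi}^{(1)}\|^2_{\mathcal{H}_{sig}}$. That happens, for instance, along any eigendirection of $\mathcal{Q}_T=(\Sigma_T+\lambda I)^{-1/2}$ whose covariance eigenvalue exceeds $1-\lambda$, which are exactly the high-variance directions the theorem is about. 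Your reading also matches the implementation section, where the linear head acts on Nystr\"om-whitened features, so the weight bound naturally lives in whitened coordinates. State it as a modified hypothesis rather than an ``implicit convention''.

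On Step 3, neither route actually proves the uniform bound, and you are right to isolate it. The paper's justification, that whitening maps the rough components into a bounded ball, cannot work as stated: a fixed linear $\mathcal{Q}_T$ cannot send an unbounded family of proxies into a bounded set. Your convex-hull remark likewise gives nothing, because the hull of signature values is unbounded.

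What whitening can deliver is an in-expectation bound on $\|\hat{\Phi}\|^2_{\mathcal{Q}_T}$. This needs the covariance to exist and $\mathcal{Q}_T$ to enter the quadratic form as the regularised inverse second-moment operator, not its square root. That bound already gives $\mathbb{E}\,\mathcal{R}_n(\mathcal{V})\le B\sqrt{\mathbb{E}\|\hat{\Phi}\|^2_{\mathcal{Q}_T}/n}$ by Jensen, so the $O(1/\sqrt{n})$ complexity rate does not need a uniform constant. The uniform $C$ is needed only for the McDiarmid concentration step with a bounded loss, which is precisely where heavy tails bite. Your explicit moment condition, or a standing boundedness assumption, is therefore the honest way to make the black-swan claim precise.
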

See proof in Appendix (\ref{app:proof_value_function_generalisation}). \\

\subsection{Stability under forecast decay}

As the look-ahead horizon $\tau = T-t$ increases, the accumulation of generative uncertainty must be managed to prevent policy divergence. We analyse the stability of the anticipatory value function as the proxy $\hat{\Phi}_{T|t}$ propagates further into the future.

\begin{proposition}[Stability under Forecast Decay]
\label{pro:stability_under_forecast_decay}
The ARL policy remains $\delta$-stable if the dissipation rate of the Marcus-CDE flow $\mathcal{L}_\theta$ (governing the proxy evolution) exceeds the Lyapunov exponent of the underlying stochastic process. Under the signature group action, this condition implies that the norm of the anticipated proxy $\|\hat{\Phi}_{T|t}\|_{\mathcal{Q}_T}$ remains bounded as $T \to \infty$, ensuring that the agent's decisions do not become singular as generative uncertainty accumulates. This maintains the integrity of the "Single-Pass" algebraic evaluation even over extended horizons.
\end{proposition}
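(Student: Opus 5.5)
The plan is to make the heuristic statement precise as a Gr\"onwall/Lyapunov comparison for the proxy flow $\frac{\partial}{\partial s}\hat{\Phi}_{s|t} = \mathcal{L}_\theta(\hat{\Phi}_{s|t};\Phi_{t|\mathcal{A}_t})$ in the $\mathcal{Q}_s$-geometry. The conclusion I would aim for is a uniform-in-$T$ bound on $\|\hat{\Phi}_{T|t}\|_{\mathcal{Q}_T}$.

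First I would fix what the two hypotheses mean. I define the dissipation rate $\kappa>0$ of $\mathcal{L}_\theta$ through a one-sided Lipschitz (monotonicity) condition
\begin{equation*}
\langle \Phi-\Psi, \mathcal{Q}_s(\mathcal{L}_\theta(\Phi)-\mathcal{L}_\theta(\Psi))\rangle_{\mathcal{H}_{sig}} \le -\kappa \|\Phi-\Psi\|_{\mathcal{Q}_s}^2 .
\end{equation*}
Since the Marcus-Score Matching objective (Definition \ref{def:marcus_score_matching}) forces $\mathcal{L}_\theta$ to track the expected signature increments of the ANJD, I would read the top Lyapunov exponent $\lambda$ of the underlying process as the growth rate $\|\mathcal{L}_\theta(\mathbf{1})\|_{\mathcal{Q}_s}\lesssim e^{\lambda (s-t)}$ of the forcing term, or more crudely as a bound on the non-dissipative part. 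I would also assume $\mathcal{Q}_s$ varies slowly, with $\|\dot{\mathcal{Q}}_s\|\le \epsilon\,\mathcal{Q}_s$. All of this is done after truncation by $\pi_r$, so we work in the finite-dimensional $\mathcal{T}^{(r)}(V)$, where the flow is a well-posed ODE.

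Second, the energy estimate. Set $E(s)=\|\hat{\Phi}_{s|t}-\mathbf{1}\|_{\mathcal{Q}_s}^2$, which vanishes at $s=t$ because $\hat{\Phi}_{t|t}=\mathbf{1}$. Differentiating and splitting $\mathcal{L}_\theta(\hat{\Phi})=[\mathcal{L}_\theta(\hat{\Phi})-\mathcal{L}_\theta(\mathbf{1})]+\mathcal{L}_\theta(\mathbf{1})$ gives
\begin{equation*}
\dot E \le -(2\kappa-\epsilon)E + 2\sqrt{E}\,\|\mathcal{L}_\theta(\mathbf{1})\|_{\mathcal{Q}_s}.
\end{equation*}
Applying Gr\"onwall to $\sqrt{E}$ then yields
\begin{equation*}
\sqrt{E(T)}\le \int_t^T e^{-(\kappa-\epsilon/2)(T-u)}\|\mathcal{L}_\theta(\mathbf{1})\|\,du .
\end{equation*}
This is bounded uniformly in $T$ exactly when $\kappa-\epsilon/2>\lambda$, which is the "dissipation exceeds Lyapunov exponent" condition. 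The triangle inequality with $\|\mathbf{1}\|_{\mathcal{Q}_T}$ then gives the claimed boundedness of $\|\hat{\Phi}_{T|t}\|_{\mathcal{Q}_T}$.

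Third, I would translate this into $\delta$-stability of the policy. By Proposition \ref{pro:linearity_anticipated_value} and the Manifold Sensitivity in Theorem \ref{thm:analytical_sensitivity_generalised}, $V$ and its gradient are bounded linear images of the proxy, with constants controlled by $\|\mathbf{w}_G\|$ and the $\mathcal{Q}$-whitening. A perturbation $\delta$ of the initial data $\Phi_{t|\mathcal{A}_t}$ then propagates, by the same contraction estimate applied to the difference of two flows, to a perturbation of order $\delta\,e^{-(\kappa-\epsilon/2)(T-t)}$ in $\hat{\Phi}_{T|t}$. Consequently the risk-rectified advantage, and hence the policy gradient, stays $O(\delta)$ uniformly in the horizon, and the decision map does not become singular.

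The main obstacle is the second step. On the signature group the flow is driven multiplicatively: Chen's identity makes $\mathcal{L}_\theta$ act as $\hat{\Phi}\otimes\xi$ with $\xi\in\mathfrak{g}_{sig}$. Under this action the $\mathcal{Q}$-norm of group-like elements typically grows factorially across levels rather than contracting, so a genuine dissipation constant $\kappa$ is incompatible with staying exactly on the group. I would first try to impose dissipativity on the Lie-algebra (log-signature) coordinates, where the dynamics are additive. Boundedness would then be transferred to $\hat{\Phi}$ through the truncated exponential map, which is polynomial in the log-coordinates at fixed degree $r$. If that fails, I would state the result for the truncated, whitened proxy only, with $\kappa$ defined relative to $\mathcal{Q}_s$.
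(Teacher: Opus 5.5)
Your second step does not give the threshold you claim. Under your primary reading of the Lyapunov exponent, $\|\mathcal{L}_\theta(\mathbf{1})\|_{\mathcal{Q}_u}\le Ce^{\lambda(u-t)}$, your Duhamel bound evaluates (with $\kappa':=\kappa-\epsilon/2$ and $\tau:=T-t$) to
\begin{equation*}
\sqrt{E(T)}\le\int_0^{\tau}e^{-\kappa'(\tau-v)}\,Ce^{\lambda v}\,dv=\frac{C\bigl(e^{\lambda\tau}-e^{-\kappa'\tau}\bigr)}{\kappa'+\lambda}.
\end{equation*}
This diverges for every $\lambda>0$, however large $\kappa$ is: a forcing term growing at rate $\lambda$ is never tamed by dissipation. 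If instead the forcing is merely bounded, boundedness holds for every $\kappa'>0$. In neither case does the comparison $\kappa'>\lambda$ appear. That comparison only arises when $\lambda$ enters the one-sided Lipschitz constant, as the rate at which nearby solutions separate. This is how the paper uses it, via $\beta=\lambda_F+\lambda_L$. It is also what your parenthetical ``non-dissipative part'' reading should mean, but your inequality for $\dot E$ does not include it. The repair is to assume
\begin{equation*}
\langle\Phi-\Psi,\mathcal{Q}_s(\mathcal{L}_\theta(\Phi)-\mathcal{L}_\theta(\Psi))\rangle_{\mathcal{H}_{sig}}\le-(\kappa-\lambda)\|\Phi-\Psi\|_{\mathcal{Q}_s}^2,\qquad M:=\sup_{s\ge t}\|\mathcal{L}_\theta(\mathbf{1})\|_{\mathcal{Q}_s}<\infty.
\end{equation*}
Your own computation then gives $\sqrt{E(T)}\le M/(\kappa-\lambda-\epsilon/2)$ uniformly in $T$ whenever $\kappa>\lambda+\epsilon/2$.

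With that repair, your route differs from the paper's and is sounder. The paper bounds the tracking error $\mathbf{e}_s=\hat{\Phi}_{s|t}-\Phi_s$ against the true proxy. It absorbs the misspecification term $\langle\mathbf{e}_s,F_\theta(\Phi_s)-\mathcal{T}(\Phi_s)\rangle$ into $\lambda_L\|\mathbf{e}_s\|^2$, which is unjustified because that term is only linear in $\|\mathbf{e}_s\|$. Combined with $\mathbf{e}_t=0$, it would even force $\mathbf{e}_s\equiv0$. The paper also never bounds $\|\hat{\Phi}_{T|t}\|_{\mathcal{Q}_T}$ itself. Your Cauchy--Schwarz and Gr\"onwall-on-$\sqrt{E}$ treatment of the forcing is the correct one, and centring at $\mathbf{1}$ targets the norm bound actually stated.

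Two smaller points. First, in your third step $\Phi_{t|\mathcal{A}_t}$ is the conditioning argument of $\mathcal{L}_\theta$, not initial data; the initial value is $\mathbf{1}$, as you used yourself. A $\delta$-perturbation therefore acts as a persistent forcing. If $\mathcal{L}_\theta$ is $L$-Lipschitz in that argument, you get $L\delta/(\kappa-\lambda-\epsilon/2)$, which is uniform in $T$ but not exponentially decaying.

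Second, your obstacle is sharper than you state it. For the time-extended path, the level-one time coordinate of $\hat{\Phi}_{T|t}$ (and of its logarithm) equals $T-t$ once the proxy is honest (Theorem~\ref{thm:convergence_scf_stationary_point}). That coordinate is deterministic, so its basis direction lies in the kernel of the signature covariance $\Sigma_T$. The whitening $\mathcal{Q}_T=(\Sigma_T+\lambda_0 I)^{-1/2}$ used in the proof of Theorem~\ref{thm:value_function_generalisation} then gives $\|\hat{\Phi}_{T|t}\|_{\mathcal{Q}_T}\ge\lambda_0^{-1/4}(T-t)$. Dissipativity, in tensor or log coordinates, is therefore incompatible with SCF honesty at long horizons. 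The result can only hold as a conditional statement about the learned flow, or about a time-renormalised proxy.
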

See proof in Appendix (\ref{app:proof_stability_under_forecast_decay}). \\

\section{Implementation details}
\label{sec:implementation_details}

This section outlines the architectural choices required to bridge the path-measure theory with deep learning primitives. We focus on the compression of the signature manifold and the synchronisation of the generative and value-estimation components under the Self-Consistent Field (SCF) framework.

\subsection{Signature layer and Nystr\"om compression}

To handle the exponential growth of signature dimensions, we implement a \textbf{Signature Layer} that truncates the expansion at degree $k$. To maintain a manageable input size for the value network while preserving the non-commutative geometry of the anticipated law, we apply Nystr\"om-based kernel approximation.

\begin{itemize}
    \item \textbf{Truncation:} For an observation $X \in \mathbb{R}^d$, we compute the Marcus-signature $S(X)_{t,T}$ up to degree $k$. For multi-asset time-series with $d=5$ and $k=4$, the raw feature space $\mathcal{H}_{sig}$ reaches 781 dimensions. Truncation depth is chosen to balance the resolution of path-moments against computational overhead.
    \item \textbf{Dimensionality Reduction:} We define a projection $\Psi: \mathcal{H}_{sig} \to \mathbb{R}^M$ using a set of landmark signatures $\{ \zeta_j \}_{j=1}^M$ sampled from the historical filtration $\mathcal{A}$. The compressed anticipated proxy is given by:
    \begin{equation}
        \tilde{\Phi}_{T|t} = K_{MM, \sigma}^{-1/2} [ \kappa( \hat{\Phi}_{T|t}, \zeta_1), \dots, \kappa( \hat{\Phi}_{T|t}, \zeta_M) ]^\top
    \end{equation}
    where $\kappa$ is the signature kernel and $K_{MM}$ is the landmark Gram matrix. This spectral whitening transformation ensures that the AVNSG metric $\mathcal{Q}_T$ is numerically stable and that the feature space is appropriately scaled for the linear value function weights $\mathbf{w}_G$.
    \item \textbf{Computational Efficiency:} The $O(M)$ Nystr\"om projection allows the agent to process the high-dimensional signature manifold in $O(1)$ time relative to the sample-path count, enabling real-time anticipatory evaluation.
\end{itemize}

\subsection{Unified neural CDE-RL architecture}

The system consists of two primary modules coupled to solve the SCF problem and maintain temporal consistency through the signature manifold.

\begin{enumerate}
    \item \textbf{Generative Module (ANJD-CDE):} This module evolves the anticipated path-law proxy $\hat{\Phi}_{T|t}$ and generates the conditional mean skeleton $\hat{X}_{T|t}$. It is implemented as a Latent Jump-Flow network where the vector field is parameterised by a neural network $\psi_\theta$. The Marcus-correction is applied during the integration step of the CDE to handle non-commutative jump-discontinuities. This module ensures the proxy remains a "self-consistent" expectation of the future path-law.
    
    \item \textbf{Control Module (Policy/Value Network):} A deep architecture $V_\omega$ accepts the anticipatory state-proxy tuple $(t, T; \hat{X}_{T|t}, \hat{\Phi}_{T|t})$. Following the results of Proposition (\ref{pro:linearity_anticipated_value}), the value head is structured as a linear projection of the Nystr\"om-compressed signature $\tilde{\Phi}_{T|t}$ without non-linear activation. This design allows for the direct extraction of the signature weights $\mathbf{w}_G$ and the analytical computation of the "Signature Greeks," ensuring that the value evaluation is a transparent, differentiable operation on the anticipated path-law.
\end{enumerate}

\subsection{Synchronised training protocol (SCF optimisation)}

To satisfy the SCF consistency and $C^1$-boundary conditions, we employ a joint loss function $\mathcal{L}(\theta, \mathbf{w}_G)$ that synchronises the generative parameters $\theta$ and the value weights $\mathbf{w}_G$. The optimisation objective is decomposed into three functional roles, where the value component is now formulated as a rolling-window summation over the anticipated horizon:

\begin{equation}
    \mathcal{L}(\theta, \mathbf{w}_G) = \underbrace{\| \text{Flow}_{t \to t}(\mathbf{1}; \Phi_{t|\mathcal{A}_t}) - \mathbf{1} \|^2}_{\text{Identity Grounding}} + \underbrace{\sum_{s=t}^{T-1} \frac{1}{2} ( \delta_{s|t}^A )^2}_{\text{Anticipatory TD}} +  \underbrace{\eta \| \bar{S}_{t,T} - \hat{\Phi}_{T|t} \|_{\mathcal{Q}_T}^2}_{\text{SCF Stationary Point Constraint}}
\end{equation}
where the anticipated law-consistent ensemble $\bar{S}_{t,T} = \frac{1}{N} \sum_{i=1}^{N} S(\tilde{\mathbf{X}}^{(i)})_{t,T}$ is generated via the ANJD under the policy $\pi$.

\begin{itemize}
    \item \textbf{Identity Grounding:} This term anchors the generative Marcus-CDE to the group identity at the temporal junction. It ensures that as $T \to t$, the anticipated proxy $\hat{\Phi}_{T|t}$ correctly recovers $\mathbf{1}$, grounding the future extension in the filtered historical baseline $\Phi_{t|\mathcal{A}_t}$. This enforces the $C^0$ continuity required for stable gradient propagation across the junction.
    
    \item \textbf{Anticipatory TD:} In accordance with Proposition (\ref{pro:anticipatory_td_gradient}), this component updates the value weights $\mathbf{w}_G$ by minimising the cumulative squared TD-error along the projected path. By utilising the algebraic residual $\delta_{s|t}^A = R(\hat{X}_{s:s+1}) + \langle \mathbf{w}_G, (\gamma \hat{\Phi}_{s+1|t}^{-1} - \hat{\Phi}_{s|t}^{-1}) \otimes \hat{\Phi}_{T|t} \rangle$, the agent performs a rolling-window gradient descent that aligns the value function with the smooth geometry of the signature manifold. It allows the agent to learn the linear signature weights $\mathbf{w}_G$ that characterise the path-dependent reward landscape.
    
    \item \textbf{SCF Stationary Point Constraint:} This term enforces "generative honesty." It updates the ANJD parameters $\theta$ (the drift, diffusion, and jump intensities) to minimise the distance between the deterministic proxy $\hat{\Phi}_{T|t}$ and the empirical signature average of the generated ensemble. This ensures the proxy used for $O(1)$ evaluation remains a mathematically valid representation of the underlying stochastic flow, satisfying the self-consistent field equilibrium.
\end{itemize}

\subsection{Hyperparameter specifications}

The selection of hyperparameters is calibrated to balance the representation power of the signature manifold with the numerical stability of the Self-Consistent Field (SCF) optimisation. 

\begin{itemize}
    \item \textbf{Signature Truncation ($k=4$):} Setting the degree to 4 captures up to the fourth-order non-commutative moments (including lead-lag effects and joint kurtosis). This level of resolution is typically sufficient to distinguish between complex jump-diffusion regimes without encountering the "curse of dimensionality" associated with higher-order tensor spaces.
    
    \item \textbf{Nystr\"om Landmarks ($M=128$):} The choice of 128 landmarks provides a low-rank approximation that covers the principal variations of the path-measure. This compression ensures that the AVNSG precision operator $\mathcal{Q}_s$ remains well-conditioned during the spectral whitening process.
    
    \item \textbf{SCF Penalty ($\eta=0.1$):} A moderate penalty term ensures that the generative model prioritises "honesty" (matching the proxy to the ensemble) without over-constraining the policy's ability to minimise the Anticipatory TD-error.
    
    \item \textbf{Log-ODE Solver:} We employ a Marcus-compliant Log-ODE scheme to ensure that the latent CDE correctly interprets discrete jumps as local coordinate shifts rather than continuous steep gradients, preserving the geometric integrity of the path-law.
\end{itemize}

\begin{table}[h]
\centering
\caption{Baseline Configuration for Jump-Diffusion Environment}
\begin{tabular}{|l|l|}
\hline
\textbf{Parameter} & \textbf{Value} \\ \hline
Signature Truncation ($k$) & 4 \\
Nyström Landmarks ($M$) & 128 \\
Latent CDE Solver & Log-ODE (Marcus-compliant) \\
Discount Factor ($\gamma$) & 0.99 \\
SCF Penalty ($\eta$) & 0.1 \\
Optimiser & AdamW ($\text{lr}=3 \times 10^{-4}$) \\ \hline
\end{tabular}
\end{table}

\section{Conclusion}
\label{sec:conclusion}

In this work, we have introduced the \textbf{Anticipatory Reinforcement Learning (ARL)} framework, a novel paradigm that resolves the long-standing tension between non-Markovian observational filtrations and the requirements of recursive decision-making. By lifting the state space into the signature-augmented manifold $\mathcal{S}_{sig}$, we have demonstrated that the "history" of a stochastic process is not merely a sequence of past observations to be compressed, but a dynamical coordinate that restores Markovian sufficiency through the non-commutative geometry of the Marcus-signature path-law proxy.

\medskip
\noindent
Our primary contribution, the \textbf{Anticipatory Value Function (AVF)}, enables a fundamental shift in policy evaluation. By representing the value function as a linear functional in the signature Hilbert space, we collapse the computationally prohibitive branching of Monte Carlo Tree Search into a deterministic "Single-Pass" evaluation. This efficiency is underpinned by the \textbf{Self-Consistent Field (SCF)} equilibrium, which ensures that the agent's internal generative model and the latent path-law proxy remain synchronised. Furthermore, our derivation of the \textbf{Anticipatory TD-Error} ($\delta_t^A$ and its generalised horizon form $\delta_{s|t}^A$) provides a variance-reduced learning signal that allows the agent to learn directly from the anticipated topological evolution of the environment.

\medskip
\noindent
The theoretical analysis confirms that the ARL framework maintains the rigorous contraction properties of distributional RL. Specifically, the use of the \textbf{AVNSG metric} and spectral whitening ensures that the framework remains robust against the "black-swan" events and heavy-tailed noise inherent in jump-diffusion processes. Through the lens of Rademacher complexity, we have shown that our linear representation on the signature manifold achieves stable generalisation, even in high-frequency environments where traditional history-augmentation methods succumb to the curse of dimensionality.

\medskip
\noindent
The practical implementation, utilising \textbf{Nystr\"om-compressed Signature Layers} and \textbf{Marcus-compliant Neural CDEs}, offers a scalable architecture for real-time control. By providing analytical "Signature Greeks," the ARL agent can proactively rectify its policy against impending structural breaks and volatility shifts. Future research will focus on extending this manifold-based anticipation to multi-agent settings and exploring higher-order topological invariants to further refine the agent's foresight in increasingly chaotic stochastic systems.


\newpage

\section*{Appendix}\thispagestyle{plain}


\section{Proofs of the main results}
\label{sec:proofs_main_results}

We refer the readers to Kiraly et al. ~\cite{KiralyEtAl19}, Cuchiero et al. ~\cite{CuchieroEtAl25} for the Universal Approximation Theorem for signatures and the density property.

\subsection{Proof of the injectivity and Sufficiency}
\label{app:proof_injectivity_sufficiency}

In this appendix we prove Proposition (\ref{pro:injectivity_sufficiency}).  

\begin{proof}
The proof rests on the density of linear functionals on the signature space and the uniqueness of the expected signature as a representation of the path-measure.

\textit{1. Injectivity:} 
By the Universal Approximation Theorem for signatures, the set of linear functionals $\mathcal{L}(\mathcal{H}_{sig}) = \{ \mathbf{X} \mapsto \langle \mathbf{w}, S(\mathbf{X}) \rangle : \mathbf{w} \in \mathcal{H}_{sig} \}$ is dense in the space of continuous functionals $\mathcal{F}(\mathcal{D})$ on the Skorokhod space. For any two conditional path-measures $\mu_1, \mu_2$ defined by the filtrations $\mathcal{A}_t^{(1)}$ and $\mathcal{A}_t^{(2)}$, suppose $\Phi_{t|\mathcal{A}_t^{(1)}} = \Phi_{t|\mathcal{A}_t^{(2)}}$. Then for all $\mathbf{w} \in \mathcal{H}_{sig}$:
\begin{equation}
    \int S(\mathbf{X})_{0,t} \, d\mu_1 = \int S(\mathbf{X})_{0,t} \, d\mu_2 \implies \langle \mathbf{w}, \Phi_{t|\mathcal{A}_t^{(1)}} \rangle = \langle \mathbf{w}, \Phi_{t|\mathcal{A}_t^{(2)}} \rangle.
\end{equation}
Due to the density of linear functionals, this equality extends to all continuous functionals of the path history. By the Riesz-Markov-Kakutani representation theorem, if the expectations of all continuous functionals coincide, the measures must be identical: $\mu_1 = \mu_2$. Thus, $\Phi_{t|\mathcal{A}_t}$ uniquely identifies the conditional law of the path history.

\textit{2. Sufficiency:}
A statistic is sufficient if the conditional expectation of any future path-dependent reward $G_t$ depends on the filtration $\mathcal{A}_t$ only through that statistic. Let $G_t$ depend on the future path extension $\mathbf{X}_{t:T}$. By the universal approximation property, there exists a weight vector $\mathbf{w}_{G}$ such that $G_t(\mathbf{X}) \approx \langle \mathbf{w}_{G}, S(\mathbf{X})_{t,T} \rangle$. The conditional expectation is:
\begin{equation}
    \mathbb{E}[G_t \mid \mathcal{A}_t] = \mathbb{E}[\langle \mathbf{w}_{G}, S(\mathbf{X})_{t,T} \rangle \mid \mathcal{A}_t] = \langle \mathbf{w}_{G}, \hat{\Phi}_{T|t} \rangle,
\end{equation}
where $\hat{\Phi}_{T|t}$ is the anticipated signature proxy. In the ARL framework, the evolution of the proxy is governed by the deterministic Neural CDE: $\hat{\Phi}_{T|t} = \text{Flow}_{t \to T}(\Phi_{t|\mathcal{A}_t})$. Therefore:
\begin{equation}
    \mathbb{E}[G_t \mid \mathcal{A}_t] = \langle \mathbf{w}_{G}, \text{Flow}_{t \to T}(\Phi_{t|\mathcal{A}_t}) \rangle = \Psi(\Phi_{t|\mathcal{A}_t}),
\end{equation}
where $\Psi$ is a deterministic (potentially non-linear) functional. Since the expected reward depends on $\mathcal{A}_t$ solely through the filtered proxy, the augmented state $\mathbf{S}_t = (t, X_t, \Phi_{t|\mathcal{A}_t})$ is a sufficient statistic for the non-Markovian decision process.
\end{proof}

\subsection{Proof of the existence of stationary value function}
\label{app:proof_existence_stationary_vf}

In this appendix we prove Theorem (\ref{thm:existence_stationary_vf}).

\begin{proof}
The proof relies on the \textit{Markovianisation} of the path-dependent process by lifting it into the signature-augmented manifold $\mathcal{S}_{sig}$.

\textit{1. Markov Property of the Augmented State:} 
By the recursive property of signatures (Chen's Identity), the signature of a path over $[0, t+dt]$ is the tensor product of the signature over $[0, t]$ and the signature of the increment over $[t, t+dt]$. For the filtered proxy, the latent Jump-Flow CDE provides a deterministic transition mapping $\Psi: \mathcal{S}_{sig} \times \mathbb{R}^d \to \mathcal{H}_{sig}$ such that:
\begin{equation}
    \Phi_{t+1|\mathcal{A}_{t+1}} = \Psi(\Phi_{t|\mathcal{A}_t}, \Delta X_{t+1}).
\end{equation}
Since $X_{t+1}$ depends only on the current state $X_t$ and the historical path-law (encoded in $\Phi_{t|\mathcal{A}_t}$), the transition probability satisfies the Markov property in the augmented space:
\begin{equation}
    \mathbb{P}(\mathbf{S}_{t+1} \mid \mathbf{S}_t, \mathbf{S}_{t-1}, \dots, \mathbf{S}_0, a_t) = \mathbb{P}(\mathbf{S}_{t+1} \mid \mathbf{S}_t, a_t).
\end{equation}

\textit{2. Reward Stationarity:}
From Proposition \ref{pro:injectivity_sufficiency}, $\mathbf{S}_t$ is a sufficient statistic for the conditional distribution of any path-dependent reward. Thus, the expected reward can be expressed as a stationary function of the augmented state:
\begin{equation}
    \bar{R}(\mathbf{S}_t, a_t) = \mathbb{E}[R_{t+1} \mid \mathcal{A}_t, a_t].
\end{equation}

\textit{3. Existence via Contraction:}
Define the Bellman operator $\mathcal{T}^\pi$ on the space of bounded continuous functions $\mathcal{B}(\mathcal{S}_{sig})$ as:
\begin{equation}
    (\mathcal{T}^\pi V)(\mathbf{S}) = \bar{R}(\mathbf{S}, \pi(\mathbf{S})) + \gamma \int_{\mathcal{S}_{sig}} V(\mathbf{S}') \mathbb{P}(d\mathbf{S}' \mid \mathbf{S}, \pi(\mathbf{S})).
\end{equation}
Since $0 \leq \gamma < 1$, $\mathcal{T}^\pi$ is a $\gamma$-contraction mapping under the supremum norm. By the Banach Fixed Point Theorem, there exists a unique fixed point $V \in \mathcal{B}(\mathcal{S}_{sig})$ such that $V = \mathcal{T}^\pi V$. This fixed point is the stationary value function in the lifted space.
\end{proof}

\subsection{Proof of the incremental signature update}
\label{app:proof_incremental_signature_update}

In this appendix we prove Lemma (\ref{lem:incremental_signature_update}).  

\begin{proof}
The proof follows from Chen's Identity and the strict measurability of the causal path history.

\textit{1. Chen's Identity:} 
For a path $X$ on $[0, t+dt]$, the signature of the concatenated intervals $[0, t]$ and $[t, t+dt]$ is:
\begin{equation}
    S(X)_{0, t+dt} = S(X)_{0, t} \otimes S(X)_{t, t+dt}.
\end{equation}

\textit{2. Recursive Filtering and Measurability:}
The filtered proxy at $t+dt$ is defined as $\Phi_{t+dt|\mathcal{A}_{t+dt}} = \mathbb{E}[S(X)_{0, t+dt} \mid \mathcal{A}_{t+dt}]$. Using the decomposition above and the fact that $\mathcal{A}_t \subset \mathcal{A}_{t+dt}$:
\begin{equation}
    \Phi_{t+dt|\mathcal{A}_{t+dt}} = \mathbb{E}[S(X)_{0,t} \otimes S(X)_{t, t+dt} \mid \mathcal{A}_{t+dt}].
\end{equation}
Because the rectilinear interpolation $X_{\le t}$ is constructed strictly from the observations up to time $t$, its signature $S(X)_{0,t}$ is exactly $\mathcal{A}_t$-measurable (and consequently $\mathcal{A}_{t+dt}$-measurable). Thus, it acts as a deterministic constant under the conditional expectation and can be pulled out, resolving the conditioning paradox:
\begin{equation}
    \Phi_{t+dt|\mathcal{A}_{t+dt}} = S(X)_{0,t} \otimes \mathbb{E}[S(X)_{t, t+dt} \mid \mathcal{A}_{t+dt}] = \Phi_{t|\mathcal{A}_t} \otimes \mathbb{E}[S(X)_{t, t+dt} \mid \mathcal{A}_{t+dt}].
\end{equation}

\textit{3. Identification with Observational Increments:}
The term $\mathbb{E}[S(X)_{t, t+dt} \mid \mathcal{A}_{t+dt}]$ represents the expected geometric increment given the new observation $dX_t$. In a high-frequency limit or Marcus-lifted setting where the jump path is assumed to follow a canonical flow, this expectation coincides with the signature of the observed increment.

\textit{4. Implications for TD-Error:}
Since $V(\mathbf{S}_t)$ is conditioned on $\Phi_{t|\mathcal{A}_t}$, the temporal difference $\delta_t = R_{t+1} + \gamma V(\mathbf{S}_{t+1}) - V(\mathbf{S}_t)$ explicitly accounts for the evolution of the path-measure's non-commutative moments. This ensures that the reinforcement learning agent updates its value estimates based on the topological growth of the filtered history.
\end{proof}

\subsection{Proof of the linearity of the anticipated value}
\label{app:proof_linearity_anticipated_value}

In this appendix we prove Proposition (\ref{pro:linearity_anticipated_value}).  

\begin{proof}
The proof relies on the duality between continuous path functionals and the signature Hilbert space, combined with the deterministic flow property of the anticipated proxy.

\textit{1. Functional Approximation of Future Returns:} 
By the Universal Approximation Theorem for signatures, any continuous path-dependent future reward functional $G_{t:T}: \mathcal{D}([t,T], \mathbb{R}^d) \to \mathbb{R}$ can be approximated by a linear functional on the signature space. There exists a weight vector $\mathbf{w}_{G} \in \mathcal{H}_{sig}$ such that for a future path realisation $\mathbf{X}_{t:T}$:
\begin{equation}
    G_{t:T}(\mathbf{X}_{t:T}) = \langle \mathbf{w}_{G}, S(\mathbf{X})_{t,T} \rangle_{\mathcal{H}_{sig}}.
\end{equation}

\textit{2. Value Function as Conditional Expectation:}
The stationary value function is defined as the conditional expectation of the future reward given the current filtration $\mathcal{A}_t$. Utilising the linearity of the inner product and the conditional expectation operator:
\begin{equation}
    V(t, T; \hat{X}_{T|t}, \hat{\Phi}_{T|t}) = \mathbb{E} [ G_{t:T} \mid \mathcal{A}_t ] = \mathbb{E} \left[ \langle \mathbf{w}_{G}, S(\mathbf{X})_{t,T} \rangle_{\mathcal{H}_{sig}} \mid \mathcal{A}_t \right] = \langle \mathbf{w}_{G}, \mathbb{E}[ S(\mathbf{X})_{t,T} \mid \mathcal{A}_t ] \rangle_{\mathcal{H}_{sig}}.
\end{equation}

\textit{3. Identification with the Anticipated Proxy:}
By definition, the anticipated path-law proxy $\hat{\Phi}_{T|t}$ for the future segment is the conditional expectation of the future signature, $\mathbb{E}[ S(\mathbf{X})_{t,T} \mid \mathcal{A}_t ]$. Substituting this yields the linear representation in terms of the anticipated law:
\begin{equation}
    V(t, T; \hat{X}_{T|t}, \hat{\Phi}_{T|t}) = \langle \mathbf{w}_{G}, \hat{\Phi}_{T|t} \rangle_{\mathcal{H}_{sig}}.
\end{equation}

\textit{4. Dependency on the Filtered Proxy:}
In the ARL framework, the transition to the anticipated future law is governed by the deterministic flow of the generative model conditioned on the past. The future segment signature initialises at the group identity $\mathbf{1}$ at time $t$ and evolves guided by the history $\Phi_{t|\mathcal{A}_t}$:
\begin{equation}
    V(t, T; \hat{X}_{T|t}, \hat{\Phi}_{T|t}) = \langle \mathbf{w}_{G}, \text{Flow}_{t \to T}(\mathbf{1}; \Phi_{t|\mathcal{A}_t}) \rangle_{\mathcal{H}_{sig}}.
\end{equation}
This confirms that the expected future value is a linear functional of the anticipated segment law, which itself is a deterministic extension completely parameterised by the filtered history $\Phi_{t|\mathcal{A}_t}$.
\end{proof}

\subsection{Proof of the Markovianisation via topological embedding}
\label{app:proof_markovianisation_topological_embedding}

In this appendix we prove Proposition (\ref{pro:markovianisation_topological_embedding}).

\begin{proof}
The proof demonstrates that the lifting of the non-Markovian path-law into the signature manifold $\mathcal{S}_{sig}$ via the topological embedding $\mathcal{P}_\theta$ results in an autonomous dynamical system that satisfies the requirements for Markovianity.

\textit{1. Autonomous Latent Dynamics:}
The evolution of the anticipated path-law proxy $\hat{\Phi}_{s|t}$ is defined by the Controlled Differential Equation (CDE) or its equivalent Neural ODE form in the latent space:
\begin{equation}
    \frac{d}{ds} \hat{\Phi}_{s|t} = \psi_\theta(\hat{\Phi}_{s|t}), \quad s \in [t, T]
\end{equation}
where $\psi_\theta: \mathcal{H}_{sig} \to T\mathcal{H}_{sig}$ is a Lipschitz continuous vector field parameterised by the topological embedding. Because the vector field $\psi_\theta$ depends only on the current value of the proxy and not on the historical trajectory or the conditioning time $t$ explicitly (other than as an initial condition), the system is autonomous.

\textit{2. The Semi-group Property:}
Let $\Psi_{s, \tau}: \mathcal{H}_{sig} \to \mathcal{H}_{sig}$ be the flow map such that $\hat{\Phi}_{s|t} = \Psi_{s, t}(\Phi_{t|\mathcal{A}_t})$. For any intermediate time $r$ where $t \le r \le s$, the uniqueness of solutions to the differential equation implies:
\begin{equation}
    \Psi_{s, t} = \Psi_{s, r} \circ \Psi_{r, t}.
\end{equation}
This semi-group property ensures that the state at time $s$ is fully determined by the state at any prior time $r$, independent of the path taken to reach $r$.

\textit{3. Markovianity in the Lifted Space:}
By definition, a process is Markovian if the future state is conditionally independent of the past given the present. Since $\hat{\Phi}_{s|t}$ satisfies the semi-group property and, by Proposition \ref{pro:injectivity_sufficiency}, $\Phi_{t|\mathcal{A}_t}$ is a sufficient statistic for the path-measure, the transition dynamics in $\mathcal{S}_{sig}$ satisfy:
\begin{equation}
    \mathbb{P}(\hat{\Phi}_{s|t} \mid \{ \hat{\Phi}_{u|t} \}_{u \in [t, r]}) = \mathbb{P}(\hat{\Phi}_{s|t} \mid \hat{\Phi}_{r|t}).
\end{equation}

\textit{4. Local Stationarity:}
Because the mapping from the proxy $\hat{\Phi}_{s|t}$ to the return distribution $\eta^\pi$ is a fixed functional (the Anticipatory Value Function), and the dynamics of $\hat{\Phi}_{s|t}$ are autonomous, the distribution $\eta^\pi(s, \cdot)$ remains stationary relative to the coordinate system of the signature manifold. The apparent non-stationarity in $\mathbb{R}^d$ is thus perfectly "straightened" into a deterministic Markovian flow in $\mathcal{H}_{sig}$.
\end{proof}  

\subsection{Proof of the $C^1$-boundary consistency}
\label{app:proof_boundary_consistency}

In this appendix we prove Theorem (\ref{thm:boundary_consistency}).   

\begin{proof}
The proof establishes the smoothness of the value function transition across the temporal junction by matching the stochastic filtered dynamics with the deterministic anticipatory flow.

\textit{1. $C^0$-Continuity (Value Alignment):}
The anticipated proxy is initialised via the identity $\hat{\Phi}_{t|t} = \Phi_{t|\mathcal{A}_t}$. Substituting this into the prospective linear representation from Proposition (\ref{pro:linearity_anticipated_value}), we have:
\begin{equation}
    \lim_{s \to t^+} V(s, \hat{X}_{s|t}, \hat{\Phi}_{s|t}) = \langle \mathbf{w}_G, \text{Flow}_{t \to T}(\hat{\Phi}_{t|t}) \rangle = \langle \mathbf{w}_G, \text{Flow}_{t \to T}(\Phi_{t|\mathcal{A}_t}) \rangle.
\end{equation}
This matches the definition of $V(t, X_t, \Phi_{t|\mathcal{A}_t})$, ensuring the value function is continuous at the junction $s=t$.

\textit{2. $C^1$-Continuity (Gradient Alignment):}
The spatial gradient $\nabla_X V$ reflects the sensitivity of the expected future return to the current state. At $s=t$, the filtered proxy $\Phi_{t|\mathcal{A}_t}$ captures the infinitesimal expected signature increment $\mathbb{E}[S(X)_{t, t+dt} \mid \mathcal{A}_{t+dt}]$. In the ARL framework, the generator $\mathcal{L}_\theta$ of the anticipatory flow is trained via Marcus-Score Matching to satisfy:
\begin{equation}
    \left. \frac{\partial \hat{\Phi}_{s|t}}{\partial s} \right|_{s=t} = \lim_{dt \to 0} \frac{\mathbb{E}[S(X)_{t, t+dt} \mid \mathcal{A}_{t+dt}] - \mathbf{1}}{dt}.
\end{equation}
Because the operator-valued generator $\mathcal{L}_\theta$ is $C^1$ on the signature manifold, the directional derivative of the value function with respect to the state $X_t$ (historical) must align with the derivative with respect to the generative drift $\hat{X}_{s|t}$ (anticipated). Thus, $\nabla_X V|_{s=t} = \nabla_{\hat{X}} V|_{s=t}$.

\textit{3. Policy Gradient Stability:}
The alignment of the tangential dynamics ensures that the adjoint state in the Pontryagin Sense remains continuous across the boundary. Consequently, the policy gradient $\nabla_\theta J$ remains stable as the agent transitions from the retrospective observational regime to the prospective generative regime, preventing vanishing or exploding gradients at the decision epoch.
\end{proof}

\subsection{Proof of the manifold-conditioned return distributions}
\label{app:proof_manifold_conditioned_return_distributions}

In this appendix we prove Proposition (\ref{pro:manifold_conditioned_return_distributions}).

\begin{proof}
The proof demonstrates that the apparent non-stationarity in the physical observation space $\mathbb{R}^d$ is a projection of a stationary, autonomous flow in the signature-augmented Hilbert space $\mathcal{H}_{sig}$.

\textit{1. State Space Embedding and Sufficiency:}
Let the non-stationary process in $\mathbb{R}^d$ be denoted by $X_t$. From Proposition \ref{pro:injectivity_sufficiency}, the signature-augmented state $\mathbf{S}_t = (X_t, \Phi_{t|\mathcal{A}_t})$ is a sufficient statistic for the future path-law. By lifting the process into $\mathcal{S}_{sig} = \mathbb{R}^d \times \mathcal{H}_{sig}$, we account for the historical dependencies that cause temporal drift in the original coordinates.

\textit{2. Autonomous Flow in the Signature Manifold:}
The anticipated proxy $\hat{\Phi}_{s|t}$ evolves according to the latent Jump-Flow CDE: $d\hat{\Phi}_{s|t} = \psi_\theta(\hat{\Phi}_{s|t}) ds$. Since the vector field $\psi_\theta$ is independent of the absolute time $t$ (depending only on the current coordinate in the manifold), the dynamics in $\mathcal{H}_{sig}$ are autonomous. Consequently, the transition kernel $P(\mathbf{S}_{s+ds} \mid \mathbf{S}_s)$ is time-invariant in the lifted space.

\textit{3. Stationarity of the Distributional Mapping:}
The return distribution $\eta^\pi$ is a mapping from the state space to the space of probability measures $\mathcal{P}(\mathbb{R})$. In the physical space $\mathbb{R}^d$, this mapping $\eta_t(X_t)$ must change over time to account for the hidden path-dependency (non-stationarity). However, in the lifted space, we define a fixed functional $F: \mathcal{S}_{sig} \to \mathcal{P}(\mathbb{R})$ such that:
\begin{equation}
    \eta^\pi(s, \hat{X}_{s|t}, \hat{\Phi}_{s|t}) = F(\hat{X}_{s|t}, \hat{\Phi}_{s|t}).
\end{equation}
Because $F$ is a stationary functional and the dynamics of $(\hat{X}, \hat{\Phi})$ are autonomous, the return distribution is locally stationary with respect to the proxy coordinates.

\textit{4. Geometric Interpretation:}
What appears as a non-stationary "shift" in the reward distribution or transition probabilities in $\mathbb{R}^d$ is mathematically equivalent to the deterministic movement of the proxy $\hat{\Phi}_{s|t}$ along a geodesic or trajectory in the signature manifold. The non-stationarity is thus "rectified" into a stationary law conditioned on a dynamic, topological coordinate.
\end{proof}

\subsection{Proof of the invariance of the meta-policy}
\label{app:proof_invariance_meta_policy}

In this appendix we prove Lemma (\ref{lem:invariance_meta_policy}).  

\begin{proof}
The proof relies on the construction of an autonomous Markov Decision Process (MDP) in the signature-augmented manifold, which rectifies the temporal drift of the original environment.

\textit{1. Reduction to an Autonomous Bellman Equation:}
In the original physical space $\mathbb{R}^d$, the optimal policy $\pi^*_t(X_t)$ is non-stationary because the transition kernel $P_t(X_{t+1} \mid X_t)$ and reward function $R_t(X_t, a_t)$ depend explicitly on time $t$ (or the filtration $\mathcal{A}_t$). However, in the augmented state space $\mathcal{S}_{sig} = \mathbb{R}^d \times \mathcal{H}_{sig}$, the state is defined as $\mathbf{S}_s = (\hat{X}_{s|t}, \hat{\Phi}_{s|t})$. By Proposition \ref{pro:manifold_conditioned_return_distributions}, the transition dynamics and rewards are locally stationary when conditioned on the path-law proxy $\hat{\Phi}_{s|t}$. Consequently, the Bellman optimality equation for the value functional $V^*$ becomes:
\begin{equation}
    V^*(\mathbf{S}) = \max_{a \in \mathcal{A}} \left[ R(\mathbf{S}, a) + \gamma \int_{\mathcal{S}_{sig}} V^*(\mathbf{S}') P(d\mathbf{S}' \mid \mathbf{S}, a) \right].
\end{equation}
Since $R$, $P$, and the manifold geometry of $\mathcal{S}_{sig}$ are time-invariant, the solution $V^*$ is a stationary functional of $\mathbf{S}$.

\textit{2. Stationarity of the Meta-Policy:}
The optimal meta-policy $\pi^*$ is derived as the greedy action with respect to the stationary value functional $V^*$:
\begin{equation}
    \pi^*(\mathbf{S}) = \arg\max_{a \in \mathcal{A}} \left[ R(\mathbf{S}, a) + \gamma \mathbb{E}_{\mathbf{S}' \sim P(\cdot|\mathbf{S}, a)} [V^*(\mathbf{S}')] \right].
\end{equation}
Since all inputs to the $\arg\max$ operator are independent of the absolute time index $t$ and depend only on the coordinates in the signature manifold, $\pi^*(\mathbf{S})$ is a stationary mapping $\pi^*: \mathcal{S}_{sig} \to \mathcal{A}$.

\textit{3. Recovery of Non-Stationarity:}
The apparent non-stationarity observed in the physical space is recovered through the time-evolution of the proxy:
\begin{equation}
    a^*_t = \pi^*(\hat{X}_{t|t}, \hat{\Phi}_{t|t}).
\end{equation}
Even though the functional $\pi^*$ is fixed (stationary), the action $a^*_t$ changes over time because the "coordinate" $\hat{\Phi}_{t|t}$ flows along the signature manifold according to the historical filtration. Thus, the meta-policy is invariant, while the realised control law is adaptive.
\end{proof}

\subsection{Proof of the convergence to the SCF stationary point}
\label{app:proof_convergence_scf_stationary_point}

In this appendix we prove Theorem (\ref{thm:convergence_scf_stationary_point}) by integrating the Marcus-Score Matching objective into the SCF fixed-point framework.

\begin{proof}
The proof demonstrates that the joint minimisation of the Marcus-Score Matching loss and the generation discrepancy leads to a consistent path-law proxy on the signature manifold $\mathcal{S}_{sig}$.

\textit{1. Integration of Marcus-Score Matching:}
The training process minimises the Marcus-Score Matching objective $\mathcal{J}(\theta)$ from Definition (\ref{def:marcus_score_matching}). This ensures the generator $\mathcal{L}_\theta$ defines a flow $s \mapsto \hat{\Phi}_{s|t}$ whose tangent is the expected infinitesimal signature increment:
\begin{equation}
    \frac{\partial}{\partial s} \hat{\Phi}_{s|t} = \mathbb{E}_{\mu_{t:s}(\theta)} \left[ \lim_{\Delta s \to 0} \frac{S(\tilde{X})_{s, s+\Delta s} - \mathbf{1}}{\Delta s} \mid \mathcal{A}_s \right].
\end{equation}
By integrating this differential relation from $t$ to $s$ and applying the initial condition $\hat{\Phi}_{t|t} = \mathbf{1}$, the proxy $\hat{\Phi}_{s|t}$ is recovered as the integral of the expected local signature increments.

\textit{2. Empirical Convergence in RKHS:}
Let $\tilde{X}_{t:s}^{(i)}$ be $n$ i.i.d. sample paths generated from the ANJD law $\mu_{t:s}(\theta)$. By the Law of Large Numbers in the signature Hilbert space $\mathcal{H}_{sig}$ (Kiraly et al. \cite{KiralyEtAl19}), the empirical mean of the path signatures converges to the mean embedding (the expected signature):
\begin{equation}
    \frac{1}{n} \sum_{i=1}^n S(\tilde{X})_{t,s}^{(i)} \xrightarrow{n \to \infty} \mathbb{E}_{\mu_{t:s}(\theta)} [S(\tilde{X})_{t,s}].
\end{equation}
The AVNSG metric $\mathcal{Q}_s$ provides a consistent weighting for this convergence across different signature depths.

\textit{3. Fixed-Point Equilibrium via SCF Matching:}
The SCF objective $\mathcal{L}_{SCF}$ reaches its global minimum when the deterministic proxy $\hat{\Phi}_{s|t}$ coincides with the expected signature of the generated paths. Under the Marcus-Score Matching objective, the generator $\mathcal{L}_\theta$ is forced to produce a flow that tracks this expectation exactly. Thus, at the stationary point:
\begin{equation}
    \hat{\Phi}_{s|t} = \int_t^s \mathcal{L}_\theta(\hat{\Phi}_{u|t}) du = \mathbb{E}_{\mu_{t:s}(\theta)} [S(\tilde{X})_{t,s}].
\end{equation}

\textit{4. Final Result:}
Combining the empirical limit with the SCF equilibrium condition yields:
\begin{equation}
    \lim_{n \to \infty} \left\| \frac{1}{n} \sum_{i=1}^n S(\tilde{X})_{t,s}^{(i)} - \hat{\Phi}_{s|t} \right\|_{\mathcal{Q}_s} = 0.
\end{equation}
This confirms that the proxy $\hat{\Phi}_{s|t}$ is a "self-consistent" representation of the future path-law, capturing the non-commutative moments of the trajectories generated by the ANJD.
\end{proof}

\subsection{Proof of the nested anticipatory expectations}
\label{app:proof_nested_anticipatory_expectations}

In this appendix we prove Lemma (\ref{lem:nested_anticipatory_expectations}).  

\begin{proof}
The proof leverages the algebraic properties of the signature group and the consistency of expected signatures under the tower property of conditional expectations.

\textit{1. Chen's Identity:} 
For any path realisation $\tilde{X}$ on the interval $[t, T]$ and any intermediate point $s \in (t, T)$, the path signature factorises as the tensor product of its segment signatures:
\begin{equation}
    S(\tilde{X})_{t,T} = S(\tilde{X})_{t,s} \otimes S(\tilde{X})_{s,T}.
\end{equation}

\textit{2. Conditional Expectation and the Tower Property:} 
Taking the conditional expectation at time $t$ and applying the tower property relative to the intermediate filtration $\mathcal{A}_s$:
\begin{equation}
    \mathbb{E} [S(\tilde{X})_{t,T} \mid \mathcal{A}_t] = \mathbb{E} [ \mathbb{E} [ S(\tilde{X})_{t,s} \otimes S(\tilde{X})_{s,T} \mid \mathcal{A}_s ] \mid \mathcal{A}_t ].
\end{equation}
Since the segment $S(\tilde{X})_{t,s}$ is $\mathcal{A}_s$-measurable, it can be factored out of the inner expectation:
\begin{equation}
    \mathbb{E} [S(\tilde{X})_{t,T} \mid \mathcal{A}_t] = \mathbb{E} [ S(\tilde{X})_{t,s} \otimes \mathbb{E} [ S(\tilde{X})_{s,T} \mid \mathcal{A}_s ] \mid \mathcal{A}_t ].
\end{equation}

\textit{3. SCF Stationary Substitution:} 
By Theorem (\ref{thm:convergence_scf_stationary_point}), the path-law proxy $\hat{\Phi}$ at any time $u$ for a horizon $v$ is the expected signature $\mathbb{E}[S(\tilde{X})_{u,v} \mid \mathcal{A}_u]$. Substituting the proxies for the respective intervals:
\begin{equation}
    \hat{\Phi}_{T|t} = \mathbb{E} [ S(\tilde{X})_{t,s} \otimes \hat{\Phi}_{T|s} \mid \mathcal{A}_t ].
\end{equation}

\textit{4. Factorisation of Anticipated Laws:} 
Under the Markovian embedding on the signature manifold, the deterministic flow preserves the linearity of the expected signature. For the self-consistent field, the expectation of the product equates to the product of expectations:
\begin{equation}
    \hat{\Phi}_{T|t} = \mathbb{E} [ S(\tilde{X})_{t,s} \mid \mathcal{A}_t ] \otimes \mathbb{E} [ \hat{\Phi}_{T|s} \mid \mathcal{A}_t ] = \hat{\Phi}_{s|t} \otimes \mathbb{E} [ \hat{\Phi}_{T|s} \mid \mathcal{A}_t ].
\end{equation}

\textit{5. Algebraic Inversion:} 
Since $\hat{\Phi}_{s|t}$ belongs to the signature group, its tensor inverse $\hat{\Phi}_{s|t}^{-1}$ exists. Multiplying on the left by the inverse yields:
\begin{equation}
    \mathbb{E}_\pi [ \hat{\Phi}_{T|s} \mid \mathcal{A}_t ] = \hat{\Phi}_{s|t}^{-1} \otimes \hat{\Phi}_{T|t}.
\end{equation}
\end{proof}

\subsection{Proof of the deterministic proxy integration}
\label{app:proof_deterministic_proxy_integration}

In this appendix we prove Proposition (\ref{pro:deterministic_proxy_integration}).

\begin{proof}
The proof demonstrates how the signature manifold's algebraic structure reduces nested conditional expectations of path-dependent rewards to a deterministic tensor contraction.

\textit{1. Linear Functional Representation:}
By the universal property of signatures, any continuous path-dependent reward $G_{s:T}$ can be approximated by a linear functional in the signature Hilbert space $\mathcal{H}_{sig}$. Consequently, the factual value function at time $s$ is represented as the inner product of a weight vector $\mathbf{w}_G$ and the path-law proxy for the interval $[s, T]$:
\begin{equation}
    V(s, T; \hat{X}_{T|s}, \hat{\Phi}_{T|s}) = \langle \mathbf{w}_G, \hat{\Phi}_{T|s} \rangle_{\mathcal{H}_{sig}}.
\end{equation}

\textit{2. Linearity and Projection:}
The anticipatory projection at time $t$ is the expectation of this future factual value conditioned on the current filtration $\mathcal{A}_t$. Because the inner product is a continuous linear operator, it commutes with the conditional expectation operator:
\begin{equation}
    V(s, T; \hat{X}_{T|t}, \hat{\Phi}_{T|t}) = \mathbb{E}_{\pi} [ \langle \mathbf{w}_G, \hat{\Phi}_{T|s} \rangle \mid \mathcal{A}_t ] = \langle \mathbf{w}_G, \mathbb{E}_{\pi} [ \hat{\Phi}_{T|s} \mid \mathcal{A}_t ] \rangle_{\mathcal{H}_{sig}}.
\end{equation}

\textit{3. Application of Lemma \ref{lem:nested_anticipatory_expectations}:}
Using the result from the Lemma of Nested Anticipatory Expectations, the conditional expectation of the future proxy $\hat{\Phi}_{T|s}$ is given by the algebraic right-residual of the current flow. Substituting this into the inner product yields:
\begin{equation}
    V(s, T; \hat{X}_{T|t}, \hat{\Phi}_{T|t}) = \langle \mathbf{w}_G, \hat{\Phi}_{s|t}^{-1} \otimes \hat{\Phi}_{T|t} \rangle_{\mathcal{H}_{sig}}.
\end{equation}
The term $\hat{\Phi}_{s|t}^{-1} \otimes \hat{\Phi}_{T|t}$ effectively "unrolls" the anticipated path from $s$ to $T$ using the self-consistent field established at $t$.

\textit{4. Single-Pass Efficiency:}
This identity replaces the need for an ensemble of $N$ Monte Carlo paths starting at $s$. Since the weights $\mathbf{w}_G$ are learned offline and the proxies are maintained via the Marcus-CDE flow, the evaluation is $O(1)$ with respect to the number of paths, providing a variance-free analytical projection of future values.
\end{proof}

\subsection{Proof of the distributional risk evaluation}
\label{app:proof_distributional_risk_evaluation}

In this appendix we prove Theorem (\ref{thm:distributional_risk_evaluation}).

\begin{proof}
The proof demonstrates that risk evaluation is transformed from a stochastic sampling problem into a deterministic gradient-based optimisation by leveraging the differentiable flow of the path-law proxy.

\textit{1. Functional Mapping on the Manifold:}
By the characteristic property of the signature kernel, the expected signature $\hat{\Phi}_{T|t} = \mathbb{E}[S(\tilde{\mathbf{X}})_{t,T} \mid \mathcal{A}_t]$ uniquely determines the law $\eta^\pi$ of the path-dependent return. Since the risk functional $\rho$ is Fr\'echet differentiable with respect to the moments of the distribution, and the signature encodes these non-commutative moments, the risk-adjusted value is a smooth mapping $f$ of the proxy: $\rho(\eta^\pi) = f(\hat{\Phi}_{T|t})$.

\textit{2. Decomposition of the Gradient:}
The anticipated proxy $\hat{\Phi}_{T|t}$ is a deterministic extension of the filtered history $\Phi_{t|\mathcal{A}_t}$ through the Marcus-CDE flow. To optimise the risk with respect to $\theta$, we apply the chain rule across the temporal junction at $s=t$:
\begin{equation}
    \nabla_\theta \rho(\eta^\pi) = \frac{\partial f}{\partial \hat{\Phi}_{T|t}} \cdot \frac{\partial \hat{\Phi}_{T|t}}{\partial \Phi_{t|\mathcal{A}_t}} \cdot \nabla_\theta \Phi_{t|\mathcal{A}_t}.
\end{equation}

\textit{3. Flow Sensitivities:}
The term $\frac{\partial \hat{\Phi}_{T|t}}{\partial \Phi_{t|\mathcal{A}_t}}$ represents the sensitivity of the future anticipated law to the current historical context. Because the generator $\mathcal{L}_\theta$ is $C^1$ on the signature manifold, this Jacobian is well-defined and computed via adjoint sensitivity methods. This allows the risk gradient to propagate backward from the future horizon $T$, through the generative flow, and into the historical representation $\mathcal{A}_t$.

\textit{4. Variance-Free Risk Optimisation:}
Since $\hat{\Phi}_{T|t}$ is a self-consistent representation of the ensemble (Theorem \ref{thm:convergence_scf_stationary_point}), the gradient $\nabla_\theta \rho$ accounts for shifts in the entire tail-structure of the path-law. This enables the agent to perform risk-sensitive updates in a single backward pass, circumventing the high-variance estimators typically required to sample rare tail events in non-Markovian environments.
\end{proof}

\subsection{Proof of the algebraic shift and weight invariance}
\label{app:proof_algebraic_shift_weight_invariance}

In this appendix we prove Proposition (\ref{pro:algebraic_shift_weight_invariance}).

\begin{proof}
By the Signature-Linear Reward Approximation, the anticipated reward over the future sub-interval $[s, T]$ can be represented as a linear functional in the signature Hilbert space $\mathcal{H}_{sig}$. Specifically, for any $s \in [t, T]$, the value function is given by the inner product of a weight vector $\mathbf{w}_G$ and the expected signature over $[s, T]$:
\begin{equation}
    V(s, T; \hat{X}_{T|t}, \hat{\Phi}_{T|t}) = \langle \mathbf{w}_G, \mathbb{E}_{\mu_{s:T}} [ S(\mathbf{X})_{s,T} \mid \mathcal{A}_t ] \rangle_{\mathcal{H}_{sig}}.
\end{equation}
Under the assumption of the Self-Consistent Field (SCF) stationary point, the sequence of expected signatures is faithfully represented by the deterministic flow of the path-law proxy on the signature group $\mathcal{S}_{sig}$. Thus, we define the local anticipated proxy as $\hat{\Phi}_{T|s} := \mathbb{E}_{\mu_{s:T}} [ S(\mathbf{X})_{s,T} \mid \mathcal{A}_t ]$, yielding:
\begin{equation}
    V(s, T; \hat{X}_{T|t}, \hat{\Phi}_{T|t}) = \langle \mathbf{w}_G, \hat{\Phi}_{T|s} \rangle_{\mathcal{H}_{sig}}.
\end{equation}
To express $\hat{\Phi}_{T|s}$ purely in terms of the global sequence generated from the temporal fixed point $t$, we invoke Chen's Identity. For the deterministic proxy evolving on the group-like elements, the multiplicative property holds for $t \le s \le T$:
\begin{equation}
    \hat{\Phi}_{T|t} = \hat{\Phi}_{s|t} \otimes \hat{\Phi}_{T|s}.
\end{equation}
Since $\hat{\Phi}_{s|t}$ takes values in the signature group, it possesses a unique inverse $\hat{\Phi}_{s|t}^{-1}$. Applying the left-action of this inverse isolates the future trajectory's signature:
\begin{equation}
    \hat{\Phi}_{T|s} = \hat{\Phi}_{s|t}^{-1} \otimes \hat{\Phi}_{T|t}.
\end{equation}
Substituting this algebraic right-residual back into the value function formulation provides the final expression:
\begin{equation}
    V(s, T; \hat{X}_{T|t}, \hat{\Phi}_{T|t}) = \langle \mathbf{w}_G, \hat{\Phi}_{s|t}^{-1} \otimes \hat{\Phi}_{T|t} \rangle_{\mathcal{H}_{sig}}.
\end{equation}
Because the operation $\hat{\Phi}_{s|t}^{-1} \otimes$ effectively translates the starting point of the sub-path back to the identity element $\mathbf{1} \in \mathcal{S}_{sig}$, the geometric structure of the path segment is rigorously preserved relative to its new origin. Consequently, the projection onto the reward space depends solely on the structural evolution of the path, independent of its absolute temporal positioning. This guarantees that the weight vector $\mathbf{w}_G$ characterising the reward functional remains completely time-invariant across the rolling sequence $s \in [t, T]$.
\end{proof}

\subsection{Proof of the anticipatory TD(0) gradient descent}
\label{app:proof_anticipatory_td_gradient}

In this appendix we prove Proposition (\ref{pro:anticipatory_td_gradient}).

\begin{proof}
The proof derives the update rule by applying semi-gradient descent to a cumulative squared error objective defined over the anticipated trajectory on the signature manifold.

\textit{1. Objective Function Construction:}
To align the value weights $\mathbf{w}_G$ with the self-consistent return, we define a local loss function $\mathcal{J}(\mathbf{w}_G)$ at time $t$ as the sum of squared anticipatory TD-errors over the discrete planning horizon $s \in \{t, t+1, \dots, T-1\}$:
\begin{equation}
    \mathcal{J}(\mathbf{w}_G) = \frac{1}{2} \sum_{s=t}^{T-1} \left( \mathcal{Y}_s - V(s, T; \hat{X}_{T|t}, \hat{\Phi}_{T|t}) \right)^2,
\end{equation}
where $\mathcal{Y}_s$ is the anticipatory target for step $s$:
\begin{equation}
    \mathcal{Y}_s = R(\hat{X}_{s:s+1}) + \gamma V(s+1, T; \hat{X}_{T|t}, \hat{\Phi}_{T|t}).
\end{equation}

\textit{2. Semi-Gradient Approximation:}
Following the standard TD(0) derivation, we utilise a semi-gradient approach where the target $\mathcal{Y}_s$ is treated as independent of $\mathbf{w}_G$ during the differentiation step. The gradient of the objective with respect to the weights is:
\begin{equation}
    \nabla_{\mathbf{w}_G} \mathcal{J} = - \sum_{s=t}^{T-1} \left( \mathcal{Y}_s - V(s, T; \cdot) \right) \nabla_{\mathbf{w}_G} V(s, T; \cdot).
\end{equation}
Recognising the term in the parenthesis as the anticipatory TD-error $\delta_{s|t}^A$, we have:
\begin{equation}
    \nabla_{\mathbf{w}_G} \mathcal{J} = - \sum_{s=t}^{T-1} \delta_{s|t}^A \nabla_{\mathbf{w}_G} V(s, T; \hat{X}_{T|t}, \hat{\Phi}_{T|t}).
\end{equation}

\textit{3. Weight Update via Gradient Descent:}
Applying the gradient descent update $\mathbf{w}_G^{(n+1)} = \mathbf{w}_G^{(n)} - \alpha \nabla_{\mathbf{w}_G} \mathcal{J}$ yields the desired rule:
\begin{equation}
    \mathbf{w}_{G}^{(n+1)} = \mathbf{w}_{G}^{(n)} + \alpha \sum_{s=t}^{T-1} \delta_{s|t}^A \nabla_{\mathbf{w}_G} V(s, T; \hat{X}_{T|t}, \hat{\Phi}_{T|t}).
\end{equation}

\textit{4. Analytical Gradient Substitution:}
From Theorem (\ref{thm:analytical_sensitivity_generalised}), we know that $\nabla_{\mathbf{w}_G} V(s) = \hat{\Phi}_{T|s}$. Thus, the update can be written as a direct interaction between the scalar error and the vector-valued path proxy:
\begin{equation}
    \mathbf{w}_{G}^{(n+1)} = \mathbf{w}_{G}^{(n)} + \alpha \sum_{s=t}^{T-1} \delta_{s|t}^A \hat{\Phi}_{T|s}.
\end{equation}
The boundary condition $V(T, T; \cdot) = z$ ensures that the recursion is anchored to the terminal path-dependent payoff. Because the proxy flow is deterministic under the SCF, this update is numerically stable and converges to the unique fixed point of the anticipatory Bellman operator.
\end{proof}

\subsection{Proof of the generative honesty under SCF equilibrium}
\label{app:proof_generative_honesty}

In this appendix we prove Lemma (\ref{lem:generative_honesty}).  

\begin{proof}
Let $V^\pi(t, T)$ denote the true expected value of the policy $\pi$ under the stochastic environment dynamics, projected onto the signature RKHS. By the linearity of the inner product, the expectation operator commutes with the weight vector:
\begin{equation}
    V^\pi(t, T) = \mathbb{E}_\pi \left[ \langle \mathbf{w}_G, S(\mathbf{X})_{t,T} \rangle \mid \mathcal{A}_t \right] = \langle \mathbf{w}_G, \mathbb{E}_\pi [S(\mathbf{X})_{t,T} \mid \mathcal{A}_t] \rangle_{\mathcal{H}_{sig}}.
\end{equation}
The empirical signature average $\bar{S}_{t,T}$ of the stochastic generative ensemble serves as a consistent, unbiased estimator for the true expected signature $\mathbb{E}_\pi [S(\mathbf{X})_{t,T} \mid \mathcal{A}_t]$. 

\medskip
\noindent
The SCF constraint $\mathcal{L}_{SCF}(\theta) = \eta \| \bar{S}_{t,T} - \hat{\Phi}_{T|t} \|_{\mathcal{Q}_T}^2$ enforces a geometric projection of this stochastic empirical average onto the deterministic, manifold-valued proxy trajectory generated by the parameters $\theta$. At SCF equilibrium, $\mathcal{L}_{SCF}(\theta) \to 0$, which guarantees:
\begin{equation}
    \hat{\Phi}_{T|t} \equiv \mathbb{E}_\pi [S(\mathbf{X})_{t,T} \mid \mathcal{A}_t].
\end{equation}
Consequently, the anticipated value function computed via the proxy precisely recovers the true expected stochastic value:
\begin{equation}
    V(t, T; \hat{X}_{T|t}, \hat{\Phi}_{T|t}) = \langle \mathbf{w}_G, \hat{\Phi}_{T|t} \rangle_{\mathcal{H}_{sig}} = V^\pi(t, T).
\end{equation}
Because the Anticipatory TD-error $\delta_{s|t}^A$ is constructed purely from algebraic operations (tensor products and group inversions via Chen's Identity) on sub-intervals of $\hat{\Phi}_{T|t}$, it follows that $\delta_{s|t}^A$ evaluated on the deterministic proxy is exactly the expected TD-error of the stochastic process. 

\medskip
\noindent
Therefore, the loss functional $\mathcal{J}(\mathbf{w}_G) = \sum_{s=t}^{T-1} \frac{1}{2} (\delta_{s|t}^A)^2$ depends solely on the deterministic variable $\hat{\Phi}_{T|t}$, decoupling the optimisation of $\mathbf{w}_G$ from the environmental noise. All stochastic variance is isolated within the calculation of $\bar{S}_{t,T}$ during the SCF synchronisation step, proving that the deterministic manifold optimisation is a mathematically consistent surrogate for true stochastic policy evaluation.
\end{proof}

\subsection{Proof of the global convergence on the signature manifold}
\label{app:proof_global_convergence}

In this appendix we prove Theorem (\ref{thm:global_convergence}).

\begin{proof}
Let $\Psi_{s,T} := \hat{\Phi}_{s|t}^{-1} \otimes \hat{\Phi}_{T|t} \in \mathcal{H}_{sig}$ denote the local feature vector representing the right-residual of the proxy flow from time $s$ to $T$. By the linearity of the inner product, the gradient of the value function with respect to the weights is precisely this feature vector: $\nabla_{\mathbf{w}_G} \langle \mathbf{w}_G, \Psi_{s,T} \rangle = \Psi_{s,T}$.

\medskip
\noindent
The generalised anticipatory TD-error can be explicitly expanded as:
\begin{equation}
    \delta_{s|t}^A = R(\hat{X}_{s:s+1}) - \langle \mathbf{w}_G^{(n)}, \Psi_{s,T} - \gamma \Psi_{s+1,T} \rangle_{\mathcal{H}_{sig}}.
\end{equation}
Substituting this into the update rule yields a deterministic, linear recursion on the signature Hilbert space:
\begin{equation}
    \mathbf{w}_G^{(n+1)} = \mathbf{w}_G^{(n)} + \alpha \left( \mathbf{b} - \mathbf{A} \mathbf{w}_G^{(n)} \right),
\end{equation}
where the anticipated drift vector $\mathbf{b}$ and the system matrix $\mathbf{A}$ are defined as:
\begin{equation}
    \mathbf{b} = \sum_{s=t}^{T-1} R(\hat{X}_{s:s+1}) \Psi_{s,T}, \quad \mathbf{A} = \sum_{s=t}^{T-1} \Psi_{s,T} \otimes \left( \Psi_{s,T} - \gamma \Psi_{s+1,T} \right).
\end{equation}

\medskip
\noindent
Under the SCF equilibrium (Lemma \ref{lem:generative_honesty}), the trajectory $s \mapsto \hat{\Phi}_{s|t}$ is a geometrically valid, deterministic group-valued flow. The elements $\Psi_{s,T}$ represent the expected signatures of distinct, non-trivial future path segments. By the universal property of the signature algebra, these structural elements are linearly independent in $\mathcal{H}_{sig}$. Consequently, the matrix $\mathbf{A}$, which represents the anticipated temporal difference projection over the manifold, is positive definite.

\medskip
\noindent
From standard linear approximation theory in reinforcement learning, the positive definiteness of $\mathbf{A}$ guarantees that for an appropriately annealed learning rate $\alpha > 0$, the parameter sequence $\mathbf{w}_G^{(n)}$ converges globally to the unique fixed point:
\begin{equation}
    \mathbf{w}_G^* = \mathbf{A}^{-1} \mathbf{b}.
\end{equation}

\medskip
\noindent
Crucially, because Chen's Identity ensures that the right-residual mathematically re-centers each sub-path at the algebraic identity $\mathbf{1} \in \mathcal{S}_{sig}$ regardless of its absolute starting time $s$, the mapping from path geometry to expected reward is strictly invariant to temporal shifts. Therefore, the singular converged weight vector $\mathbf{w}_G^*$ simultaneously minimises the projected Bellman error for all evaluation points $s \in [t, T-1]$. This analytical structure bypasses the need to independently estimate and store a sequence of $T-t$ models, collapsing the evaluation of the rolling predictive horizon into a single $O(1)$ algebraic sweep.
\end{proof}

\subsection{Proof of the variance reduction}
\label{app:proof_variance_reduction}

In this appendix we prove Proposition (\ref{pro:variance_reduction}).

\begin{proof}
The proof demonstrates that the anticipatory TD-error $\delta_t^A$ effectively centers the stochastic update around its conditional mean on the signature manifold, acting as an optimal control variate.

\textit{1. Decomposition of the Update:}
The standard TD-error relies on a single stochastic realisation $X_{t+1}$ and its associated historical record $\Phi_{t+1|\mathcal{A}_{t+1}}$. Its variance $\mathbb{V}[\delta_t \mid \mathcal{A}_t]$ is driven by the volatility and jump measure of the transition $t \to t+1$. In contrast, the anticipatory error $\delta_t^A$ replaces the stochastic target with a deterministic projection:
\begin{equation}
    \delta_t^A = R(\hat{X}_{t:t+1}) + \gamma V(t+1, T; \hat{X}_{T|t}, \hat{\Phi}_{T|t}) - V(t, T; \hat{X}_{T|t}, \hat{\Phi}_{T|t}).
\end{equation}

\textit{2. Proxy as the Conditional Mean:}
By Proposition (\ref{pro:linearity_anticipated_value}) and Lemma (\ref{lem:nested_anticipatory_expectations}), the baseline $V(t, T; \cdot)$ is exactly the expected return over $[t, T]$ given $\mathcal{A}_t$. Similarly, the term $\gamma V(t+1, T; \cdot)$ represents the expectation of the future value function. Since the expectation $\mathbb{E}[ \cdot \mid \mathcal{A}_t]$ is the $L^2$-orthogonal projection onto the space of $\mathcal{A}_t$-measurable functions, the proxy $\hat{\Phi}_{T|t}$ provides the minimum-variance predictor for the path-dependent return.

\textit{3. Analysis of Variance:}
Let $Y = R_{t:t+1} + \gamma V_{t+1}$ be the stochastic target. The standard error is $\delta_t = Y - V_t$. The anticipatory error is $\delta_t^A = \mathbb{E}[Y \mid \mathcal{A}_t] - V(t, T; \cdot)$. Because the anticipatory value $V(t, T; \cdot)$ is derived from the self-consistent proxy flow, it satisfies:
\begin{equation}
    \mathbb{V}[Y \mid \mathcal{A}_t] = \mathbb{E}[(Y - \mathbb{E}[Y \mid \mathcal{A}_t])^2 \mid \mathcal{A}_t].
\end{equation}
The anticipatory update utilises the expectation $\mathbb{E}[Y \mid \mathcal{A}_t]$ directly. By substituting the high-variance stochastic realisation $Y$ with its deterministic mean embedding in $\mathcal{H}_{sig}$, the update $\delta_t^A$ filters out the idiosyncratic noise components that do not contribute to the structural trend of the path-law.

\textit{4. Geometric Smoothing:}
Unlike standard smoothing, which introduces bias by averaging across time, the ARL update averages across the anticipated ensemble while remaining local to the junction $t$. The signature coordinates preserve the non-commutative structural information, ensuring that while the variance is reduced, the sensitivity to non-Markovian structural breaks is maintained. This results in a superior signal-to-noise ratio for the policy gradient.
\end{proof}

\subsection{Proof of the analytical sensitivity and manifold gradients}
\label{app:proof_analytical_sensitivity_generalised}

In this appendix we prove Theorem (\ref{thm:analytical_sensitivity_generalised}).

\begin{proof}
The proof demonstrates how the signature-linear representation of the value function facilitates the exact computation of gradients by utilising the algebraic structure of the signature manifold under the Chen-inverse property.

\textit{1. Weight Sensitivity:}
The Anticipatory Value Function (AVF) at an intermediate time $s \in [t, T]$ is defined by the expected future path-dependent reward conditioned on the filtration $\mathcal{A}_t$. By Proposition (\ref{pro:deterministic_proxy_integration}), this is represented as:
\begin{equation}
    V(s, T; \hat{X}_{T|t}, \hat{\Phi}_{T|t}) = \langle \mathbf{w}_G, \hat{\Phi}_{s|t}^{-1} \otimes \hat{\Phi}_{T|t} \rangle_{\mathcal{H}_{sig}}.
\end{equation}
Since the inner product is linear with respect to its first argument, the gradient with respect to the trainable weight vector $\mathbf{w}_G$ is the algebraic right-residual of the proxy flow:
\begin{equation}
    \nabla_{\mathbf{w}_G} V(s) = \hat{\Phi}_{s|t}^{-1} \otimes \hat{\Phi}_{T|t}.
\end{equation}
This ensures that the update direction for the weights is grounded in the anticipated path-law relative to the current junction.

\textit{2. Manifold Sensitivity (Fr\'echet Derivative):}
To find the sensitivity with respect to the total anticipated proxy $\hat{\Phi}_{T|t}$, we utilise the property of the tensor product $\langle a, b \otimes c \rangle = \langle b^\top a, c \rangle$. The Fr\'echet derivative with respect to $\hat{\Phi}_{T|t}$ yields:
\begin{equation}
    \nabla_{\hat{\Phi}_{T|t}} V(s) = (\hat{\Phi}_{s|t}^{-1})^\top \mathbf{w}_G.
\end{equation}
This indicates that the sensitivity of the return to the terminal proxy is the weight vector transformed by the left-action of the inverse proxy at time $s$, reflecting the non-commutative geometry of the path-space.

\textit{3. Generative Sensitivity and Adjoint Propagation:}
The proxy $\hat{\Phi}_{T|t}$ is the result of the Marcus-CDE flow parameterised by $\theta$. Applying the chain rule:
\begin{equation}
    \nabla_\theta V(s) = \left( \nabla_{\hat{\Phi}_{T|t}} V(s) \right)^\top \frac{\partial \hat{\Phi}_{T|t}}{\partial \theta} = \left[ (\hat{\Phi}_{s|t}^{-1})^\top \mathbf{w}_G \right]^\top \cdot \frac{\partial \hat{\Phi}_{T|t}}{\partial \theta}.
\end{equation}
The Jacobian $\partial \hat{\Phi}_{T|t} / \partial \theta$ is computed via the adjoint sensitivity method. Let $\lambda_\tau$ be the adjoint state evolving backwards from $\tau=T$ to $\tau=t$ with terminal condition $\lambda_T = (\hat{\Phi}_{s|t}^{-1})^\top \mathbf{w}_G$. The generative gradient is then:
\begin{equation}
    \nabla_\theta V(s) = \int_t^T \lambda_\tau^\top \frac{\partial \mathcal{L}_\theta}{\partial \theta} d\tau.
\end{equation}
This allows the agent to calculate how structural shifts in the generative model $\theta$ affect projected valuations at any future point $s$, enabling differentiable planning.
\end{proof}

\subsection{Proof of the distributional Bellman contraction}
\label{app:proof_distributional_bellman_contraction}

In this appendix we prove Theorem (\ref{thm:distributional_bellman_contraction}).  

\begin{proof}
The proof establishes that the anticipatory Bellman operator $\mathcal{T}^\pi$ remains a contraction when the state space is lifted to the signature-augmented manifold $\mathcal{S}_{sig}$, leveraging the stability of the deterministic flow $s \mapsto \hat{\Phi}_{s|t}$.

\textit{1. Anticipatory Operator Definition:}
The operator $\mathcal{T}^\pi$ acts on a return distribution law $\eta$ conditioned on the augmented state $\mathbf{S}_t = (t, T; X_t, \Phi_{t|\mathcal{A}_t})$. For a future horizon $T$, the recursive relation is defined as:
\begin{equation}
    \mathcal{T}^\pi \eta(\mathbf{S}_t) \stackrel{D}{=} R(\hat{X}_{t:t+1}) + \gamma \eta(\mathbf{S}_{t+1}),
\end{equation}
where $\mathbf{S}_{t+1}$ contains the updated proxy $\hat{\Phi}_{T|t+1}$. Under the SCF stationary point, the transition $\hat{\Phi}_{T|t} \to \hat{\Phi}_{T|t+1}$ is governed by the right-residual algebraic relation $\hat{\Phi}_{t+1|t}^{-1} \otimes \hat{\Phi}_{T|t}$ (Lemma \ref{lem:nested_anticipatory_expectations}), which is a deterministic mapping on the signature group.

\textit{2. Metric Space Completeness:}
The space of signature-conditioned probability measures is equipped with the $p$-Wasserstein distance $w_p$, where the ground metric is the AVNSG metric $\mathcal{Q}_s$ on $\mathcal{H}_{sig}$. Since the signature embedding is injective and the whitening transformation $\mathcal{Q}_s$ is a bounded linear isomorphism, the resulting metric space $(\mathcal{P}(\mathcal{S}_{sig}), d_{\mathcal{Q}})$ is complete.

\textit{3. Stability of the Proxy Flow:}
The Marcus-Score Matching objective ensures that the generator $\mathcal{L}_\theta$ produces a Lipschitz-continuous flow on $\mathcal{S}_{sig}$. Because the transition between anticipated laws is an algebraic group operation (tensor multiplication by the increment), the mapping $\eta(\mathbf{S}_t) \to \eta(\mathbf{S}_{t+1})$ is non-expansive in the signature coordinates.

\textit{4. $\gamma$-Contraction Mapping:}
Let $\eta_1, \eta_2$ be two return distributions. By the translation invariance and scaling properties of the Wasserstein distance:
\begin{equation}
    d_{\mathcal{Q}}(\mathcal{T}^\pi \eta_1, \mathcal{T}^\pi \eta_2) = w_p(R + \gamma \eta_1(\mathbf{S}_{t+1}), R + \gamma \eta_2(\mathbf{S}_{t+1})).
\end{equation}
Since the reward $R(\hat{X}_{t:t+1})$ is a common deterministic offset defined by the current proxy, it cancels out. By the scaling property:
\begin{equation}
    d_{\mathcal{Q}}(\mathcal{T}^\pi \eta_1, \mathcal{T}^\pi \eta_2) = \gamma w_p(\eta_1, \eta_2) = \gamma d_{\mathcal{Q}}(\eta_1, \eta_2).
\end{equation}
Given $0 \le \gamma < 1$, the operator $\mathcal{T}^\pi$ is a $\gamma$-contraction. By the Banach Fixed Point Theorem, the iterative application of $\mathcal{T}^\pi$ converges to a unique fixed point $\eta^\pi$, representing the self-consistent return distribution on the signature manifold.
\end{proof}

\subsection{Proof of the fixed point uniqueness}
\label{app:proof_fixed_point_uniqueness}

In this appendix we prove Lemma (\ref{lem:fixed_point_uniqueness}).  

\begin{proof}
The proof establishes the existence and uniqueness of the return distribution by verifying the conditions of the Banach Fixed Point Theorem within the lifted signature space.

\textit{1. Construction of the Complete Metric Space:}
Let $\mathcal{P}(\mathcal{S}_{sig})$ be the space of probability measures supported on the signature-augmented manifold $\mathcal{S}_{sig} = \mathbb{R}^d \times \mathcal{H}_{sig}$. We equip this space with the $p$-Wasserstein metric $d_{\mathcal{Q}}$ induced by the AVNSG (whitened) geometry. Since $\mathcal{H}_{sig}$ is a separable Hilbert space (or a finite-dimensional Nystr\"om approximation) and $\mathbb{R}^d$ is complete, $\mathcal{S}_{sig}$ is a Polish space. Consequently, $(\mathcal{P}(\mathcal{S}_{sig}), d_{\mathcal{Q}})$ is a complete metric space.

\textit{2. Operator Contraction:}
From Theorem \ref{thm:distributional_bellman_contraction}, the distributional Bellman operator $\mathcal{T}^\pi$ is a $\gamma$-contraction under $d_{\mathcal{Q}}$, where $\gamma \in [0, 1)$ is the discount factor. This contraction is guaranteed because the signature-augmented state $\mathbf{S}_t$ provides a sufficient statistic that Markovianises the reward flow, ensuring the transition kernel $P(\mathbf{S}'|\mathbf{S}, a)$ is time-invariant and non-expansive on the manifold.

\textit{3. Anchoring via Boundary Consistency:}
To ensure the fixed point is unique and physically grounded, the operator must be "anchored" at the junction $s=t$. By Theorem \ref{thm:boundary_consistency}, the $C^1$-boundary condition enforces:
\begin{equation}
    V(t, X_t, \Phi_{t|\mathcal{A}_t}) = \lim_{s \to t^+} V(s, \hat{X}_{s|t}, \hat{\Phi}_{s|t}).
\end{equation}
This condition ensures that the initial value of the distribution $\eta^\pi$ at the start of the anticipatory horizon is uniquely determined by the historical filtration $\mathcal{A}_t$. Because the Marcus-integral Neural CDE is Lipschitz continuous, the generative flow of the proxy $\hat{\Phi}_{s|t}$ is unique for any given starting point.

\textit{4. Application of Banach Fixed Point Theorem:}
Since $\mathcal{T}^\pi$ is a contraction mapping on a complete metric space, it follows from the Banach Fixed Point Theorem that:
\begin{itemize}
    \item There exists a unique distribution $\eta^\pi \in \mathcal{P}(\mathcal{S}_{sig})$ such that $\mathcal{T}^\pi \eta^\pi = \eta^\pi$.
    \item For any initial distribution $\eta_0$, the sequence $\eta_{k+1} = \mathcal{T}^\pi \eta_k$ converges to $\eta^\pi$ at a geometric rate.
\end{itemize}
This fixed point represents the stationary return distribution in the augmented space. While the environment is non-stationary in $\mathbb{R}^d$, it is stationary on the manifold $\mathcal{H}_{sig}$, and the $C^1$-junction ensures this stationary solution is perfectly aligned with the real-time observations of the agent.
\end{proof}

\subsection{Proof of the value function generalisation}
\label{app:proof_value_function_generalisation}

In this appendix we prove Theorem (\ref{thm:value_function_generalisation}).  

\begin{proof}
The proof employs the standard derivation for the Rademacher complexity of linear functional classes in Hilbert spaces, specifically adapted to the whitened signature manifold.

\textit{1. Definition of Rademacher Complexity:}
For the class of functions $\mathcal{V}$, the empirical Rademacher complexity on a sample of $n$ anticipated proxies $\{\hat{\Phi}_{s|t}^{(1)}, \dots, \hat{\Phi}_{s|t}^{(n)}\}$ is defined as:
\begin{equation}
    \mathcal{R}_n(\mathcal{V}) = \mathbb{E}_{\sigma} \left[ \sup_{\|\mathbf{w}_V\| \le B} \frac{1}{n} \sum_{i=1}^n \sigma_i \langle \mathbf{w}_V, \hat{\Phi}_{s|t}^{(i)} \rangle_{\mathcal{H}_{sig}} \right],
\end{equation}
where $\sigma_i$ are independent Rademacher random variables taking values in $\{-1, 1\}$ with equal probability.

\textit{2. Application of Cauchy-Schwarz:}
By the linearity of the inner product, we can move the summation inside:
\begin{equation}
    \mathcal{R}_n(\mathcal{V}) = \frac{1}{n} \mathbb{E}_{\sigma} \left[ \sup_{\|\mathbf{w}_V\| \le B} \left\langle \mathbf{w}_V, \sum_{i=1}^n \sigma_i \hat{\Phi}_{s|t}^{(i)} \right\rangle_{\mathcal{H}_{sig}} \right].
\end{equation}
By the Cauchy-Schwarz inequality, the supremum is attained when $\mathbf{w}_V$ is aligned with the vector sum, yielding:
\begin{equation}
    \mathcal{R}_n(\mathcal{V}) \le \frac{B}{n} \mathbb{E}_{\sigma} \left\| \sum_{i=1}^n \sigma_i \hat{\Phi}_{s|t}^{(i)} \right\|_{\mathcal{H}_{sig}}.
\end{equation}

\textit{3. Jensen's Inequality and Orthogonality:}
Applying Jensen's inequality ($\mathbb{E}[X] \le \sqrt{\mathbb{E}[X^2]}$) to the expectation over $\sigma$:
\begin{equation}
    \mathcal{R}_n(\mathcal{V}) \le \frac{B}{n} \sqrt{ \mathbb{E}_{\sigma} \left\| \sum_{i=1}^n \sigma_i \hat{\Phi}_{s|t}^{(i)} \right\|^2 } = \frac{B}{n} \sqrt{ \mathbb{E}_{\sigma} \sum_{i,j} \sigma_i \sigma_j \langle \hat{\Phi}_{s|t}^{(i)}, \hat{\Phi}_{s|t}^{(j)} \rangle }.
\end{equation}
Since $\mathbb{E}[\sigma_i \sigma_j] = 1$ if $i=j$ and $0$ otherwise, the cross-terms vanish:
\begin{equation}
    \mathcal{R}_n(\mathcal{V}) \le \frac{B}{n} \sqrt{\sum_{i=1}^n \|\hat{\Phi}_{s|t}^{(i)}\|^2}.
\end{equation}

\textit{4. Stability via AVNSG Whitening:}
In the signature space, high-frequency jumps and heavy tails often lead to explosive growth of signature norms. The AVNSG precision operator $\mathcal{Q}_s$ acts as a Mahalanobis-like whitening transformation: $\mathcal{Q}_s = (\Sigma_s + \lambda I)^{-1/2}$, where $\Sigma_s$ is the local covariance of the path-signatures. This transformation rescales the coordinates such that the expected squared norm is normalised. Specifically, it maps the "rough" components of the signature back into a bounded ball in the RKHS. Consequently, even during "black-swan" events (discrete jumps with large magnitude), the whitened norm $\|\hat{\Phi}_{s|t}\|_{\mathcal{Q}_s}$ remains controlled, preventing the Rademacher complexity, and thus the generalisation error, from diverging. This ensures that the policy remains robust and does not overfit to extreme historical realisations.
\end{proof}

\subsection{Proof of the stability under forecast decay}
\label{app:proof_stability_under_forecast_decay}

In this appendix we prove Proposition (\ref{pro:stability_under_forecast_decay}).

\begin{proof}
The proof utilises Lyapunov stability theory applied to the flow of the Neural Controlled Differential Equation (CDE) in the signature Hilbert space $\mathcal{H}_{sig}$.

\textit{1. Error Dynamics on the Manifold:}
Let $\Phi_s$ be the true path-law proxy and $\hat{\Phi}_{s|t}$ be the anticipated proxy generated by the flow $F_\theta$. We define the tracking error as $\mathbf{e}_s = \hat{\Phi}_{s|t} - \Phi_s$. The evolution of this error in the tangent space is governed by the difference between the generative drift and the environmental innovation:
\begin{equation}
    \frac{d\mathbf{e}_s}{ds} = F_\theta(\hat{\Phi}_{s|t}) - \mathcal{T}(\Phi_s),
\end{equation}
where $\mathcal{T}$ represents the infinitesimal generator of the true stochastic process.

\textit{2. Dissipation vs. Divergence:}
Let $\lambda_F$ be the spectral dissipation rate of the Neural CDE (the maximum eigenvalue of the symmetric part of the Jacobian $\nabla_{\hat{\Phi}} F_\theta$, which is negative for dissipative systems). Let $\lambda_L$ be the maximal Lyapunov exponent of the underlying jump-diffusion process, representing the rate at which nearby trajectories diverge due to stochastic volatility. The evolution of the squared error norm $\|\mathbf{e}_s\|^2$ satisfies:
\begin{equation}
    \frac{1}{2} \frac{d}{ds} \|\mathbf{e}_s\|^2 = \langle \mathbf{e}_s, F_\theta(\hat{\Phi}_{s|t}) - F_\theta(\Phi_s) \rangle + \langle \mathbf{e}_s, F_\theta(\Phi_s) - \mathcal{T}(\Phi_s) \rangle.
\end{equation}
By the mean value theorem and the definition of $\lambda_F$, the first term is bounded by $\lambda_F \|\mathbf{e}_s\|^2$. The second term represents the model misspecification, bounded by the intrinsic divergence $\lambda_L \|\mathbf{e}_s\|^2$.

\textit{3. Bound on Accumulated Uncertainty:}
Combining these terms yields the differential inequality:
\begin{equation}
    \frac{d}{ds} \|\mathbf{e}_s\|^2 \le 2(\lambda_F + \lambda_L) \|\mathbf{e}_s\|^2.
\end{equation}
Under the hypothesis that the dissipation rate exceeds the Lyapunov exponent ($|\lambda_F| > \lambda_L$), the coefficient $\beta = \lambda_F + \lambda_L$ is strictly negative. Integrating from the junction $t$ to the forecast horizon $s$ gives:
\begin{equation}
    \|\mathbf{e}_s\| \le \|\mathbf{e}_t\| e^{\beta(s-t)}.
\end{equation}
By the $C^1$-boundary consistency (Theorem \ref{thm:boundary_consistency}), $\|\mathbf{e}_t\| \to 0$, ensuring the error remains bounded by a small $\delta$ over the entire horizon $H$.

\textit{4. Stability of the Policy Gradient:}
The meta-policy $\pi$ is $L_\pi$-Lipschitz with respect to the signature state. Therefore, the perturbation in the action is bounded:
\begin{equation}
    \|\pi(\hat{\Phi}_{s|t}) - \pi(\Phi_s)\| \le L_\pi \|\mathbf{e}_s\| \le L_\pi \delta.
\end{equation}
This confirms that as long as the Neural CDE is sufficiently "contractive" to outpace the environment's chaos, the anticipatory decisions remain stable and do not succumb to the singularity of forecast decay.
\end{proof}

\newpage



\end{document}